\def\eqref#1{equation~\ref{#1}}
\def\1{\bm{1}}
\DeclareMathAlphabet{\mathsfit}{\encodingdefault}{\sfdefault}{m}{sl}
\SetMathAlphabet{\mathsfit}{bold}{\encodingdefault}{\sfdefault}{bx}{n}
\definecolor{deepred}{HTML}{940000}
\definecolor{grey}{rgb}{0.5,0.5,0.5}
\newcommand{\etal}{\textit{et al}.}
\newcommand{\tlp}[1]{\multicolumn{1}{l}{#1}} 
\newcommand{\trp}[1]{\multicolumn{1}{r}{#1}} 
\newtheorem{theo}{Theorem}
\newtheorem{theoAppendix}{Theorem}
\renewcommand \thepart{}
\renewcommand \partname{}
\definecolor{codegreen}{rgb}{0,0.6,0}
\definecolor{codegray}{rgb}{0.5,0.5,0.5}
\definecolor{codepurple}{rgb}{0.58,0,0.82}
\definecolor{backcolour}{rgb}{0.95,0.95,0.92}
\definecolor{dkgreen}{rgb}{0,0.6,0}
\definecolor{gray}{rgb}{0.5,0.5,0.5}
\definecolor{mauve}{rgb}{0.58,0,0.82}
\definecolor{red}{rgb}{1.0,0,0}
\tiny\color{gray},
\title{Domain-Inspired Sharpness-Aware Minimization Under Domain Shifts}
\author{Ruipeng Zhang\textsuperscript{\dag,\ddag},  Ziqing Fan\textsuperscript{\dag,\ddag},  Jiangchao Yao\textsuperscript{\dag,\ddag,\Letter},  Ya Zhang\textsuperscript{\dag,\ddag},  Yanfeng Wang\textsuperscript{\dag,\ddag,\Letter} \\
  \textsuperscript{\dag} Cooperative Medianet Innovation Center, Shanghai Jiao Tong University \\
  \textsuperscript{\ddag} Shanghai Artificial Intelligence Laboratory \\
  \{zhangrp, zqfan\_knight, Sunarker, ya\_zhang, wangyanfeng\}@sjtu.edu.cn
}
\begin{document}

\doparttoc 
\faketableofcontents

\maketitle

\begin{abstract}
This paper presents a Domain-Inspired Sharpness-Aware Minimization (DISAM) algorithm for optimization under domain shifts. It is motivated by the inconsistent convergence degree of SAM across different domains, which induces optimization bias towards certain domains and thus impairs the overall convergence. To address this issue, we consider the domain-level convergence consistency in the sharpness estimation to prevent the overwhelming (deficient) perturbations for less (well) optimized domains. Specifically, DISAM introduces the constraint of minimizing variance in the domain loss, which allows the elastic gradient calibration in perturbation generation: when one domain is optimized above the averaging level \textit{w.r.t.} loss, the gradient perturbation towards that domain will be weakened automatically, and vice versa. Under this mechanism, we theoretically show that DISAM can achieve faster overall convergence and improved generalization in principle when inconsistent convergence emerges. Extensive experiments on various domain generalization benchmarks show the superiority of DISAM over a range of state-of-the-art methods. Furthermore, we show the superior efficiency of DISAM in parameter-efficient fine-tuning combined with the pretraining models. The source code is released at \url{https://github.com/MediaBrain-SJTU/DISAM}.

\end{abstract}

\section{Introduction}

Although deep learning has achieved remarkable advances in various areas~\citep{he2016deep_resnet, dosovitskiy2020image_vit}, it remains a challenge for optimization in pursuit of strong generalization. Especially, a lower training loss does not necessarily guarantee a better generalization, as there exist numerous local minima in the complex and non-convex hypothesis space. 
Recent empirical and theoretical investigations~\citep{DR17_sharp, chaudhari2019entropy_sharp,Jiang2020Fantastic_sharpness_measure, jiang2023an_sharpness_measure, sharp_minima_icml_2017, keskar2017on_sharpness_measure_hessian} have identified a significant correlation between generalization and the sharpness of the loss landscape. This correlation suggests that generalizability can be interpreted as flatness in the loss surface, leading to a wide range of explorations that have contributed to the rapid development of Sharpness-Aware Minimization (SAM)~\citep{sam_first_paper}.

Existing SAM-based methods predominantly focus on the narrowly defined generalizability between training and test data under the Independent and Identically Distributed (i.i.d) assumption, which can be summarized as two categories.
The first strives to improve the performance by creating a more effective estimation of sharpness like the enhanced minimization in GSAM~\citep{GSAM}, PGN~\citep{PGN}, SAGM~\citep{SAGM} and VaSSO~\citep{vasso}, as vanilla perturbation in SAM fails to accurately capture the geometric flatness of the loss landscape. 
The other category targets to improve computational efficiency by reducing perturbation directions~\citep{looksam} or using a more efficient perturbation surrogate~\citep{ESAM,SAF}, as the original SAM incurs double the computational overhead compared to Empirical Risk Minimization (ERM). 
Nonetheless, these methods cannot solve generalizability scenarios that involve training data of multiple domains with domain shifts like \emph{Domain Generalization (DG)}~\citep{ben2010_dg_theory,li2017pacs}.

In this study, we observed that sometimes SAM even has a detrimental impact in situations where there exist domain shifts across multiple domains as shown in Figure~\ref{fig:problem_show}(\subref{fig:intro1}). While a few studies have incorporated SAM-based methods in domain generalization tasks~\citep{SAGM,sam_first_paper}, they cannot ensure consistent improvements in generalizability during domain shifts due to their reliance on the i.i.d assumption.
Upon a thorough analysis of the behavior of SAM under domain shifts, we discovered that the degradation of the training process caused by SAM from the disparity in convergence degree among different domains as shown in Figure~\ref{fig:problem_show}(\subref{fig:intro1}). 
Given the inconsistency in the degree and direction of convergence among different domains during training~\citep{arjovsky2019IRM, krueger2021out_rex}, the straightforward application of SAM for perturbations may not only disrupt convergence but also generate perturbation directions that are not adequately coherent to the geometric characteristics of the entire loss landscape.

\begin{figure}
    \centering
    \begin{subfigure}[b]{0.447\textwidth}
    \centering
    \includegraphics[width=\textwidth]{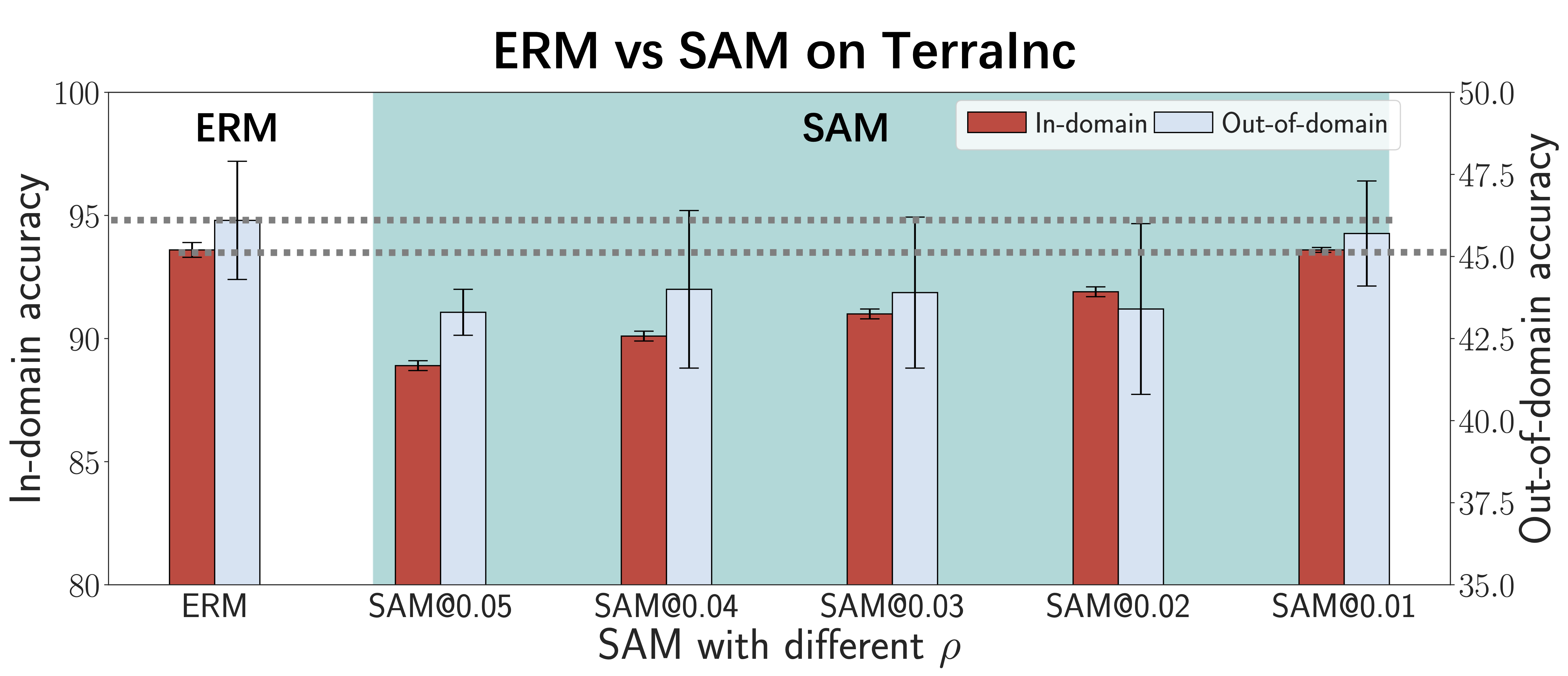}
    \caption{Performance under domain shifts.}
    \label{fig:intro1}
    \end{subfigure}
    \hspace{20pt}
    \begin{subfigure}[b]{0.4\textwidth}
    \centering
    \includegraphics[width=\textwidth]{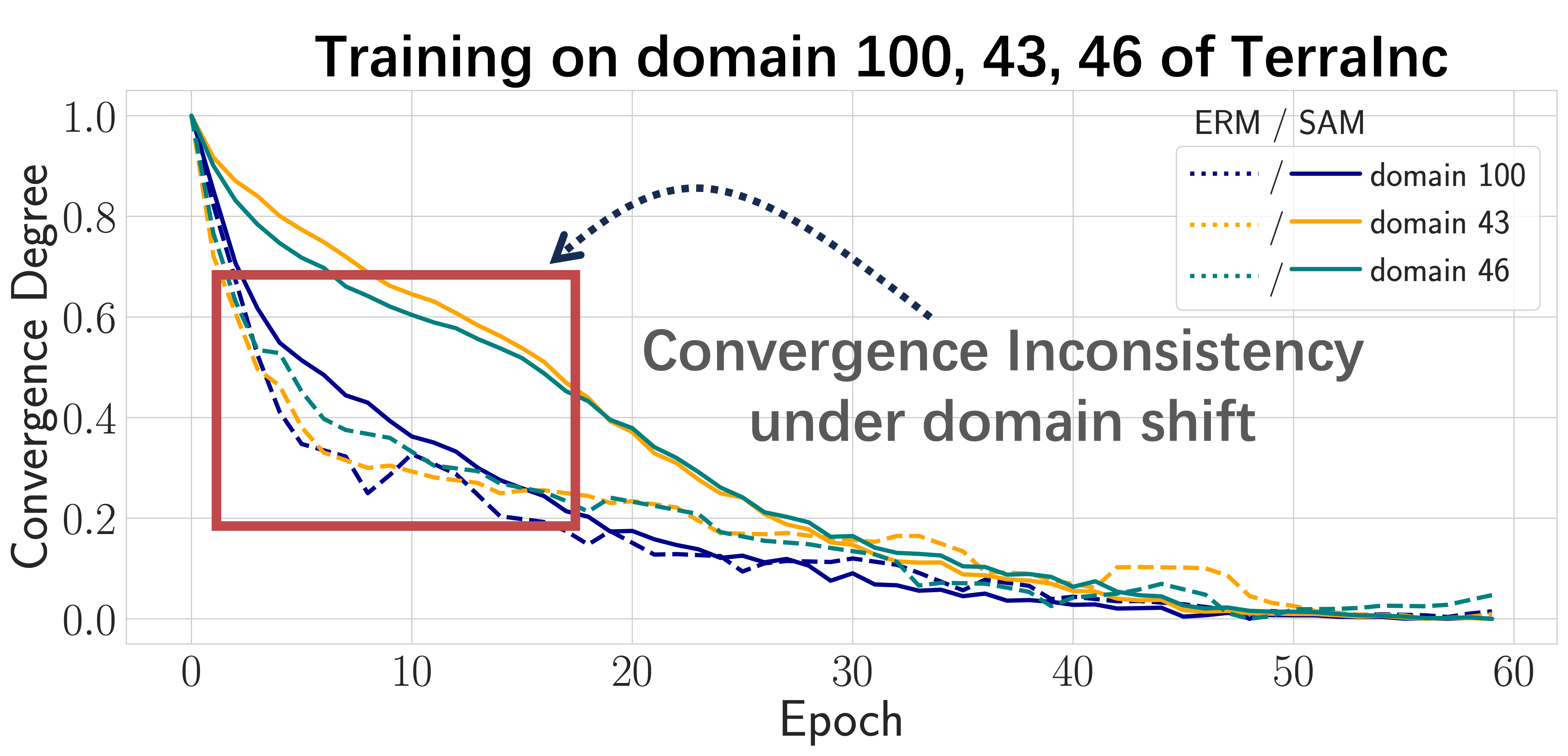}
    \caption{Convergence curves under domain shifts.}
    \label{fig:intro2}
    \end{subfigure}
    \caption{Illustration of SAM's degradation of the training process under domain shifts. (a) Performance comparison between ERM and SAM, where SAM consistently performs worse than ERM across all hyperparameters $\rho$. (b) Convergence curves of SAM and ERM for each domain during training, with the convergence degree normalized to [0,1]. SAM exacerbates the disparity in convergence degree among different domains in domain shift scenarios, resulting in inferior generalization performance. The dataset used here is \texttt{TerraInc} from the DomainBed benchmark, and the backbone is ResNet50. Further experimental details are provided in Section~\ref{sec:exp_setup} and Appendix~\ref{appendix:convergence_curve}.}\label{fig:problem_show}
\end{figure}

To solve the aforementioned problem, we propose a Domain-Inspired Sharpness-Aware Minimization (DISAM) algorithm. 
As the degradation origins from the inconsistency in convergences of these domains, DISAM incorporates domain-level convergence information to intervene in the perturbation of the vanilla SAM: \emph{the perturbation direction should focus more on domains with higher convergence degree while being mild to domains with lower convergence degree.}
Under such balancing in perturbation, the gradient update actually implements the domain-level personalization, thus mitigates the impact of domain shifts and enhances the generalization performance.
Technically, we ingeniously accomplish the adaptive adjustment of the perturbation direction in accordance with the degree of convergence through the domain loss variance minimization constraint. The perturbation of DISAM directs towards a location with a more consistent convergence degree, enabling a better global view of the loss landscape for gradient update.
We summarize our contributions as follows:
\begin{itemize}[leftmargin=15pt]
    \item We identify that the use of SAM has a detrimental impact on training under domain shifts, thereby compromising generalizability, and further analyze that the reason is the inconsistent convergence of training domains that deviates from the underlying i.i.d assumption of SAM.
    \item We introduce a novel approach called Domain-Inspired Sharpness-Aware Minimization to mitigate the problem above. DISAM incorporates domain-level convergence consistency by imposing a variance minimization constraint on domain loss during the sharpness estimation process, thereby enabling a more representative perturbation location and enhancing generalization. 
    \item Extensive experiments show the superiority of DISAM in improving the current state-of-the-art methods on several benchmarks. We also provide a comprehensive analysis of its merit of faster convergence compared to SAM, and show its persistent generalization capabilities under parameter-efficient fine-tuning with large models like CLIP.
\end{itemize}

\section{Preliminaries}
\subsection{Basic Notations}
\begin{itemize}[leftmargin=15pt]
    \item $\mathcal{S} = \{D_1, D_2, \cdots, D_M\}$: Overall training set of a $M$-source domain generalization task. We denote each domain by $D_i$ and the number of samples in $D_i$ by $n_i=|D_i|$.
    \item $\xi$, $\xi^i_j$: A specific sample and the j-th sample in i-th domain $D_i$, respectively.
    \item $\mathcal{L}$, $\mathcal{L}(w)$, $\mathcal{L}(w;\xi)$: A loss function, expected loss under $w$ and the specific loss of $\xi$, respectively. 
    \item $\mathcal{L}_i(w) = \mathbb{E}_{\xi \in D_i} \mathcal{L}(w;\xi)$: Expected loss under $w$ for each domain $D_i$.
    \item $\text{Var}\{\cdot\}_{i=1}^M$: The variance among $M$ training domains, which holds: $\text{Var}\{\mathcal{L}_i(w)\}_{i=1}^M = \frac{1}{2M^2} \sum_{i=1}^M \sum_{j=1}^M (\mathcal{L}_i(w) - \mathcal{L}_j(w))^2$.
    \item $\mathcal{L}_{DI}(w)=\mathcal{L}(w) - \lambda \text{Var}\{\mathcal{L}_i(w)\}_{i=1}^M$: A loss function with domain-inspired regularizer $\text{Var}\{\mathcal{L}_i(w)\}_{i=1}^M$ on $\mathcal{L}$. $\lambda$ is a constant value that controls the strength of the constraint.
    \item $\mathcal{L}_p(w) = \max_{\| \epsilon\|_2 \leq \rho} \mathcal{L}(w+\epsilon)$: The perturbed loss and the objective of SAM.
    \item $w_t^{asc} = w_t + \rho \frac{\nabla \mathcal{L}_{DI}(w_t)}{\|\nabla \mathcal{L}_{DI}(w_t)\|}$: The sharpness estimation of DISAM with gradient ascend at step $t$.
    \item $\eta_t$: Learning rate at step $t$.
    \item $w$: Parameters of a neural network $\in \mathbb{R}^k$, where $k$ is the dimension.
    \item $\epsilon \in \mathbb{R}^k$: A perturbation on the parameters $w$ with scale $\rho \in \mathbb{R}$.
\end{itemize}

\subsection{Sharpness-Aware Minimization}
In general, simply minimizing ERM tends to overfit training data and extensive studies show the correlation between generalizability and the sharpness of minima~\citep{sharp_minima_icml_2017,hochreiter1994simplifying_sharp,mcallester1999pac_sharp,chaudhari2019entropy_sharp}. We clarify the concepts as below.
\paragraph{Sharpness.} The \emph{sharpness} on parameter $w$ with a dataset $D$ and loss function $\mathcal{L}$ is:
\begin{equation}
    \label{eq:define_sharpness}
    s(w, D) \triangleq \max_{\| \epsilon\|_2 \leq \rho} \mathbb{E}_{\xi \in D}[\mathcal{L}(w+\epsilon;\xi) - \mathcal{L}(w;\xi)].
\end{equation}

\paragraph{Sharpness-Aware Minimization (SAM). } \citet{sam_first_paper} proposed SAM to improve the generalization by simultaneously minimizing the loss and the sharpness of the overall loss surface. The objective is defined as:
\begin{equation}
    \label{eq:define_original_SAM}
    \min_{w}  \max_{\| \epsilon\|_2 \leq \rho} \mathbb{E}_{\xi \in D}[\mathcal{L}(w+\epsilon;\xi)].
\end{equation}
From the above equation, we can see SAM minimizes a perturbed loss ``$\max_{\| \epsilon\|_2 \leq \rho}\mathbb{E}_{\xi \in D}[\mathcal{L}(w+\epsilon;\xi)]$'', which aims to maximize the loss $\mathcal{L}$ within radius $\rho$ centered at the parameter $w$.

\section{Method}

\subsection{Motivation}
Although existing SAM-based methods that minimize the sharpness have achieved good generalization, in the case of multiple domains with shifts, the inherent heterogeneity in quantity and task difficulty among domains can considerably distort their sharpness estimation, yielding a degradation in the performance. Concretely, with a collection $\mathcal{S}$ of $M$ domains, each of which contains a set of $n_i$ samples, \textit{i.e.,} $\{\xi^i_j=(x^i_j, y_j^i)\}_{j=1}^{n_i}$, the training objective can be then formulated as follows:
\begin{equation}
    \min_{w} \mathbb{E}_{\xi \in \mathcal{S}}[\mathcal{L}(w;\xi)] = \frac{1}{N} \sum_{i=1}^M  \sum_{j=1}^{n_i} \mathcal{L}(w;\xi_j^i) = \sum_{i=1}^M \alpha_i \mathcal{L}_i(w) , \label{eq:DG_base_objective}
\end{equation}
where $N = \sum_{i=1}^M n_i$, $\alpha_i = \frac{n_i}{N}$ and $\mathcal{L}_i(w) = \frac{1}{n_i}\sum_{j=1}^{n_i} \mathcal{L}(w;\xi_j^i)$. Note that, we clarify here that we will ignore the notations of data properly in some subsequent equations to avoid clutter. Then, on the basis of Eq.~(\ref{eq:DG_base_objective}), the corresponding objective of SAM under domain shifts is defined as:
\begin{equation}
    \label{eq:define_SAM_domain_shift}
    \min_{w} \mathbb{E}_{\xi \in \mathcal{S}} [\mathcal{L}_{SAM}(w;\xi)] =  \min_{w}  \max_{\| \epsilon\|_2 \leq \rho} \mathbb{E}_{\xi \in \mathcal{S}}\left [\mathcal{L}(w+\epsilon;\xi)\right ] \overset{\mathbf{?}}{\approx} \min_{w}  \max_{\| \epsilon\|_2 \leq \rho}  \sum_{i=1}^M \alpha_i \mathcal{L}_i(w + \epsilon) .
\end{equation}
The core that we should point out is whether the approximation from $ \max_{\| \epsilon\|_2 \leq \rho} \mathbb{E}_{\xi \in \mathcal{S}}\left [\mathcal{L}(w+\epsilon;\xi)\right ]$ to $ \max_{\| \epsilon\|_2 \leq \rho} \sum_{i=1}^M \alpha_i \mathcal{L}_i(w + \epsilon)$ in Eq.~(\ref{eq:define_SAM_domain_shift}) is reasonable. There is no harm when samples in $\mathcal{S}$ are intrinsically independently and identically distributed. However, this is actually ill-posed under domain shifts. 
Differences in the amount of data or inconsistency in task difficulty can result in biased sharpness estimation towards specific domains, hindering the overall convergence.
As shown in Figure~\ref{fig:DISAM_toy}, neglecting the domain shifts and the consequent convergence inconsistency issue, using SAM directly at this point, significantly misdirects the perturbation direction towards the domain with the largest gradient vectors (implying a lower degree of convergence). Consequently, it does not help find a better convergence path and conversely leads to a suboptimal sharp minima.

\begin{figure}[t]
    \centering
    \includegraphics[width=0.95\linewidth]{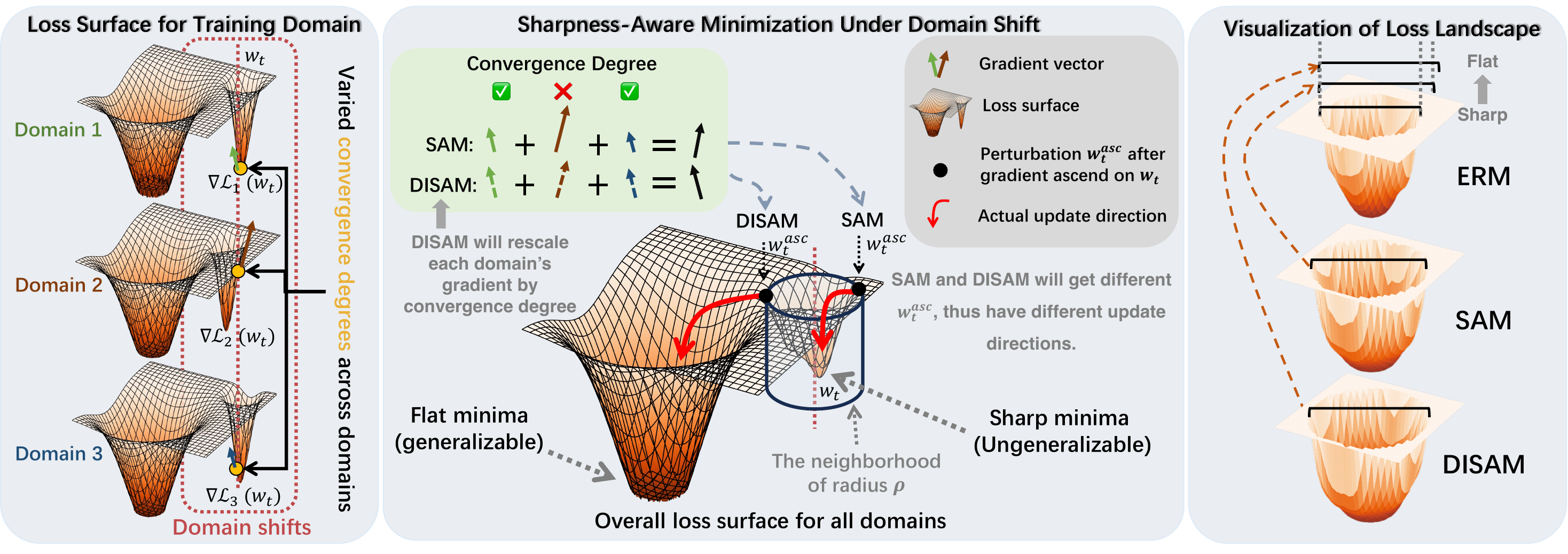}
    \caption{Toy example illustrating the problem of \textbf{SAM under domain shifts}. \textit{Left:} Domain shifts on the loss surface of training domains, which causes the inconsistency of convergence degree. 
    \textit{Middle:} Differences between SAM and DISAM in the perturbation generation and convergence. Specifically, SAM is affected by the inconsistent degree of convergence. \textit{Right:} Visualization of loss landscape for ERM, SAM, and DISAM on unseen test domain. DISAM is flatter than SAM and ERM.}
    \vspace{-10pt}
    \label{fig:DISAM_toy}
\end{figure}

\subsection{Domain-Inspired SAM}
To address the problem described in Eq.~(\ref{eq:define_SAM_domain_shift}) and Figure~\ref{fig:DISAM_toy}, we need to design an adjustment mechanism that takes into account the convergence degree of each domain during the perturbation generation. Specifically, we should make the perturbation direction $\nabla \mathcal{L}_p(w)$ to efficiently pull domains that are close to convergence out of sharp minima while minimizing the negative impact on domains that have not yet converged.
Here, we define the convergence degree of domain $i$ with model parameter $w$ as $C_i(w) = \mathcal{L}_i^* - \mathcal{L}_i(w)$, where $\mathcal{L}_i^*$ represents the optimal minimum of domain $i$ ($\mathcal{L}_i^* \geq 0$). The design principle is to prioritize the contribution of domains with larger $C_i(w)$ to the overall perturbation direction. To achieve this, a simple approach involves directly adding $C_i(w)$ to the weight $\alpha_i$ of SAM with a controlling coefficient term $\beta$.
\begin{equation}
\small
    \sum_{i=1}^M \alpha_i \nabla \mathcal{L}_i(w) \rightarrow \sum_{i=1}^M \frac{\alpha_i + \beta C_i(w)}{\sum_{j=1}^M (\alpha_j + \beta C_j(w))} \nabla \mathcal{L}_i(w) = \sum_{i=1}^M \frac{\beta(C_i(w) - \alpha_i \sum_{j=1}^M C_j(w))}{1+\beta \sum_{j=1}^M C_j(w)} \nabla \mathcal{L}_i(w)
    \label{eq:DISAM_intuitive}
\end{equation}

However, we observe that the weight adjustment in Eq.~(\ref{eq:DISAM_intuitive}) that is affected by the convergence degree, is constrained by the magnitude of $\alpha_i$. That is, domains with higher $\alpha_i$ values can tolerate lower convergence degrees, which may not accurately satisfy our goals. To refine this, we propose to use an adaptive way to ensure fairness by calculating the average convergence at the domain level, namely, $C_i(w) - \alpha_i \sum_{i=1}^M C_i(w) \rightarrow \mathcal{L}_i(w) - \frac{1}{M} \sum_{i=1}^M \mathcal{L}_i(w)$. 
With this intuition, we introduce a method called \emph{Domain-Inspired Sharpness-Aware Minimization (DISAM)} that incorporates a variance constraint between domain losses to estimate sharpness. It enables the adaptive adjustment of the perturbation direction similar to our spirit, which we will provide a detailed analysis in the following Eq.~(\ref{eq:DISAM_gradient}).
First of all, we give the definition of the variance between different domain losses as:
\begin{equation}
\text{Var}\{\mathcal{L}_i(w + \epsilon)\}_{i=1}^M=\frac{1}{2M^2}\sum_{i=1}^M \sum_{j=1}^M (\mathcal{L}_i(w+\epsilon) - \mathcal{L}_j(w+\epsilon))^2. 
\label{eq:var}
\end{equation}
Then, putting the above variance term into the loss, the new training objective can be defined as:

\begin{equation}
\small
   \min_{w} \mathbb{E}_{\xi \in \mathcal{S}} [\mathcal{L}_{DISAM}(w;\xi)] \triangleq \min_{w}  \max_{\| \epsilon\|_2 \leq \rho} \left[ \sum_{i=1}^M \alpha_i \mathcal{L}_i(w + \epsilon) -  \lambda \text{Var}\{\mathcal{L}_i(\hat{w} + \epsilon)\}_{i=1}^M \right] 
\label{eq:DISAM_analysis}
\end{equation}

Here $\hat{w}$ is $w$ without derivative taken during backpropagation, and it only makes effect in the $\max_{\| \epsilon\|_2 \leq \rho}$ loop without affecting the optimization of the first term \textit{w.r.t.} $w$.
Following the computing way of perturbation $\epsilon$ in SAM, namely, using first-order Taylor expansion~\citep{sam_first_paper}, we will have  $\epsilon \approx \rho \frac{\nabla \mathcal{L}_{DISAM}}{\|\nabla \mathcal{L}_{DISAM}\|}$, where $\nabla\mathcal{L}_{DISAM}$ \textit{w.r.t.} $w$ has the form: 

\begin{equation}
\small
\begin{aligned}
 \nabla \mathcal{L}_{DISAM} & = \sum_{i=1}^M \left (\alpha_i - \frac{2\lambda}{M} \left (\mathcal{L}_i(w) - \frac{1}{M} \sum_{j=1}^M \mathcal{L}_j(w) \right) \right ) \nabla \mathcal{L}_i (w) \\
 & = \nabla \mathcal{L}_{SAM} - \underbrace{\sum_{i=1}^M \frac{2\lambda}{M} \left (\mathcal{L}_i(w) - \frac{1}{M} \sum_{j=1}^M \mathcal{L}_j(w) \right)  \nabla \mathcal{L}_i (w)}_{\text{\textbf{Adaptive adjustment:} \textit{increase} weights for smaller losses, \textit{reduce} for larger ones.}}
\label{eq:DISAM_gradient}
\end{aligned}
\end{equation}

The first term in the RHS of Eq.~(\ref{eq:DISAM_gradient}) recovers the gradient term for perturbation generation in SAM, and the second term characterizes the working mechanism for the adaptive adjustment. As can be seen, when the loss $\mathcal{L}_i(w)$ of one certain domain is above the averaging level, the second term will generate a residual gradient for this domain to cancel out the gradient contribution in $\nabla \mathcal{L}_{SAM}$, and vice versa. It means to have a mild perturbation for the domain that is not well optimized, and have an aggressive perturbation for the domain that is well optimized. In total, the variance constraint ensures that the perturbation location is at a more consistent point, enabling a better global view of the loss landscape for gradient update. The complete algorithm is described in  Appendix~\ref{appendix:proof}. 
Regarding $\lambda$, a default value of $0.1$ is relatively stable, and we provide more discussion about $\lambda$ in Appendix~\ref{appendx:lambda}.

\textbf{Difference and Compatibility.} Similarly, the current SAM variants will meet the same challenge, if they are directly applied to this scenario. Different from existing state-of-the-art methods like GSAM~\citep{GSAM} and SAGM~\citep{SAGM} that modify the optimization objective based on the second derivative of SAM, DISAM rectifies the domain shift issue by the domain-level adjustment in the perturbation generation, which actually alleviates the negative impacts on the training objective. In Appendix~\ref{appendix:related_robustness}, we present a table to comprehensively characterize the difference between DISAM and other domain-invariant robust optimization methods.
Besides, DISAM can be easily extended into other SAM-based methods to improve the generalization performance.
We have provided a comparison of the similarities and differences between DISAM and general convergence consistency methods (such as V-REx\citep{krueger2021out_rex} and Fishr\citep{Fishr}) in Appendix~\ref{sec:DISAM_VREx_compare}.

\subsection{Understanding Domain-Inspired SAM} \label{sec:method_convergence}

\paragraph{Complexity.} Compared to SAM-based methods, our algorithm only additionally computes the loss variance between different domains as Eq.~(\ref{eq:var}) and requires no extra storing space. Therefore it has the same space complexity and the time complexity can be represented as
$O_\text{DISAM}=O_\text{SAM}+O_\text{Var}$. Since it only needs to additionally count the domain loss and the corresponding variance according to the domain label when calculating the empirical loss, the overall cost on $O_{\text{Var}}$ is negligible.

\paragraph{Convergence.} In the following, we provide the convergence analysis of SAM and DISAM. Similar to~\citep{GSAM,jiang2023an_sharpness_measure}, our theorem is established on assumptions that a non-convex function $\mathcal{L}(w)$ is $L$ Lipschitz-smooth, the lower bound of the empirical loss is bounded by $\mathcal{L}_{min}$, and the norm of noisy stochastic gradients is bounded~($\| \nabla \mathcal{L}_{p}(w_t)\|_2 \leq G$) at the t-step.

\begin{theo}
Consider a non-convex function $\mathcal{L}(w)$ with Lipschitz-smooth constant $L$ and lower bound $\mathcal{L}_{min}$. With the bounded norm assumption of noisy stochastic gradients ($\| \nabla \mathcal{L}_{p}(w)\|_2 \leq G$) at the t-step, the learning rate $\eta_t = \eta_0 / \sqrt{t}$ and a fixed perturbation amplitude $\rho$, we have: 
\begin{equation}
\frac{1}{T} \sum_{t=1}^T \mathbb{E} \| \nabla \mathcal{L}_p (w_t) \|_2^2 \leq \frac{\mathcal{L}_p(w_0) - \mathcal{L}_{min}}{\eta_0} \frac{1}{\sqrt{T}} + \frac{(LG^2 + \rho^2 L \Gamma^2) \eta_0 \log(T)}{\sqrt{T}}, \nonumber
\end{equation}
where in SAM, $\Gamma=L$ and in our DISAM, $\Gamma \leq L$. 
\label{thm:conver}
\end{theo}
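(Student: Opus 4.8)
The plan is to adapt the standard telescoping argument for stochastic gradient descent on an $L$-smooth non-convex objective — here the perturbed loss $\mathcal{L}_p$ — and to concentrate the entire difference between SAM and DISAM into a single variance constant $\Gamma$. I would write the common update as $w_{t+1}=w_t-\eta_t g_t$, where $g_t=\nabla\mathcal{L}(w_t^{asc})$ is the descent direction evaluated at the ascent point, with $w_t^{asc}=w_t+\rho\,\nabla\mathcal{L}(w_t)/\|\nabla\mathcal{L}(w_t)\|$ for SAM and $w_t^{asc}=w_t+\rho\,\nabla\mathcal{L}_{DI}(w_t)/\|\nabla\mathcal{L}_{DI}(w_t)\|$ for DISAM. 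Treating $g_t$ as an unbiased stochastic estimate of $\nabla\mathcal{L}_p(w_t)$ and decomposing $g_t=\nabla\mathcal{L}_p(w_t)+\zeta_t$ with $\mathbb{E}[\zeta_t\mid w_t]=0$, I would apply the descent inequality for $\mathcal{L}_p$ (which, as the supremum over the $\rho$-ball of the $L$-smooth maps $\epsilon\mapsto\mathcal{L}(\cdot+\epsilon)$, is used with constant $L$ as in prior SAM analyses) and take the conditional expectation, so that the cross term drops out:
\[
\mathbb{E}\!\left[\mathcal{L}_p(w_{t+1})\mid w_t\right]\le \mathcal{L}_p(w_t)-\eta_t\|\nabla\mathcal{L}_p(w_t)\|_2^2+\tfrac{L\eta_t^2}{2}\,\mathbb{E}\!\left[\|g_t\|_2^2\mid w_t\right].
\]

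Next I would bound the second-order term. By the Pythagorean identity and the bounded-gradient assumption, $\mathbb{E}[\|g_t\|_2^2\mid w_t]=\|\nabla\mathcal{L}_p(w_t)\|_2^2+\mathbb{E}[\|\zeta_t\|_2^2\mid w_t]\le G^2+\rho^2\Gamma^2$, where the key inequality $\mathbb{E}[\|\zeta_t\|_2^2\mid w_t]\le\rho^2\Gamma^2$ quantifies the dispersion of the perturbed gradient induced by the (mini-batch, multi-domain) ascent step of length $\rho$. Substituting, summing over $t=1,\dots,T$, and taking full expectations yields
\[
\sum_{t=1}^T\eta_t\,\mathbb{E}\|\nabla\mathcal{L}_p(w_t)\|_2^2\;\le\;\mathcal{L}_p(w_0)-\mathcal{L}_{min}+\tfrac{L}{2}\,(G^2+\rho^2\Gamma^2)\sum_{t=1}^T\eta_t^2.
\]
Then I would insert $\eta_t=\eta_0/\sqrt{t}$, use $\sum_{t=1}^T\eta_t^2\le\eta_0^2(1+\log T)$ together with $\eta_t\ge\eta_T=\eta_0/\sqrt{T}$ to lower-bound the left side by $\tfrac{\eta_0}{\sqrt{T}}\sum_t\mathbb{E}\|\nabla\mathcal{L}_p(w_t)\|_2^2$, and divide through by $\eta_0\sqrt{T}$ to obtain the stated bound (the constant $\tfrac12$ and the additive $1$ in $1+\log T$ being absorbed).

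The only non-routine ingredient is the claim that $\Gamma=L$ for SAM while $\Gamma\le L$ for DISAM. The error $\zeta_t$ arises from replacing the exact, population ascent direction by the stochastic one and then composing with $\nabla\mathcal{L}$; a first-order expansion writes $\zeta_t$ as a Hessian-type operator applied to $\rho$ times the fluctuation of the ascent direction, so $\mathbb{E}\|\zeta_t\|_2^2\lesssim\rho^2\,(\text{curvature})^2\cdot(\text{direction dispersion})$. For SAM the ascent direction is $\nabla\mathcal{L}_{SAM}$, whose across-domain spread is only generically controlled, leaving $\Gamma=L$; for DISAM, Eq.~(\ref{eq:DISAM_gradient}) gives the perturbation gradient in the form $\nabla\mathcal{L}_{SAM}-\tfrac{2\lambda}{M}\sum_i\big(\mathcal{L}_i(w_t)-\tfrac1M\sum_j\mathcal{L}_j(w_t)\big)\nabla\mathcal{L}_i(w_t)$, which damps precisely the under-converged domains whose gradients are largest and most dispersed, contracting that spread and hence giving $\Gamma\le L$, with the inequality strict whenever $\text{Var}\{\mathcal{L}_i(w_t)\}_{i=1}^M>0$. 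Making this contraction quantitative — bounding the curvature/direction-dispersion product and relating it to the domain-loss variance — is the step I expect to require the most care; the remainder is the textbook SGD-on-nonconvex-smooth telescoping.
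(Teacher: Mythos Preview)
Your route is genuinely different from the paper's. You apply a descent lemma for $\mathcal{L}_p$ between $w_t$ and $w_{t+1}$ and pack everything into the second-moment bound $\mathbb{E}\|g_t\|^2\le G^2+\rho^2\Gamma^2$, so that $\Gamma$ encodes the \emph{single-step stochastic dispersion} of the ascent direction. The paper instead applies $L$-smoothness of $\mathcal{L}$ (not $\mathcal{L}_p$) between the \emph{ascent points} $w_t^{asc}$ and $w_{t+1}^{asc}$, and splits $w_{t+1}^{asc}-w_t^{asc}=d_t+(\epsilon_{t+1}-\epsilon_t)$. The $LG^2$ term then comes from $L\|d_t\|^2$, while the $\rho^2 L\Gamma^2$ term comes from $L\|\epsilon_{t+1}-\epsilon_t\|^2$: writing both $\epsilon_t,\epsilon_{t+1}$ as $\rho$ times unit vectors, the paper bounds their chord by the rotation angle $\phi_t$ and then argues $\phi_t\le\eta_t\Gamma$. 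So in the paper $\Gamma$ measures how much the perturbation direction \emph{rotates between consecutive steps}, not a within-step variance. The DISAM improvement $\Gamma\le L$ is obtained by expanding $\nabla\mathcal{L}_{DI}(w_{t+1})-\nabla\mathcal{L}_{DI}(w_t)$ with the reweighted form $\sum_i\beta_t^i\nabla\mathcal{L}_i$ and using a rearrangement-inequality step on the time-varying weights $\beta_{t+1}^i$ versus $\beta_t^i$; this yields a nonpositive extra term that shaves the bound below the SAM value $L$ coming from $\|Hd_t\|$ alone.

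Two things in your plan would not go through as written. First, you invoke $L$-smoothness of $\mathcal{L}_p$ on the grounds that it is a sup of $L$-smooth maps; that implication is false in general, and the paper avoids it precisely by working with $\mathcal{L}$ at the ascent points (incurring the extra $\epsilon_{t+1}-\epsilon_t$ term as the price). Second, your mechanism for $\Gamma=L$ in SAM versus $\Gamma\le L$ in DISAM does not pin $\Gamma$ to $L$: under your decomposition $\Gamma$ is $L$ times a unit-direction dispersion factor, so SAM would already have $\Gamma\le L$ and the comparison collapses. The paper's consecutive-step argument is what makes the baseline exactly $L$ (via $\|\nabla\mathcal{L}(w_{t+1})-\nabla\mathcal{L}(w_t)\|\le L\|d_t\|$) and then exhibits the DISAM correction as a signed term that can only help. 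If you want to salvage your route, you would need to redefine $\Gamma$ through the $\epsilon_{t+1}-\epsilon_t$ difference rather than through single-step variance, at which point you are essentially back to the paper's decomposition.
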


The complete proof is presented in Appendix~\ref{appendix:proof}. As can be seen in Theorem~\ref{thm:conver}, the critical convergence difference between SAM and DISAM is on $\Gamma$, and especially the $\Gamma$ in our DISAM is smaller than that in SAM due to the canonically correlated perturbations during training (see the proof for the details), which thus leads to a faster convergence rate. 
Note that, the overall $\rho^2 L \Gamma^2$ in Theorem~\ref{thm:conver} indicates that a larger perturbation amplitude $\rho$ will result in larger upper bound of convergence. However, as analyzed in SAM~\citep{sam_first_paper}, a larger perturbation amplitude $\rho$ has the merit of reaching a smaller upper bound on generalization error. This means that $\rho$ actually has a trade-off between accelerating the convergence and improving the generalization. Fortunately, when in the same overall value of $\rho^2 L \Gamma^2$, as DISAM enjoys a smaller $\Gamma$ than SAM, DISAM can permit potential larger $\rho$ than that in SAM, thus yielding a better generalization (Please refer to Appendix~\ref{appendix:rho_analysis} for more discussion).

\begin{figure}[t!]
  \centering
  \begin{subfigure}[b]{0.24\textwidth}
    \centering
    \includegraphics[width=\textwidth]{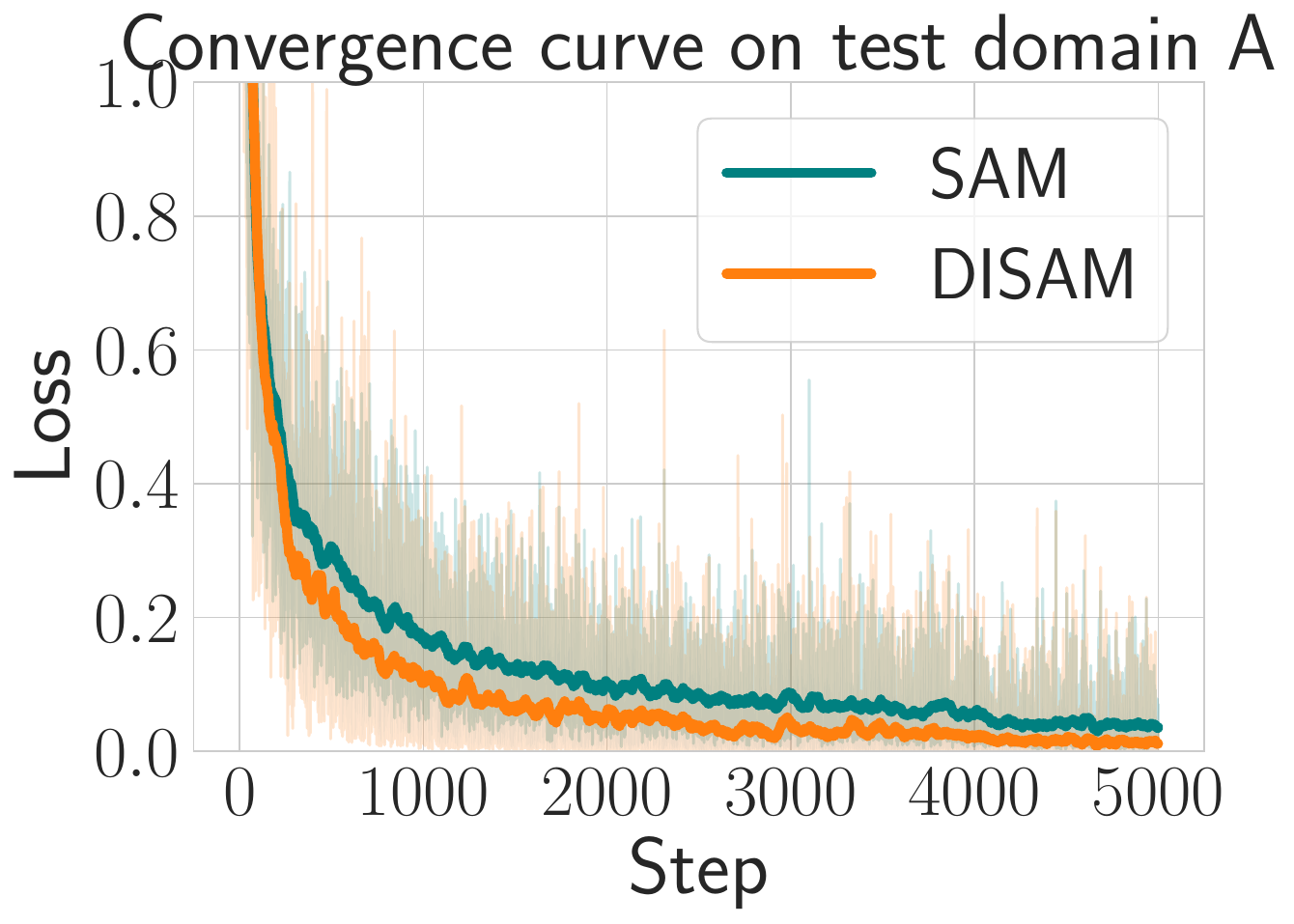}
     \caption{}
    \label{fig:pacs_convergence_a}
  \end{subfigure}
  \begin{subfigure}[b]{0.24\textwidth}
    \centering
    \includegraphics[width=\textwidth]{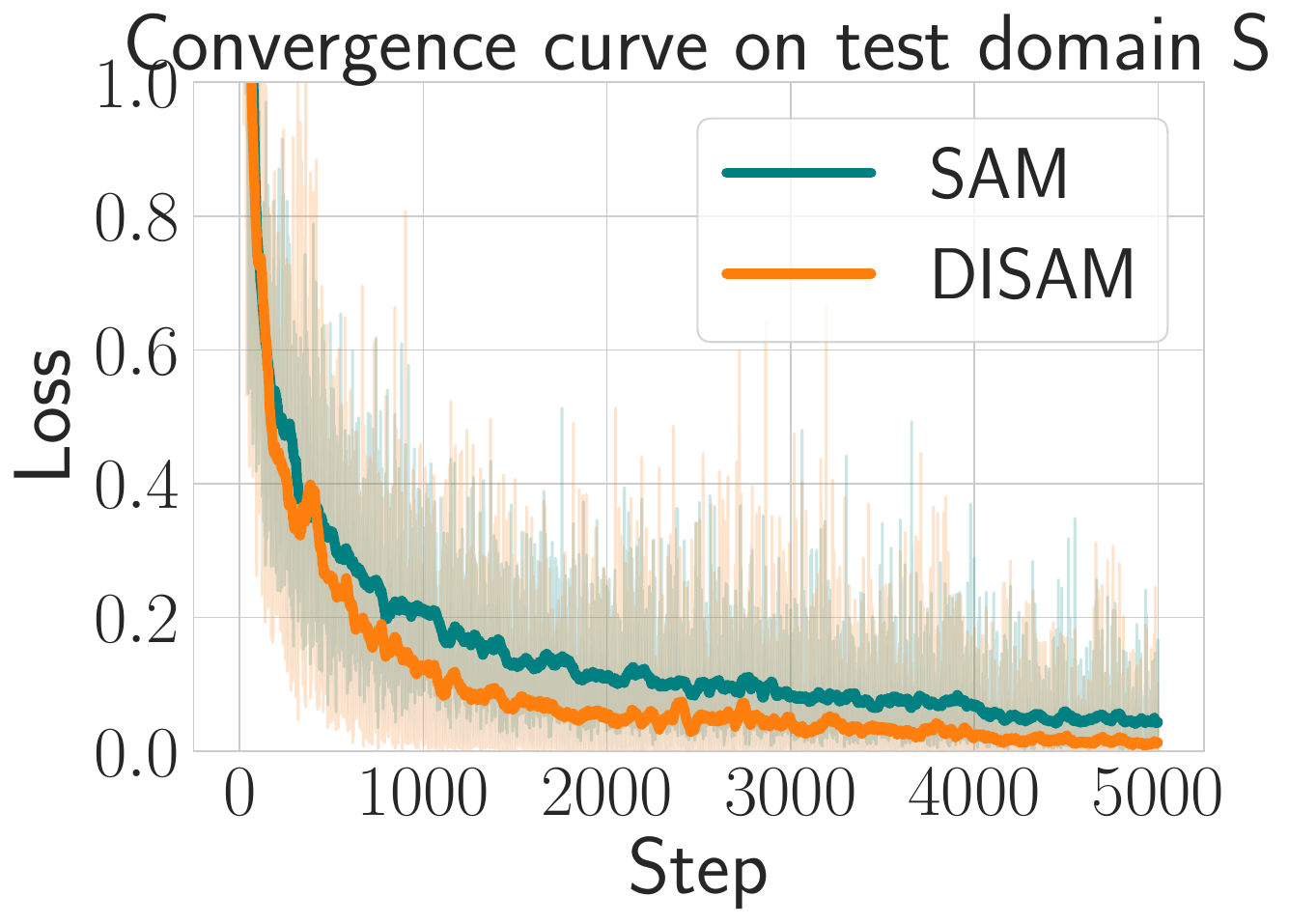}
     \caption{}
    \label{fig:pacs_convergence_s}
  \end{subfigure}
  \begin{subfigure}[b]{0.24\textwidth}
    \centering
    \includegraphics[width=\textwidth]{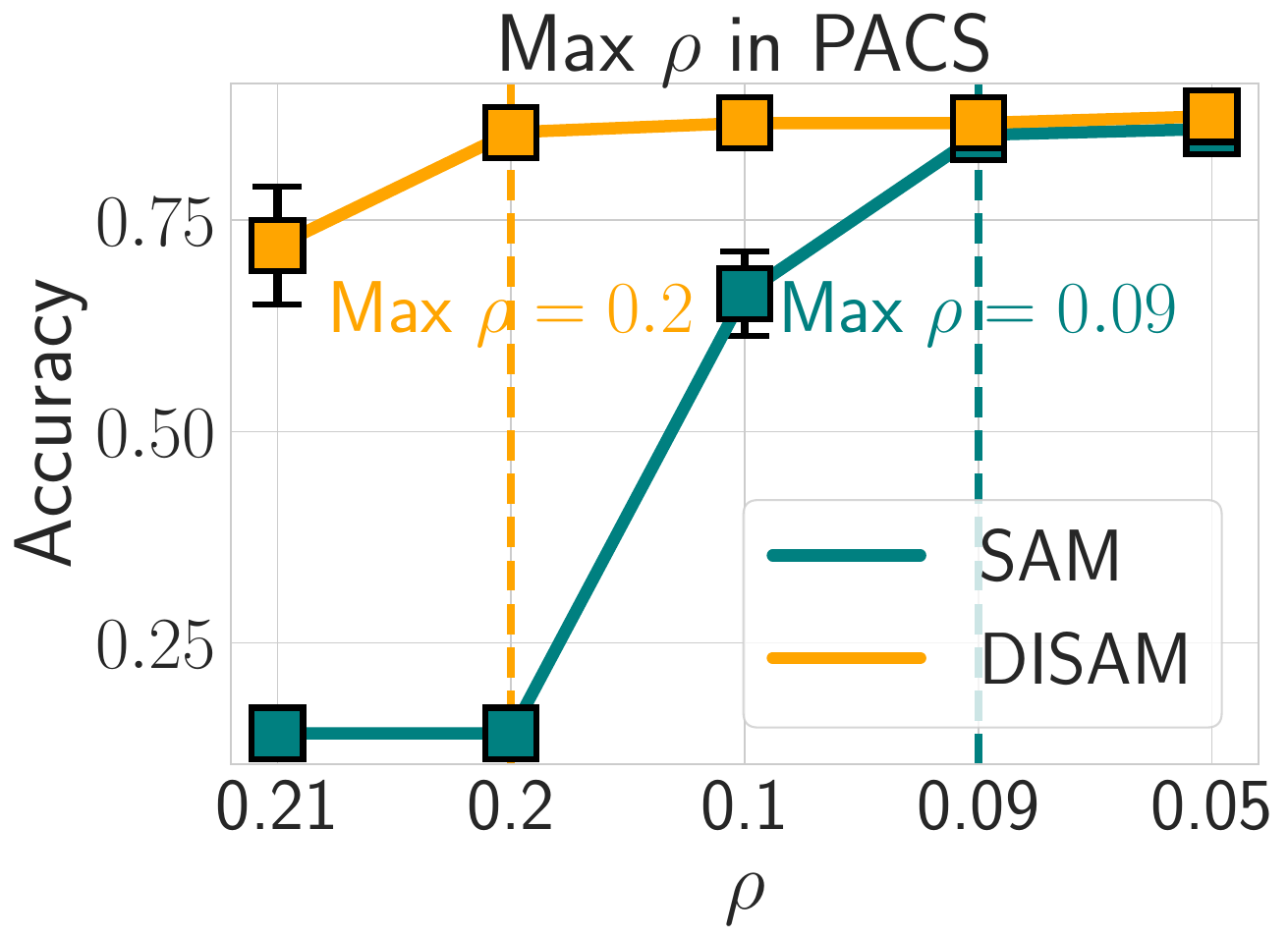}
     \caption{}
    \label{fig:max_rho_pacs}
  \end{subfigure}
  \begin{subfigure}[b]{0.24\textwidth}
    \centering
    \includegraphics[width=\textwidth]{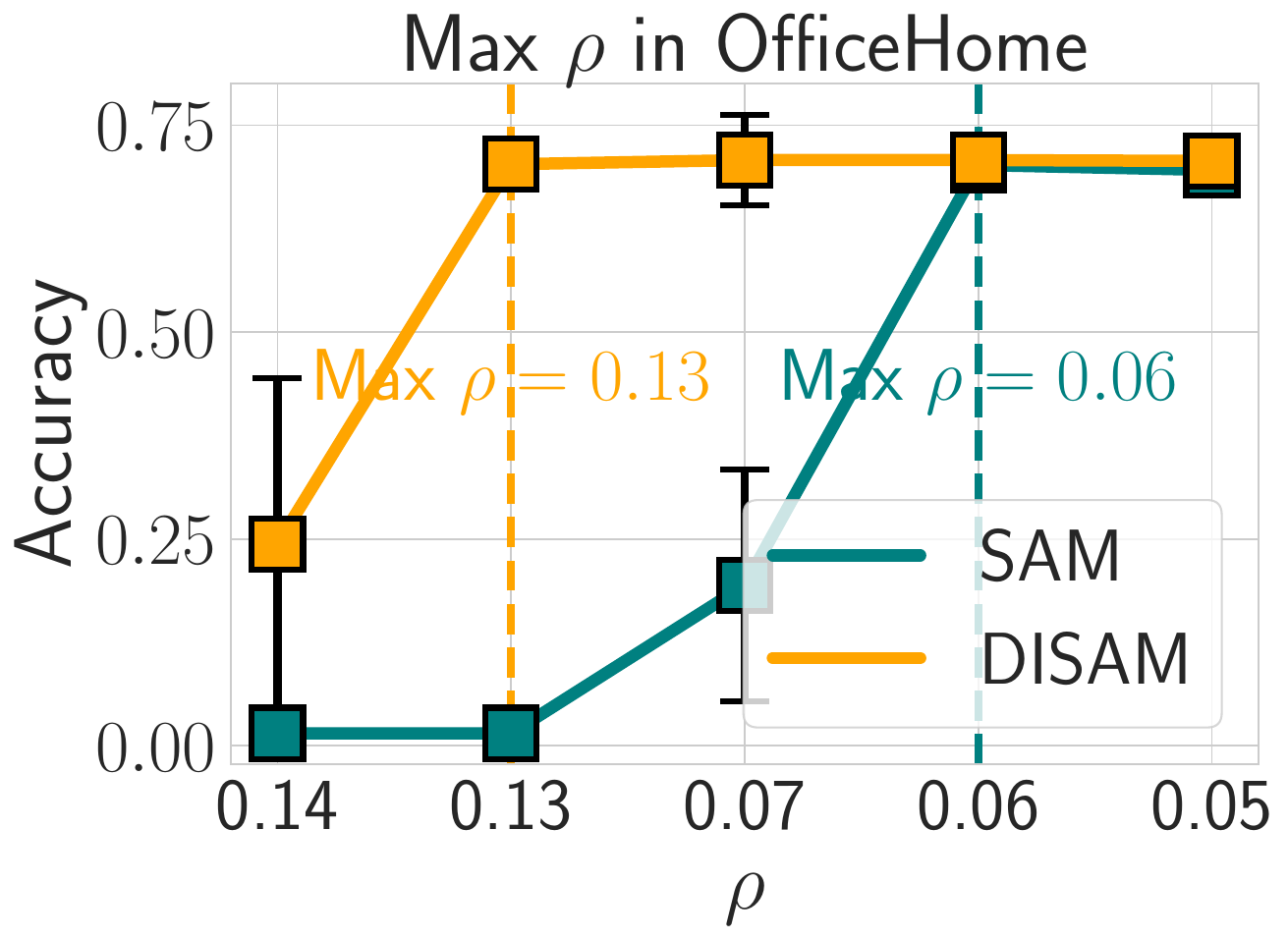}
    \caption{}
    \label{fig:max_rho_officehome}
  \end{subfigure}
  \vspace{-5pt}
  \caption{\textbf{Convergence curves}  and \textbf{Max $\rho$ search} for SAM and DISAM. (a) \& (b) show the trend of $\mathcal{L}(w)$ during the training process on PACS dataset, while (c) \& (d) search for the maximum perturbation amplitude $\rho$ of SAM and DISAM on PACS and OfficeHome datasets. 
  }
  \vspace{-10pt}
\label{fig:ablation_convergence_and_sharpness}
\end{figure}

To verify the theoretical analysis, we present our empirical results of DISAM and SAM in Figure~\ref{fig:ablation_convergence_and_sharpness}. As shown in Figures~\ref{fig:ablation_convergence_and_sharpness}(\subref{fig:pacs_convergence_a}) and~\ref{fig:ablation_convergence_and_sharpness}(\subref{fig:pacs_convergence_s}), the training curves on the PACS dataset show that DISAM achieves faster and steeper convergence than SAM. In addition, as DISAM has a smaller $\Gamma$, it is able to utilize a larger perturbation amplitude $\rho$. In Figures~\ref{fig:ablation_convergence_and_sharpness}(\subref{fig:max_rho_pacs}) and~\ref{fig:ablation_convergence_and_sharpness}(\subref{fig:max_rho_officehome}), we show the experimental support that DISAM allows larger $\rho$ values (0.2 and 0.13) than SAM (0.09 and 0.06) on PACS and OfficeHome datasets, while achieving a better performance. In total, these experiments confirm the advantage of DISAM that allows larger $\rho$ for better generalization.

\section{Experiments}

\subsection{Experiment Setups}
\label{sec:exp_setup}

\paragraph{Datasets.}
We evaluate DISAM on five datasets PACS~\citep{li2017pacs}, VLCS~\citep{fang2013unbiased_vlcs} OfficeHome~\citep{venkateswara2017deep_officehome}, TerraIncognita~\citep{beery2018recognition_terrainc} (abbreviated as TerraInc), and DomainNet~\citep{peng2019moment_domainnet}, following the DomainBed benchmark~\citep{gulrajani2021in_domainbed}. For fair comparison, we adhere to the training and evaluation protocol outlined in DomainBed.

\vspace{-5pt}

\paragraph{Evaluation.} 
The standard leave-one-domain-out strategy is used in evaluation. Specially, the unseen domain is used to evaluate the out-of-domain generalization, and the validation sets of source domains are used to measure the in-domain generalization, while the others are used for training. Final accuracy is averaged across all settings, and the performance is the averaging over three trials with distinct random seeds.
Detailed statistics for each case of all datasets are provided in Appendix~\ref{appendix:detailed_results}.

\vspace{-5pt}
\paragraph{Implementation details.}
Our backbones are ResNet50 pretrained on ImageNet~\citep{he2016deep_resnet} and a pretrained CLIP~\citep{radford2021learning_clip} with ViT-B/16 structure~\citep{dosovitskiy2020image_vit}.
For model hyperparameters, we adopt settings in~\citep{SAGM} for experiments using ResNet50 and in~\citep{shu2023CLIPood} for experiments using CLIP.
As the default, we set the perturbation hyperparameter $\rho$ to 0.05~\citep{SAGM} (Fixed value during training), and the weight of the variance constraint $\lambda$ to 0.1. For a detailed description of the hyparameter settings, please see Appendix~\ref{appendix:detailed_results}. 

\begin{table}[t]

\setlength{\tabcolsep}{6.5pt}
    \centering
    \caption{\textbf{Comparison with state-of-the-art domain generalization methods based on ResNet50.} In-domain and Out-of-domain accuracies on five datasets from DomainBed.}
    \vspace{-5pt}
    \scalebox{0.9}{
    \begin{tabular}{c c c c c c c}
    \toprule[1.5pt]
        \textbf{Algorithm} & \texttt{PACS} & \texttt{VLCS} & \texttt{OfficeHome} & \texttt{TerraInc} & \texttt{DomainNet} & \textbf{Avg.} \\ 
        \midrule[1pt]

        \multicolumn{7}{c}{\textit{In-domain results }} \\ \midrule
        ERM & $96.6\pm0.2$ & $84.6\pm0.4$ & $84.2\pm0.3$ & $93.6\pm0.3$ & $67.1\pm1.6$ & $85.2$ \\ \midrule
        SAM & $97.3\pm0.1$ & $84.8\pm0.3$ & $85.8\pm0.2$ & $88.9\pm 0.2$ & $68.5 \pm 0.1$ & $85.1$  \\
        \textbf{Domain-Inspired} & $97.8\pm 0.1$ & $84.4\pm 0.3$ & $86.3\pm 0.2$ & $94.8\pm0.2$ & $70.2\pm 0.1$ & $86.7$ \\\midrule
        GSAM & $97.8\pm 0.2$ & $83.9 \pm 0.2$ & $85.9\pm0.2 $ & $92.1 \pm 0.2$ & $69.1 \pm 0.1 $ & $85.8 $\\
        \textbf{Domain-Inspired} & $97.9\pm 0.1$ & $\textbf{85.1}\pm 0.4$ & $86.2\pm 0.2$ & $94.8\pm 0.3$ & $70.0\pm 0.1$ & $86.8$ \\
        \midrule
        SAGM & $97.6 \pm 0.1$ & $84.6 \pm 0.3$ & $86.1 \pm 0.2$ & $ 92.0 \pm 0.2$ & $69.2\pm 0.1 $ & $ 85.9 $ \\ 
        \textbf{Domain-Inspired} & $\textbf{97.9}\pm 0.1$ & $85.0\pm 0.2$ & $\textbf{86.5}\pm 0.3$ & $\textbf{94.9}\pm 0.2$ & $\textbf{70.5}\pm 0.1$ & $\textbf{87.0}$ \\ \midrule
        
        \multicolumn{7}{c}{\textit{Out-of-domain results }} \\
        \midrule
        ERM &  $85.5\pm 0.2$ & $77.3\pm 0.4$ & $66.5\pm 0.3$ & $46.1\pm 1.8$ & $43.8\pm 0.1$ & $63.9$\\ 
        CORAL (SOTA) & $86.2\pm 0.3 $ & $78.8\pm0.3 $ & $68.7\pm0.3 $ & $47.6\pm1.0 $ & $41.5\pm0.1 $ & $64.5 $ \\
        \midrule
        SAM & $85.8\pm 0.2$ & $79.4\pm 0.1$ & $69.6\pm 0.1$ & $43.3\pm 0.7$ & $44.3\pm 0.0$ & $64.5$\\
        \textbf{Domain-Inspired}& $87.3\pm 0.2$ & $80.1\pm 0.5$ & $70.7\pm 0.2$ & $47.9\pm 0.8$ & $45.8\pm 0.2$ & $66.4$ \\
        \midrule
        GSAM & $85.9\pm 0.1$ & $79.1\pm 0.2$ & $69.3\pm 0.0$ & $47.0\pm 0.8$ & $44.6\pm 0.2$ & $65.1$\\
        \textbf{Domain-Inspired} & $87.2\pm 0.3$ & $80.0\pm 0.3$ & $70.8\pm 0.3$ & $\textbf{50.6}\pm 1.2$ & $45.6\pm 0.1$ & $66.8$ \\
        \midrule
        SAGM & $86.6\pm 0.2$ & $80.0\pm 0.3$ & $70.1\pm 0.2$ & $48.8\pm 0.9$ & $45.0\pm 0.2$ & $66.1$\\ 
        \textbf{Domain-Inspired} & $\textbf{87.5}\pm 0.3$ & $\textbf{80.7}\pm 0.2$ & $\textbf{71.0}\pm 0.2$ & $50.0\pm 1.2$ & $\textbf{46.0}\pm 0.1$ & $\textbf{67.0}$ \\ 
        \rowcolor{gray!25}    \quad \quad \quad + CORAL & $88.4\pm0.3$ & $81.2\pm0.4$ & $71.7\pm0.2$ & $51.7\pm0.3 $ & $46.3\pm0.2 $ & $67.9$ \\
    \bottomrule[1.5pt]
        
    \end{tabular}
    }
    \label{tab:main_total}
    \vspace{-5pt}
\end{table}

\begin{table}[t]
\setlength{\tabcolsep}{7pt}
    \centering
    \caption{\textbf{Comparison with state-of-the-art domain generalization methods based on CLIP with ViT-B/16.} Out-of-domain accuracies on five datasets from DomainBed.}
    \vspace{-5pt}
    \scalebox{0.93}{
    \begin{tabular}{c c c c c c c}
    \toprule[1.5pt]
        \textbf{Algorithm} & \texttt{PACS} & \texttt{VLCS} & \texttt{OfficeHome} & \texttt{TerraInc} & \texttt{DomainNet} & \textbf{Avg.} \\ 
        \midrule[1pt]
        Zero-shot &  $96.2$ & $81.7$ & $82.0$ & $33.4$ & $57.5$ & $70.2$\\ \midrule
        CoOp & $96.8$ & $81.2$ & $84.2$ & $44.9$ & $59.9$ & $73.4$\\
        + SAM & $97.1 \pm 0.1$ & $81.3 \pm 0.8$ & $84.6 \pm 0.2$ &$47.7 \pm 1.3$ & $60.3 \pm 0.2$ & $ 74.2$ \\
          + DISAM & $\textbf{97.2}\pm 0.1$ & $81.8\pm 0.4$ & $84.8\pm 0.2$ & $49.5\pm 1.2$ & $60.6\pm 0.2$ & $74.8$ \\
        \midrule
        ERM & $96.1\pm0.5$ & $83.0\pm0.2$ & $83.3\pm0.3$ & $60.9\pm0.2$ & $59.9\pm0.1$ & $76.7$ \\
        \textcolor{grey}{CLIPOOD\footnotemark[1]} & \textcolor{grey}{$97.3\pm 0.1$} & \textcolor{grey}{$85.0\pm 0.4$} & \textcolor{grey}{$87.0\pm 0.2$} & \textcolor{grey}{$60.4\pm 0.7$} & \textcolor{grey}{$63.5\pm 0.1$} & \textcolor{grey}{$78.6$}\\ 
        $\text{CLIPOOD}^{*}$\footnotemark[2] & $96.6\pm 0.4$ & $84.1\pm 0.3$ & $86.1\pm 0.2$ & $59.7\pm 0.8$ & $63.1\pm0.1$ & $77.9$ \\
        + SAM & $96.9 \pm 0.2$ & $84.3\pm0.6$ & $84.4 \pm 0.4$ & $60.0 \pm 1.4$ & $58.6 \pm 0.2$ & $ 76.9$ \\
        + DISAM & $97.1\pm 0.1$ & $\textbf{85.6}\pm0.2$ & $\textbf{86.6}\pm0.0$ & $\textbf{61.1}\pm 0.7$ & $\textbf{63.6}\pm0.1$ & $\textbf{78.8}$ \\
    \bottomrule[1.5pt]
        
    \end{tabular}
    }
    \vspace{-10pt}
    \label{tab:main_total_clip}
\end{table}

\subsection{Performance under ResNet50 backbone}
We propose incorporating our \emph{domain-inspired} adaptive adjustment into three SAM-based methods: SAM~\citep{sam_first_paper}, GSAM~\citep{GSAM}, and SAGM~\citep{SAGM} on five datasets of DomainBed with ResNet50 backbone.
Table~\ref{tab:main_total} shows that our \textbf{Domain-Inspired} SAM can mitigate issues arising from SAM's training under domain shifts, by comparing averaged in-domain and out-of-domain performance of leading SAM methods, with and without DISAM. \textit{In-domain results} show domain-inspired perturbations enhance convergence, especially on the TerraInc dataset with substantial domain gaps. In \textit{Out-of-domain results}, DISAM consistently improves generalization, with average improvements of 1.9\% for SAM, 1.7\% for GSAM, and 1.9\% for SAGM. \textit{Notably, SAM performs well when the performance gap between in-domain and out-of-domain is small but worse than ERM on datasets like TerraInc with large gaps, which proves our analysis of SAM's shortcomings under domain shifts.} This shows SAM's inconsistent convergence for large domain shifts, which DISAM addresses by incorporating domain-inspired adaptive adjustments based on domain-level convergence degree. Incorporating CORAL constraints, a recognized effective traditional DG method on DomainBed improves SAGM with DISAM and sets new state-of-the-art results on all settings.

\subsection{Performance under CLIP-based pretrained large model}

\footnotetext[1]{The original results reported in \citet{shu2023CLIPood}.}
\footnotetext[2]{Our reproduced results are based on their official code at \url{https://github.com/thuml/CLIPood}.}

The CLIP-based large pretrained models~\citep{radford2021learning_clip} show great zero-shot performance but struggle with domain shifts in downstream tasks. We assess DISAM's out-of-domain results on CLIP using the experimental setup of CLIPOOD~\citep{shu2023CLIPood}. We test two downstream adaptation methods: CoOp~\citep{zhou2022learning_coop}, an effective prompt learning approach, and CLIPOOD, an image encoder finetuning approach for DG problems. For CoOp, we use a 16-length learnable generic prompt and 5000 training steps, and For CLIPOOD settings, we follow~\citet{shu2023CLIPood}.
As shown in Table~\ref{tab:main_total_clip}, DISAM effectively mitigates the impact of domain shifts on model generalization during downstream task adaptation.
In addition, as CoOp and CLIPOOD$^*$ primarily focus on rapid adaptation with limited parameters, the overfitting risk can be alleviated through early stopping, resulting in the relatively marginal improvements for DISAM in Table~\ref{tab:main_total_clip}. Despite this, when handling challenging tasks like TerraInc and DomainNet, our approach still exhibits substantial enhancements.

\begin{table}[t]
\setlength{\tabcolsep}{25pt}
    \centering
    \caption{Accuracy on OfficeHome and DomainNet with \textbf{both domain shifts and open classes}.}
    \scalebox{0.85}{
    \begin{tabular}[0.1]{c@{\hspace{3pt}} c@{\hspace{3pt}} c@{\hspace{3pt}} c@{\hspace{8pt}} c@{\hspace{8pt}} c@{\hspace{8pt}} c@{\hspace{8pt}} c@{\hspace{8pt}} c@{\hspace{8pt}} c@{\hspace{8pt}} c@{\hspace{8pt}} c@{\hspace{8pt}} c@{\hspace{5pt}} }

    \toprule[1.5pt]
        \multirow{2}{*}{\textbf{Split}} & \multirow{2}{*}{\textbf{Algorithm}} &  \multicolumn{4}{c}{\texttt{OfficeHome}}  & \multicolumn{6}{c}{\texttt{DomainNet}} & \multirow{2}{*}{\textbf{Avg.}}\\ \cmidrule(lr){3-6} \cmidrule(lr){7-12}
         &  & A & C & P & R & C & I & P & Q & R & S & \\ \midrule
        \multirow{7}{*}{\large \textbf{Base}}& \tlp{Zero-shot} & $86.7$ & $75.9$ & $89.6$ & $ 92.2 $ & $ 72.6$ & $ 51.8 $ & $ 65.4 $ & $ 13.6 $ & $ 83.5 $ & $ 67.2 $ & $ 72.6$\\
        & \tlp{{$\text{CoOp}^{*}$}} &  $87.3$ & $76.7$ & $92.2$ & $92.5$ & $74.6$ & $58.2$ & $67.9$ & $15.0$ & $83.7$ & $69.9$ & $74.4$ \\
        & \trp{+SAM} & $89.2$ & $79.6$ & $\textbf{93.0}$ & $93.7$ & $73.5$ & $ 58.4$ & $ 67.8$ & $ 14.8$ & $ 83.6$ & $ 69.5$ & $75.1$ \\
        & \trp{+DISAM} & $88.0$ & $\textbf{80.5}$ & $92.7$ & $92.4$ & $75.0$ & $59.9$ & $68.7$ & $14.9$ & $84.4$ & $70.5$ & $75.3$ \\
        & \tlp{$\text{CLIPOOD}^{*}$} & $88.9$ & $79.5$ & $92.2$ & $94.0$ & $76.3$ & $58.7$ & $69.9$ & $17.5$ & $85.6$ & $72.4$ & $76.0$\\
        & \trp{+SAM} & $89.1$ & $78.9$ & $92.3$ & $94.1$ & $\textbf{78.7}$ & $\textbf{62.1}$ & $\textbf{72.0}$  & $19.9$ & $\textbf{86.5}$ & $\textbf{73.5}$ & $\textbf{77.0}$ \\
        & \trp{+DISAM} & $\textbf{89.7}$ & $79.4$ & $92.7$ & $\textbf{94.2}$ & $77.1$ & $61.8$ & $71.5$ & $\textbf{20.0}$ & $86.0$ & $73.1$ & $\textbf{77.0}$ \\ \midrule
        
        \multirow{7}{*}{\large \textbf{New}}& \tlp{Zero-shot} & $76.8$ & $59.7$ & $88.7$ & $ 86.4 $  & $ 69.7$ & $ 45.0 $ & $ 67.0 $ & $ 14.3 $ & $ 83.9 $ & $ 60.8 $ & $ 67.4$\\
        & \tlp{{$\text{CoOp}^{*}$}} &  $73.7$ & $56.4$ & $86.6$ & $85.0$ & $69.7$ & $47.4$ & $67.0$ & $15.2$ & $82.5$ & $61.5$  & $66.3$ \\
        & \trp{+SAM} & $75.2$ & $59.1$ & $89.6$ & $86.0$ & $71.1$ & $ \textbf{49.2}$ & $ 69.3$ & $ 15.4$ & $ 82.2$ & $ 62.9$ &  $67.9$\\
        & \trp{+DISAM} & $79.3$ & $61.5$ & $\textbf{90.9}$ & $88.4$ & $\textbf{72.1}$ & $49.0$ & $\textbf{69.6}$ & $15.5$ & $\textbf{85.5}$ & $62.9$ & $69.6$ \\
        & \tlp{$\text{CLIPOOD}^{*}$} & $75.2$ & $58.6$ & $87.5$ & $85.8$  & $69.3$ & $46.4$ & $67.2$ & $15.2$ & $83.2$ & $60.6$ & $66.9$ \\
        & \trp{+SAM} & $77.2$ & $60.0$ & $89.8$ & $87.6$ & $66.8$ & $45.4$ & $64.9$ & $14.8$ & $82.0$ & $57.1$ & $66.9$ \\
        & \trp{+DISAM} & $\textbf{79.7}$ & $\textbf{62.0}$ & $90.5$ & $\textbf{89.0}$ & $71.8$ & $48.7$ & $68.7$ & $\textbf{17.5}$ & $84.7$ & $\textbf{63.0}$ & $\textbf{69.7}$ \\
        \midrule
        
       \multirow{7}{*}{\large \textbf{Total}} & \tlp{Zero-shot} & $82.6$ & $67.3$ & $88.8$ & $ 89.5 $ & $ 71.4$ & $ 47.1 $ & $ 66.2 $ & $ 13.8 $ & $ 83.4 $ & $ 63.4 $ & $ 69.8$\\ 
         &\tlp{{$\text{CoOp}^{*}$}} &  $81.4$ & $65.7$ & $88.9$ & $88.8$ & $ 71.9$ & $ 51.3$ & $ 67.4$ & $ 15.1$ & $83.1$ & $65.1$ & $70.1$ \\
         & \trp{+SAM} & $83.5$ & $69.1$ & $91.3$ & $90.1$ & $ 72.3$ & $ 52.8$ & $ 68.6$ & $ 15.1$ & $82.9$ & $ 65.7$ & $71.5$ \\
        & \trp{+DISAM} & $84.2$ & $\textbf{70.1}$ & $\textbf{91.7}$ & $90.4$ & $73.4$ & $\textbf{53.2}$ & $69.1$ & $15.2$ & $\textbf{85.0}$ & $65.8$ & $72.2$ \\
        & \tlp{$\text{CLIPOOD}^{*}$} & $83.3$ & $68.8$ & $89.9$ & $90.1$ & $72.5$ & $51.2$ & $68.5$ & $16.4$ & $84.4$ & $65.6 $ & $71.4$\\
        & \trp{+SAM} & $84.2$ & $69.2$ & $91.0$ & $91.0$ & $72.3$ & $52.0$ & $68.4$ & $17.3$ & $84.3$ & $64.0$ & $71.8$ \\
        & \trp{+ DISAM} & $ \textbf{84.6}$ & $69.5$ & $91.3$ & $\textbf{91.2}$ & $\textbf{73.5}$ & $\textbf{53.2}$ & $\textbf{69.4}$ & $\textbf{18.3}$ & $84.9$ & $\textbf{66.3}$ & $\textbf{72.6}$ \\ 
    \bottomrule[1.5pt]
    \vspace{-25pt}
    \end{tabular}
    }
    \label{tab:domain_shift_and_open_class}
\end{table}
\subsection{Performance under open-class generalization} \label{sec:open_class}
\begin{wrapfigure}{r}{0.37\textwidth}
    \centering
    \includegraphics[width=0.37\textwidth]{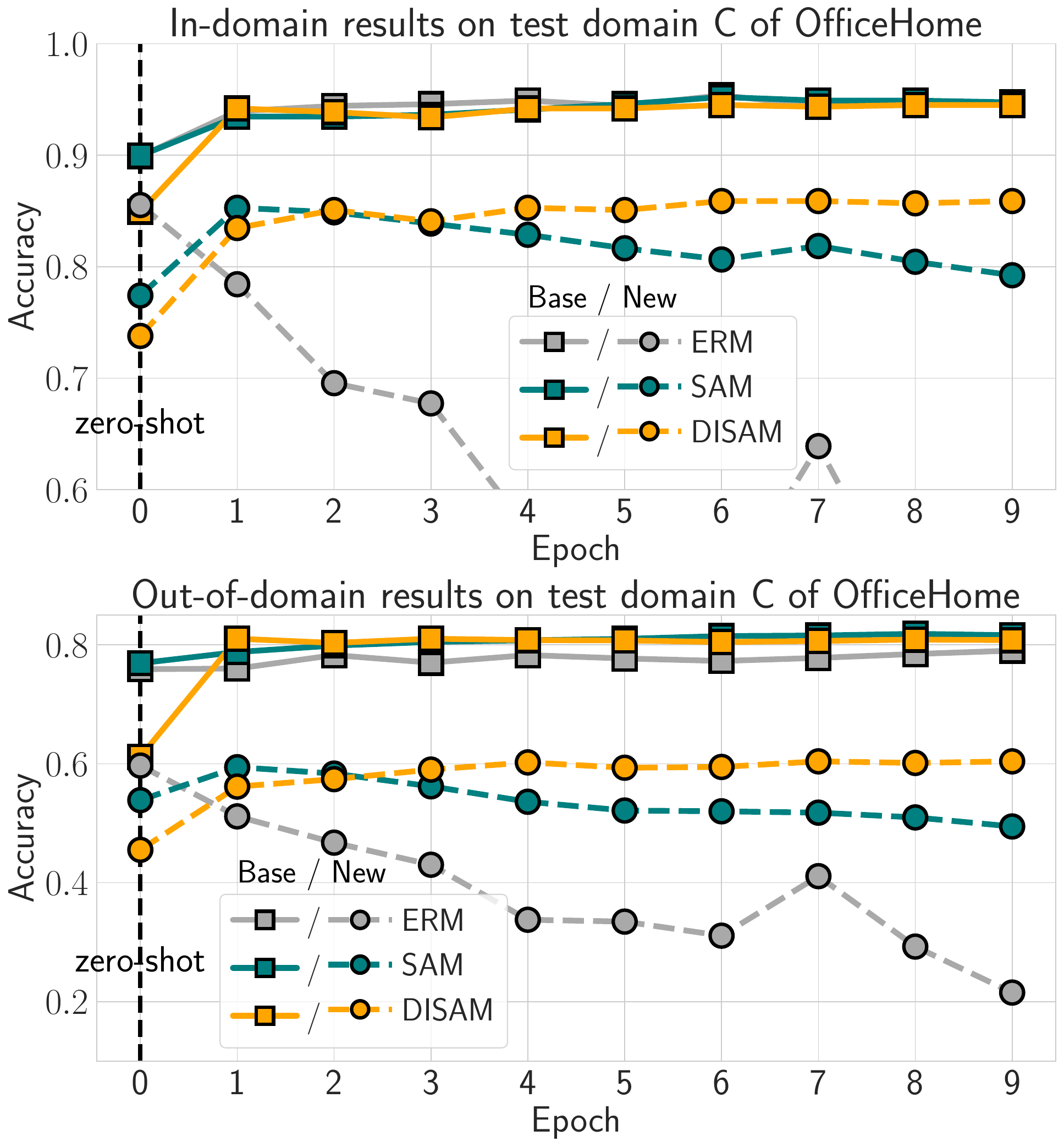}
    \caption{Comparison of CoOp based ERM, SAM and DISAM on accuracy curves for base/new classes. (Top: In-Domain; Bottom: Out-of-Domain)}
    \label{fig:open_class}
\end{wrapfigure}

In this part, we evaluate the performance of our DISAM in a more realistic in-the-wild setting, where both domain shifts and open-class scenarios may arise in the test domain. This setting was first proposed by \citet{shu2023CLIPood}. OfficeHome and DomainNet are selected to conduct related experiments because they offer an ample number of classes suitable for evaluating open-class situations. 
To delineate, we segregate the classes within each dataset into two categories, based on the class ID. The initial half denotes the base classes, and the latter half signifies the new classes. Based on Section~\ref{sec:exp_setup}, we eliminate the data corresponding to new classes in the training domains.
Due to CLIP's open vocabulary property, we can evaluate the new classes on the unseen test domain.

As presented in Table~\ref{tab:domain_shift_and_open_class}, we evaluated the classification accuracy of "Base" classes, "New" classes, and "Total" classes in the test domain, where total classes represent the overall test domain. It revealed that the existing adaptation approach while performing well on base classes, lacks generalization on new classes during the fitting process. Our DISAM mitigates open-class overfitting using domain-level convergence constraints, improving performance by 3.3\% over CoOp and 3.1\% over CLIPOOD.
Figure~\ref{fig:open_class} provides a detailed analysis of open classes and domain shifts dimensions. ERM tends to overfit to both in-domain and base class. While SAM outperforms ERM, it struggles with sharp minima perturbations, failing to effectively escape from them. This difficulty hampers its generalization capabilities in larger models. Please refer to Appendix~\ref{appendix:open_class} for more discussion about DISAM and other methods for open-class generalization.

\subsection{Ablation Studies}
\vspace{-5pt}
\begin{figure}[t!]
  \centering
  \begin{subfigure}[b]{0.2\textwidth}
    \centering
    \includegraphics[width=\textwidth]{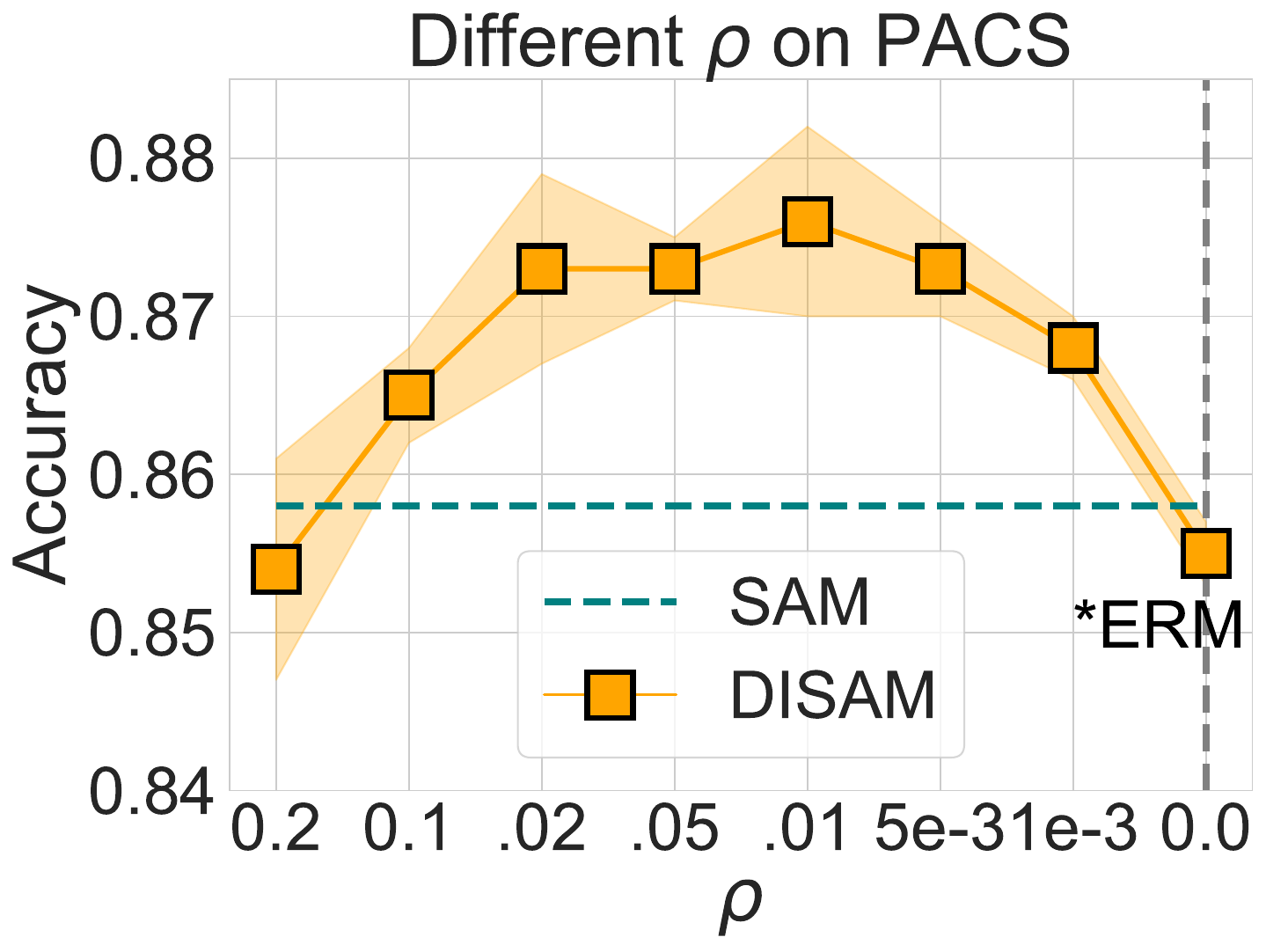}
    \caption{$\rho$ on PACS.}
    \label{fig:rho_pacs}
  \end{subfigure}
  \begin{subfigure}[b]{0.2\textwidth}
    \centering
    \includegraphics[width=\textwidth]{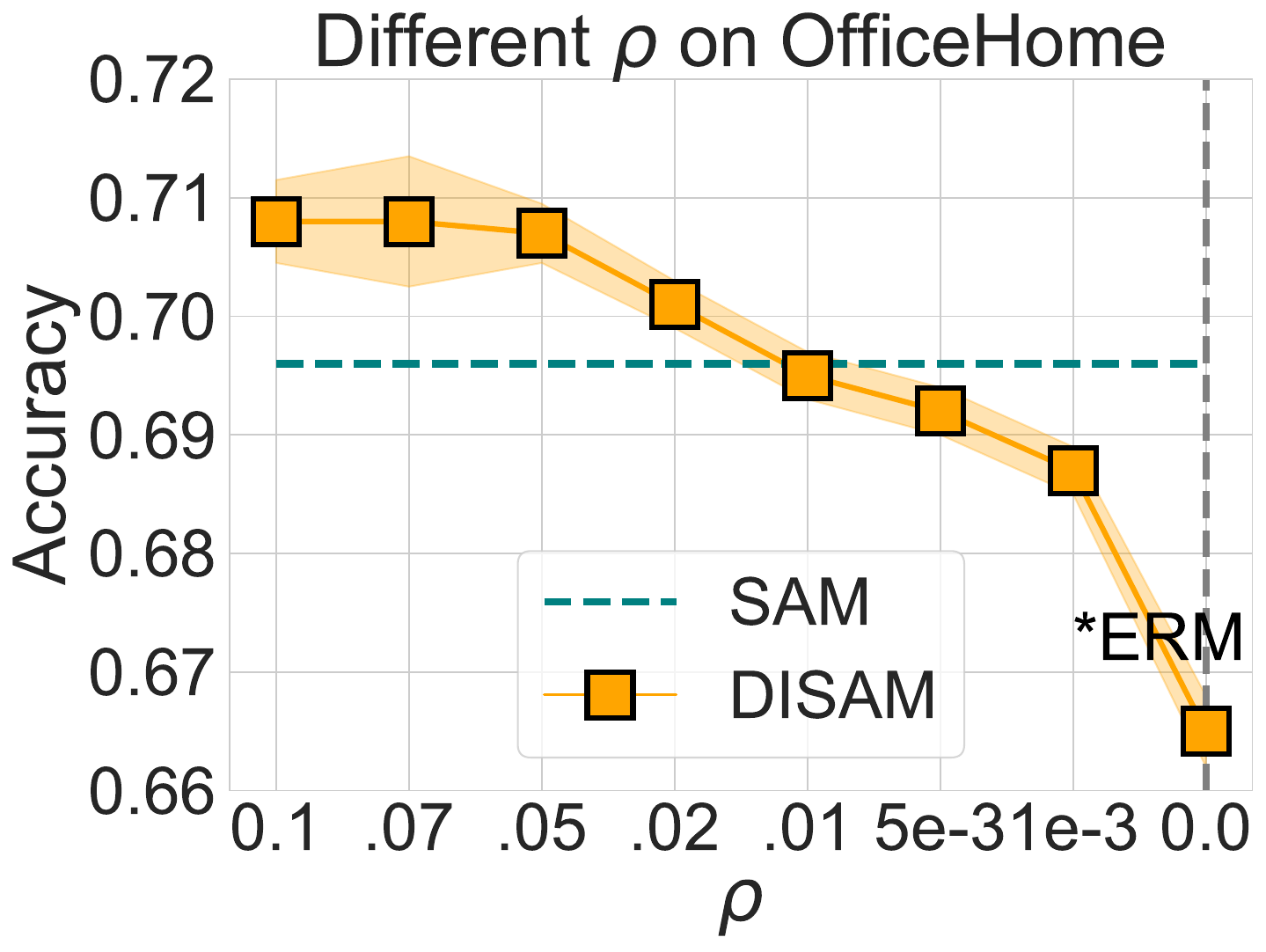}
    \caption{$\rho$ on OfficeHome.}
    \label{fig:rho_officehome}
  \end{subfigure}
  \begin{subfigure}[b]{0.2\textwidth}
    \centering
    \includegraphics[width=\textwidth]{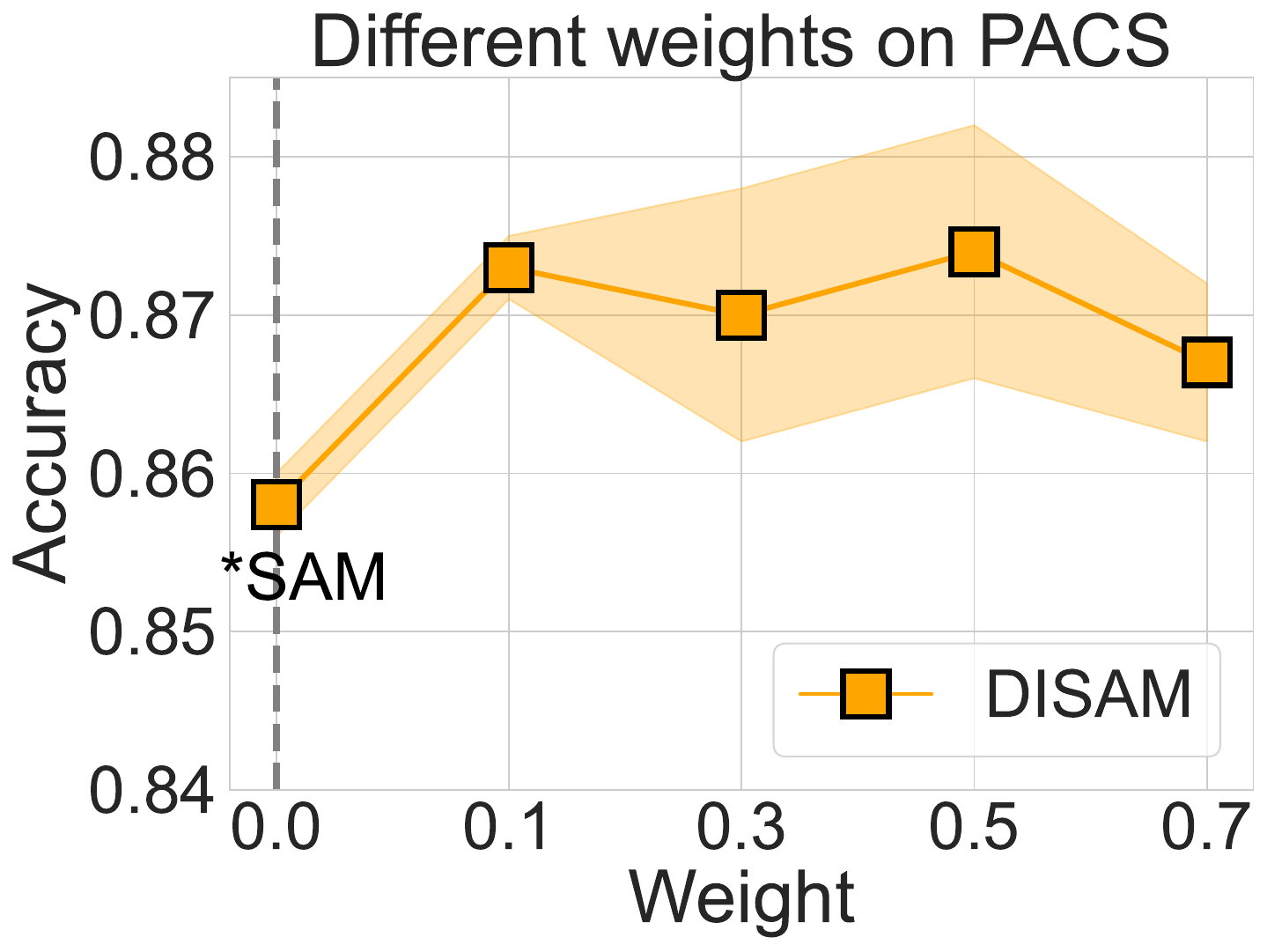}
    \caption{$\lambda$ on PACS.}
    \label{fig:weight_pacs}
  \end{subfigure}
  \begin{subfigure}[b]{0.2\textwidth}
    \centering
    \includegraphics[width=\textwidth]{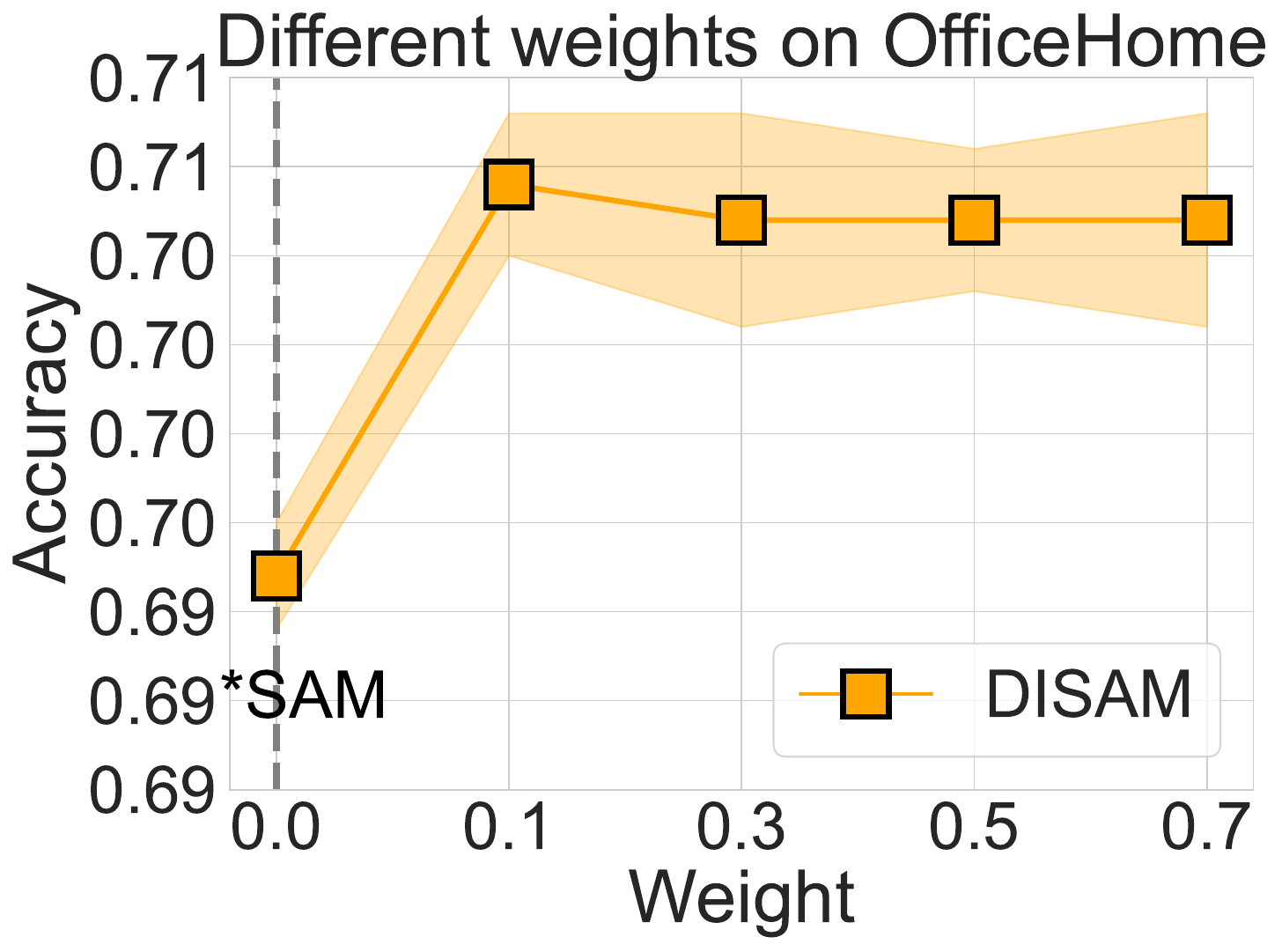}
    \caption{$\lambda$ on OfficeHome.}
    \label{fig:weight_officehome}
  \end{subfigure}
  \caption{Ablation study investigating the sensitivity of hyperparameters, namely \textbf{perturbation amplitude $\rho$} and \textbf{variance constraint weight $\lambda$} in DISAM.}
  \label{fig:ablation_params}
\end{figure}

\newcommand{\myWrapWidth}{0.24\textwidth}
\begin{figure}
    \centering
    \begin{subfigure}[b]{\myWrapWidth}
    \centering
    \includegraphics[width=\textwidth]{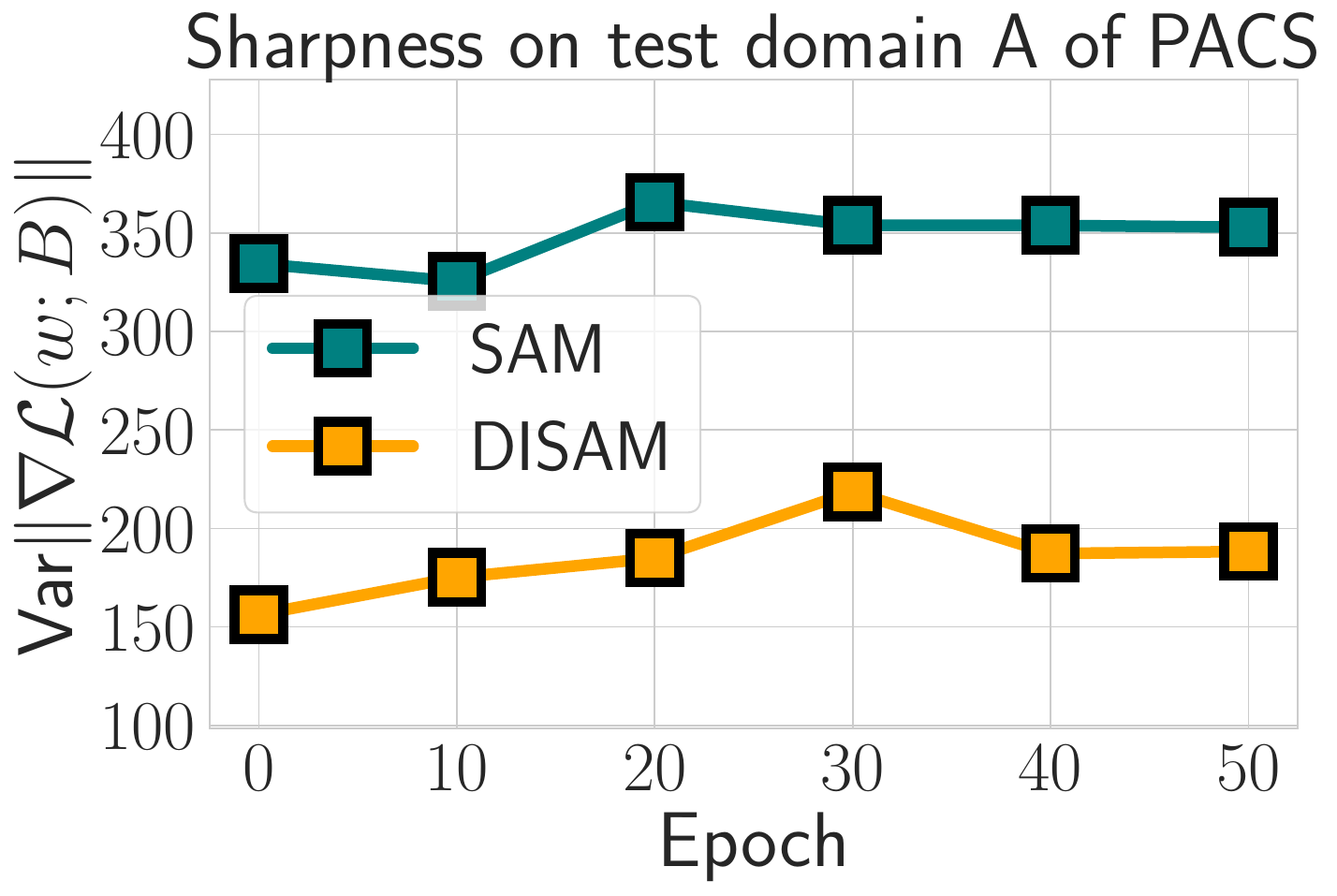}
    \caption{}
    \label{fig:pacs_sharpness_a}
  \end{subfigure}
  \hspace{10pt}
  \begin{subfigure}[b]{\myWrapWidth}
    \centering
    \includegraphics[width=\textwidth]{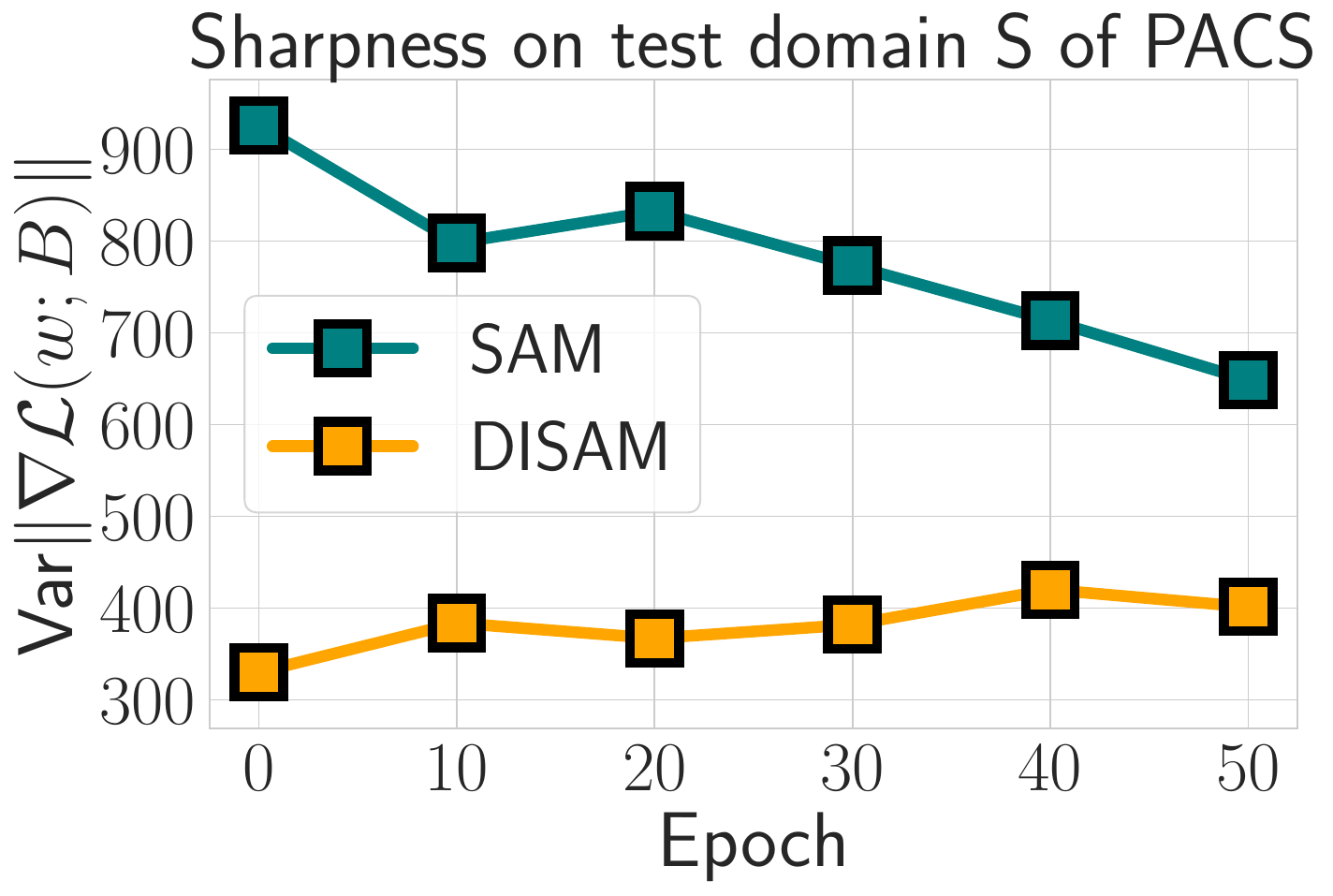}
    \caption{}
    \label{fig:pacs_sharpness_s}
  \end{subfigure}
  \hspace{10pt}
  \begin{subfigure}[b]{\myWrapWidth}
      \centering
      \includegraphics[width=\textwidth]{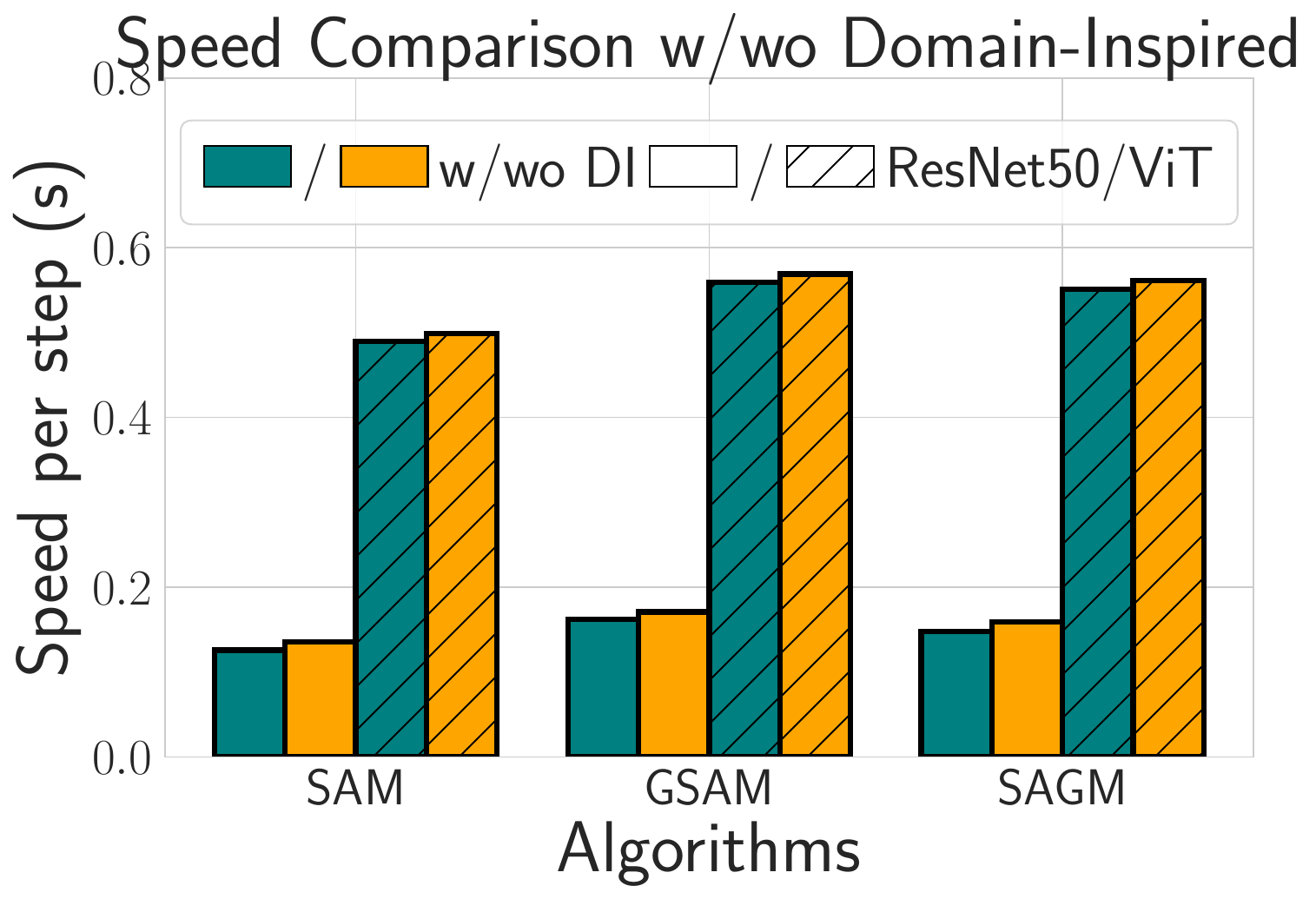}
      \caption{}\label{fig:speed}
  \end{subfigure}
  \vspace{-5pt}
  \caption{ (a) \& (b): \textbf{Sharpness} curves for SAM and DISAM trained on PACS dataset, which show the trend of the estimated sharpness of the model on the test domain. (c): \textbf{Computation cost} with and without Domain-Inspired SAM used on ResNet50 and ViT-B/16 backbone.}
  \vspace{-10pt}
  \label{fig:pacs_sharpness_and_speed}
\end{figure}

\paragraph{Hyperparameter sensitivity.}
We performed a sensitivity analysis of the perturbation amplitude $\rho$, and the variance constraint weight $\lambda$, on the PACS and OfficeHome datasets. The default $\rho$ and $\lambda$ is set to 0.05~\citep{GSAM} and 0.1, respectively. As illustrated in Figure~\ref{fig:ablation_params}(\subref{fig:rho_pacs}) and~\ref{fig:ablation_params}(\subref{fig:rho_officehome}) within a wide range of $\rho$, DISAM consistently achieves stable and superior results compared to SAM. 
However, when $\rho$ is too large or small, experimental results worsen. Large $\rho$ hiders convergence, while small $\rho$ weakens sharpness constraint, both affecting generalization.
As for $\lambda$, Figure~\ref{fig:ablation_params}(\subref{fig:weight_pacs}) and~\ref{fig:ablation_params}(\subref{fig:weight_officehome}) show stable results when $\lambda \in [0.1, 0.7]$. However, larger $\lambda$ values increase the variance due to excessive over-conditioning weight, which can also influence the convergence.

\vspace{-5pt}
\paragraph{Estimated sharpness on unseen test domain.}
Estimating sharpness has a high computational cost. Early methods~\citep{sharp_minima_icml_2017,hochreiter1994simplifying_sharp} relied on Monte Carlo sampling, but recent advancements~\citep{jiang2023an_sharpness_measure,Jiang2020Fantastic_sharpness_measure} use gradient-based approximations for efficiency. We assess model sharpness on unseen test domains at each epoch's end, based on the work of \citet{jiang2023an_sharpness_measure}.
As depicted in Figure~\ref{fig:pacs_sharpness_and_speed}(\subref{fig:pacs_sharpness_a}) and~\ref{fig:pacs_sharpness_and_speed}(\subref{fig:pacs_sharpness_s}), our DISAM achieves much smaller gradient variance $\text{Var}\{\nabla \mathcal{L}(w_t; B_t)\}$ than SAM during the whole training, indicating the incorporation of domain-inspired information can further reduce the sharpness of the loss surface.

\vspace{-5pt}

\paragraph{Computation cost of DISAM.} 
In Figure~\ref{fig:pacs_sharpness_and_speed}(\subref{fig:speed}), we show the extra computational cost from adding \textbf{domain-inspired} perturbation direction generation versus the original algorithm (time cost/step, batch size 32, RTX 3090 GPU). 
Empirical findings show DISAM integration incurs minimal overhead (~0.01s) across algorithms/backbones, linked solely to domain number and batch size, not model size, via strategic domain loss variance constraints for domain-level convergence consistency.

\section{Conclusion}

This paper presents Domain-Inspired Sharpness-Aware Minimization (DISAM), an algorithm that incorporates domain-level convergence consistency into the generation of SAM's perturbations, to address the   dilemma under multiple domains. 
DISAM mitigates SAM's bias in domain shifts that can detrimentally impact the convergence during training, yielding perturbations towards highly converged domains and limiting those in less optimized ones. 
This is achieved by minimizing the variance of domain loss during perturbation generation, enabling an adaptive weight adjustment for each domain based on its convergence degree, thereby enhancing the convergence across training domains and generalization on unseen domains. 
Extensive experiments on the domain generalization benchmarks prove DISAM's superiority over existing methods. In addition, DISAM persistents generalization capabilities under
parameter-efficient fine-tuning with large models like CLIP.

\section*{Ethics statement}
This paper does not raise any ethics concerns. This study does not involve any human subjects, practices to data set releases, potentially harmful insights, methodologies and applications, potential conflicts of interest and sponsorship, discrimination/bias/fairness concerns, privacy and security issues, legal compliance, and research integrity issues.

\section*{Reproducibility Statement}
All experiments were conducted using NVIDIA GeForce RTX 3090 GPU, Python 3.9.15, Pytorch 1.12.1, and clip 1.0. Further details regarding experimental setups and implementation can be found in Section \ref{sec:exp_setup} and Appendix \ref{appendix:detailed_results}, while theoretical proofs are provided in Appendix \ref{appendix:proof}. The principal code for implementing Domain-Inspired SAM is presented in Appendix~\ref{app:code}.

\section*{Acknowledgments}
This work is supported by the National Key R\&D Program of China (No. 2022ZD0160702), STCSM (No. 22511106101, No. 18DZ2270700, No. 21DZ1100-100), 111 plan (No. BP0719010), and State Key Laboratory of UHD Video and Audio Production and Presentation. Ruipeng Zhang and Ziqing Fan are partially supported by Wu Wen Jun Honorary Doctoral Scholarship, AI Institute, Shanghai Jiao Tong University.

\clearpage
\newpage

\appendix
\onecolumn

\newpage

\addcontentsline{toc}{section}{Appendix} 
\renewcommand \thepart{} 
\renewcommand \partname{}
\part{\Large{~~~~~~~~~~~~~~~~~~~~~~~~~~~~~~~~~~~~~~~~~~~~~~~Appendix}}
\parttoc 

\newpage

\section{Related Work}

\subsection{Sharpness-Aware Minimization (SAM)}
Numerous studies~\citep{first_flatness_nips1994,loss_landscape_nips_2018,sharp_minima_icml_2017} have been conducted to enhance our understanding of the generalization capabilities of deep learning models through an exploration of the geometric properties of the loss landscape. These investigations have consistently demonstrated that deep neural networks exhibiting a flat minimum tend to exhibit superior generalization performance. In order to obtain a flat minimum, the \emph{Sharpness-Aware Minimization} (SAM) approach~\citep{sam_first_paper} was proposed, which utilizes a base optimizer to simultaneously minimize both the vanilla training loss and the sharpness metric. The sharpness metric, as defined by~\citep{keskar2017on}, quantifies the flatness of a minimum through the eigenvalues of the Hessian matrix.
In practice, SAM involves obtaining a fixed-length perturbation through gradient ascent on the initial parameter, followed by updating the gradient based on this perturbed parameter with respect to the initial parameter.
Although SAM can result in a flat minimum and substantially enhance the generalization capability, it incurs a twofold increase in computational overhead. 
The variants of SAM have been extensively investigated from two perspectives: the first pertains to the enhancement of SAM's generalizability~\citep{ASAM,GSAM,GAM,PGN,SAGM}, while the second focuses on improving its efficiency~\citep{looksam, ESAM, SAF, SparseSAM}.

\subsubsection{Generalizability improvement of SAM}
One key problem of SAM is that the perturbation obtained by gradient ascent might disagree with sharpness since gradient ascent is only a first-order approximation of the sharpness calculation. 
\citet{GSAM} introduced a surrogate gap to enhance the evaluation of sharpness, while~\citep{SAGM} integrated the perturbed loss and the surrogate gap from~\citep{GSAM} into a unified objective. Additionally,~\citep{PGN} revealed that SAM inherently optimizes both the empirical risk loss and the corresponding gradient norm.
Besides, FisherSAM~\citep{kim2022fisherSAM} and ASAM~\citep{ASAM} achieved improved perturbations by adjusting the scales of the perturbation magnitudes.~\citep{GAM} further proposed Gradient norm Aware Minimization (GAM), which regularized the Hessian of the gradient norm. VaSSO~\citep{vasso} focuses on addressing the issue of SAM's subpar performance in perturbation direction generation due to the noise introduced by mini-batch sampling.

\subsubsection{Efficiency improvement of SAM}

Due to the doubled overhead of SAM in comparison to a base optimizer like SGD (Stochastic Gradient Descent), considerable efforts have been devoted to mitigating this overhead.~\citep{looksam} introduced LookSAM as a means to reduce the number of perturbations. Meanwhile,~\citep{SparseSAM} achieved sparse perturbations through the use of a binary mask. Furthermore, Du~\etal explored various proxy methods (ESAM~\citep{ESAM}, SAF~\citep{SAF}) for computing perturbations, thereby replacing the gradient ascent derivation process employed in SAM.

\subsection{Domain Generalization}

Domain generalization is a vital research direction that focuses on training models capable of generalizing well to unseen domains by leveraging knowledge from multiple source domains~\citep{9782500_TKDE_DGsurvey}. Over the past decade, several methods have been proposed to address the challenges of domain generalization. These methods can be broadly categorized into five main approaches: domain alignment, meta-learning, domain hallucination, domain disentanglement, and robustness training. In this section, we provide a brief overview of each of these categories.

\subsubsection{Domain alignment-based method}
The goal of domain alignment is to mitigate discrepancies among distinct source domains by aligning the marginal feature distributions to extract domain-invariant representations. This objective can be accomplished using various strategies, including adversarial training~\citep{li2018deep_adv,shao2019multi_adv}, maximum mean discrepancy~\citep{li2018domain_mmd}, moment matching~\citep{sun2016coral}, self-supervised learning~\citep{wang2020learning_self}, or contrastive learning~\citep{kim2021selfreg_contrastive,motiian2017unified_contrastive,zhou2022contrastive,zhou2023combating}. 
All of these methods improve generalization across unseen domains by either directly or indirectly reducing the discrepancy between different feature distributions and imposing domain-invariant constraints on these discriminative features.

\subsubsection{Meta Learning-based Methods}
These approaches aim to address unforeseen domain shifts and enhance the generalizability of models to such shifts through meta-optimization, achieved by partitioning the training domains into distinct meta-train and meta-test domains.~\citep{Li2018LearningTG_meta} first introduced meta learning into DG, following the concept of Modal-Agnostic Meta-Learning (MAML)~\citep{finn2017model_maml}. Subsequently,~\citep{balaji2018metareg_meta} designed a weight regularizer based on the meta-learning framework, while~\citep{li2019feature_meta} chose to meta-learn a feature critic loss.~\citep{dou2019domain_self} constrained the invariance of learned semantic relations between the meta-train and meta-test domains. Additionally,~\citep{9737135_TMM_meta} integrated meta learning into a Bayesian framework and enforced the model to learn a meta-variational distribution to enhance knowledge transfer.

\subsubsection{Domain hallucination-based methods}
Domain hallucination, also known as data augmentation in the presence of domain shifts, aims to encompass a wider range of domain variations by generating additional training samples from fictional domains while preserving their semantic integrity.
Early approaches such as~\citep{xu2021fourier_aug,xu2023fourier_aug,zhang2022semi,xu2023simde_advaug,zhou2020deep_advaug,yan2020improve_aug,zhou2020learning_aug,carlucci2019hallucinating_advaug,10073578_TMM_domain_hallu} involve cross-domain data augmentation in the input space and can be categorized into non-parametric and adversarial sample-based approaches. Non-parametric methods~\citep{xu2021fourier_aug,xu2023fourier_aug,yan2020improve_aug,zhang2022semi} employ traditional image transformations to achieve enhancement, while adversarial sample-based methods~\citep{xu2023simde_advaug,zhou2020deep_advaug,zhou2020learning_aug,carlucci2019hallucinating_advaug,10073578_TMM_domain_hallu} generate samples from a new domain through adversarial training. Adversarial training ensures the quality of generation by enforcing consistency in terms of category among the samples from the generated fictional domain.
Some recent work focuses on augmentation in the latent space~\citep{9582738_TMM_domain_hallu,zhou2021domain_aug}, which achieves more efficient augmentation perturbations by applying perturbations to the latent features to improve the generalization of the model.

\subsubsection{Domain disentanglement-based methods}
In contrast to the majority of domain generalization approaches that aim for domain invariance, disentanglement-based approaches focus on separating the domain-invariant and domain-specific components. 
To achieve this,~\citet{seo2020learning_disentangle} introduced domain-specific batch normalization~\citep{chang2019domain_disentangle} for each training domain, effectively balancing feature discrimination and invariance. In a similar vein,\citet{9513542_TMM_domain_disentangle} designed a style restitution module that encourages the separation of task-relevant and task-irrelevant features. Furthermore,~\citet{10093034_TMM_domain_disentangle} proposed a two-stage distillation approach, aimed at learning a domain-invariant representation while preserving domain-specific features.

\subsubsection{Robustness training-based methods} \label{appendix:related_robustness}

\begin{table}[ht]
    \centering
    \caption{Comparison of SAM-based methods and other robustness training-based methods on the optimization objective.}
    \scalebox{0.688}{
    \small
    \begin{tabular}{c|c c c}
    \toprule[1.5pt]
       Method  & \textbf{Total Optimization Function} & \textbf{Optimization on $w$} & \textbf{Optimization on $\epsilon$ } \\ \midrule
 ERM    & $\begin{aligned} \min _{w} \sum_{i=1}^M  \alpha _i \mathcal{L} _{i}(w) \end{aligned}$          &           Same to left               &  $\times$ \\
 V-REx    & $\begin{aligned}\min_{w} \sum_{i=1}^M \alpha _i \mathcal{L} _{i}(w) + \beta \text{Var} \{ \mathcal{L} _i(w) \}_{i=1}^M\end{aligned}$                                                                   &     Same to left         &  $\times$  \\
Fish   & $\begin{aligned}\min_{w} \sum_{i=1}^M  \alpha _i \mathcal{L} _{i}(w) - \gamma \frac{2}{M(M-1)} \sum_{i,j \in [1,M]}^{i \neq j} \nabla \mathcal{L}_{i}(w) \cdot \nabla \mathcal{L}_{j}(w)\end{aligned}$  &     Same to left        & $\times$ \\
Fishr   & $\begin{aligned}\min_{w} \sum_{i=1}^M  \alpha _i \mathcal{L} _{i}(w) - \lambda \frac{1}{M} \sum_{i=1}^{M} \| \nabla \mathcal{L}_{i}(w) - \nabla \mathcal{L}(w)\|^2\end{aligned}$                        & Same to left & $\times$ \\
SAM     & $\begin{aligned}\min_{w}  \max_{\| \epsilon\|_2 \leq \rho}  \sum_{i=1}^M \alpha_i \mathcal{L}_i(w + \epsilon)\end{aligned}$                                                                             & $\begin{aligned}\min_{w}  \sum_{i=1}^M \alpha_i \mathcal{L}_i(w + \epsilon)\end{aligned}$   & $\begin{aligned}\max_{\| \epsilon\|_2 \leq \rho}  \sum_{i=1}^M \alpha_i \mathcal{L}_i(w + \epsilon)\end{aligned}$ \\
DISAM   & $\begin{aligned}\min_{w}  \max_{\| \epsilon\|_2 \leq \rho} \left[ \sum_{i=1}^M \alpha_i \mathcal{L}_i(w + \epsilon) -  \lambda \text{Var}\{\mathcal{L}_i(\hat{w} + \epsilon)\}_{i=1}^M \right]\end{aligned}$                                                                                                                &                     $\begin{aligned}\min_{w}  \sum_{i=1}^M \alpha_i \mathcal{L}_i(w + \epsilon)\end{aligned}$     & $\begin{aligned}\max_{\| \epsilon\|_2 \leq \rho} \left[ \sum_{i=1}^M \alpha_i \mathcal{L}_i(w + \epsilon) -  \lambda \text{Var}\{\mathcal{L}_i(w + \epsilon)\}_{i=1}^M \right]\end{aligned}$  \\
\bottomrule[1.5pt]
    \end{tabular}
    }
    \label{tab:optimization_compare}
\end{table}

The objective of robustness training-based methods is to incorporate constraints that enhance the model's robustness or flatness during the training process. Robustness-related methods aim to learn domain-invariant representations by employing a technique known as Invariant Risk Minimization (IRM)~\citep{arjovsky2019IRM}. By minimizing the risk across different domains, these methods~\citep{arjovsky2019IRM,krueger2021out_rex,norton2021diametrical_DRM, Fish, Fishr, vasso} seek to learn features that are insensitive to domain variations, thereby improving the model's ability to generalize. On the other hand, a separate class of flatness-related methods~\citep{izmailov2018averaging,cha2021swad,zhang2023federated_GA,sam_first_paper, SAGM} aims to address the effects of domain shifts by identifying flat minima. These methods strive to find regions in the loss landscape where small perturbations in the input have minimal impact on the model's predictions. By leveraging flat minima, these methods enhance the model's robustness to domain variations.

In Table~\ref{tab:optimization_compare}, we provide the comparison of optimization objectives between representative algorithms in the two categories. Domain-invariant methods solely concentrate on optimizing the parameter $w$. For instance, V-REx~\citep{krueger2021out_rex} directly minimizes the variance of the domain loss, which can have a detrimental effect on convergence. Similarly, Fish~\citep{Fish} and Fishr~\citep{Fishr} impose constraints on gradient updates. SAM-based methods require the estimation of sharpness, so in addition to optimizing the parameters $w$, they also need to optimize the perturbation directions $\epsilon$.

This paper primarily concentrates on the flatness-based method, which encompasses two main approaches for enhancing the flatness of the model. The first approach involves leveraging the self-ensemble of multiple minima attained during the training process to passively acquire a result that favors flatness minima. Notable examples of this approach include Stochastic Weight Averaging (SWA)~\citep{izmailov2018averaging} and Stochastic Weight Averaging Densely (SWAD)~\citep{cha2021swad}. The second approach involves directly optimizing for flatness and is referred to as Sharpness-Aware Minimization (SAM)~\citep{sam_first_paper}. In the subsequent section, we will delve into a comprehensive review of the relevant literature pertaining to these approaches.

\section{Details of DISAM}
\label{appendix:proof}

\subsection{Comparative Analysis of DISAM Versus General Convergence Consistency}
\label{sec:DISAM_VREx_compare}
Here, we present a thorough examination of the distinctions between our proposed DISAM framework and the broader, conventional convergence consistency issue like V-REx\citep{krueger2021out_rex} and Fishr\citep{Fishr}. Specifically, we address the following aspects:

\begin{itemize}
    \item \textbf{Distinct Focus:} DISAM focuses on the issue where SAM-based methods are unable to accurately estimate sharpness in domain shift scenarios, leading to the ineffective sharpness minimization and reduction in generalization performance.
    \item \textbf{Enhancing on Top of General Methods:} While traditional solutions\citep{krueger2021out_rex,Fishr,Fish} aim at convergence consistency in parameter optimization, DISAM's methodology is distinct and orthogonal. It builds upon methods like V-REx\citep{krueger2021out_rex} and Fishr\citep{Fishr}, but goes further in enhancing out-of-domain generalization through better sharpness minimization. This is evident in our experiments, where combining DISAM with Fishr results in significant performance gains (shown in Table~\ref{tab:performance_gains_vrex_fishr}).
\end{itemize}

We also provide extensive experimental results to validate DISAM's effectiveness and its practical implications in various domain-shift scenarios.

\begin{table}[ht]
\setlength{\tabcolsep}{6.5pt}
\centering
\caption{Comparison with other general convergence consistency methods.}
\begin{tabular}{c c c c c c  c}
\toprule[1.5pt]
\textbf{Algorithm} & \texttt{PACS} & \texttt{VLCS} & \texttt{OfficeHome} & \texttt{TerraInc} & \texttt{DomainNet} & \textbf{Avg.} \\ \midrule
  V-REx   & 84.9 & 78.3 & 66.4 & 46.4 & 33.6 & 61.9 \\
  \textbf{V-REx + DISAM} & 85.8 & 78.4 & 70.5 & 45.9 & 42.3 & 64.6 \\ \midrule
  Fishr  & 86.9  & 78.2 & 68.2 & 53.6 & 41.8 & 65.7 \\
  \textbf{Fishr + DISAM} & 87.5 & 79.2 & 70.7 & 54.8 & 43.9 & \textbf{67.2} \\
\bottomrule[1.5pt]
\end{tabular}
\label{tab:performance_gains_vrex_fishr}
\end{table}

It is imperative to reiterate the contributions of our DISAM. We provide a detailed exposition of how simplistic applications of SAM compromise training robustness, especially when dealing with domain shifts. DISAM strategically mitigates these issues by finely tuning the perturbation vectors and their location points, thus significantly enhancing model generalization. Furthermore, we underscore the notable enhancements achieved with DISAM, as corroborated by comprehensive experimental analyses and the ensuing performance metrics.

\subsection{Algorithm of DISAM}
We give specific algorithmic details for DISAM in Algorithm~\ref{alg:DSAM}, and the python code implementation is in Appendix~\ref{app:code}.
\begin{algorithm}[t]
\caption{Domain-Inspired Sharpness-Aware Minimization (DISAM).}
\label{alg:DSAM}
\begin{algorithmic}[1]
\REQUIRE Source Domains $\mathcal{S} = \{D_1, \cdots, D_M\}$, initial model $w_1$, perturbation ratio $\rho$, variance constraint weight $\lambda$, learning rate $\eta_t$, training iterations $T$.
\ENSURE Generalization model $w_T$.
\FOR{$t$ in $1\cdots T$}
\STATE Sample mini-batch $B = \{B_1, \cdots, B_M\} \subseteq \mathcal{S}$, where $B_i \subseteq D_i$ and $|B_i| \geq 0$.
\STATE Compute the domain-inspired loss gradient: \\
$\nabla \mathcal{L}_{DI}(w_t;B) = \nabla \mathcal{L}(w_t;B) - \lambda \nabla \text{Var}\{\mathcal{L}_i(w_t);B_i\}_{i=1}^M$.
\STATE Get the perturbation weight: $w^{asc}_t = w_t+\rho \frac{\nabla \mathcal{L}_{DI}(w_t;B)}{\|\nabla \mathcal{L}_{DI}(w_t;B)\|}$.
\STATE Update weights: $w_{t+1} = w_t - \eta_t \nabla \mathcal{L}(w^{asc}_t;B)\}$.
\ENDFOR
\end{algorithmic}
\end{algorithm}

\subsection{Proof of DISAM's Convergence}
\begin{theoAppendix}
\textbf{(Convergence During Training). }
Consider a non-convex function $\mathcal{L}(w)$ with Lipschitz-smooth constant $L$ and lower bound $\mathcal{L}_{min}$. With the bounded norm assumption of noisy stochastic gradients ($\| \nabla \mathcal{L}_{p}(w)\|_2 \leq G$) at the t-step, the learning rate $\eta_t = \eta_0 / \sqrt{t}$ and a fixed perturbation amplitude $\rho$, we have: 
\begin{equation}
    \frac{1}{T} \sum_{t=1}^T \mathbb{E} \| \nabla \mathcal{L}_p (w_t) \|_2^2 \leq \frac{\mathcal{L}_p(w_0) - \mathcal{L}_min}{\eta_0} \frac{1}{\sqrt{T}} + \frac{(LG^2 + \rho^2 L \Gamma^2) \eta_0 \log(T)}{\sqrt{T}}
\end{equation}
where in SAM, $\Gamma = L$ and when use DISAM $\Gamma \leq L$.
    
\end{theoAppendix}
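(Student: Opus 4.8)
The plan is to adapt the standard descent-lemma argument for SAM-type updates to track how the perturbation term enters the bound, and to isolate the quantity $\Gamma$ as a Lipschitz-type constant governing the variation of the ascent direction. First I would invoke $L$-smoothness of $\mathcal{L}_p$ (the perturbed loss) to write the one-step inequality $\mathcal{L}_p(w_{t+1}) \leq \mathcal{L}_p(w_t) - \eta_t \langle \nabla \mathcal{L}_p(w_t), g_t\rangle + \tfrac{L}{2}\eta_t^2 \|g_t\|_2^2$, where $g_t = \nabla \mathcal{L}(w_t^{asc};B_t)$ is the actual update direction. The key discrepancy to control is $g_t$ versus $\nabla \mathcal{L}_p(w_t)$: since $w_t^{asc} = w_t + \rho\, \nabla \mathcal{L}_{DI}(w_t)/\|\nabla \mathcal{L}_{DI}(w_t)\|$ is a displacement of norm exactly $\rho$, smoothness gives $\|g_t - \nabla \mathcal{L}(w_t;B_t)\|_2 \leq L\rho$, and a parallel estimate relates $\nabla\mathcal{L}(w_t;B_t)$ to $\nabla \mathcal{L}_p(w_t)$. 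I would combine these to get $\langle \nabla \mathcal{L}_p(w_t), g_t\rangle \geq \|\nabla \mathcal{L}_p(w_t)\|_2^2 - \rho\Gamma\|\nabla \mathcal{L}_p(w_t)\|_2$ in expectation, where $\Gamma$ measures how far the DISAM ascent direction perturbs the gradient; bounding $\|\nabla\mathcal{L}_p(w_t)\|_2 \le G$ and using Young's inequality converts the cross term into $\tfrac12\|\nabla\mathcal{L}_p(w_t)\|_2^2 + \tfrac12\rho^2\Gamma^2$.

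Next I would rearrange into a telescoping form: $\tfrac{\eta_t}{2}\|\nabla\mathcal{L}_p(w_t)\|_2^2 \leq \mathcal{L}_p(w_t) - \mathcal{L}_p(w_{t+1}) + \tfrac{L}{2}\eta_t^2 G^2 + \tfrac12 \eta_t\rho^2 L\Gamma^2$ (absorbing constants), sum over $t=1,\dots,T$, use the lower bound $\mathcal{L}_p(w_t)\geq\mathcal{L}_{min}$ to collapse the telescope to $\mathcal{L}_p(w_0)-\mathcal{L}_{min}$, and substitute $\eta_t = \eta_0/\sqrt t$. Using the standard estimates $\sum_{t=1}^T \eta_t \geq \eta_0\sqrt T$ in the denominator and $\sum_{t=1}^T \eta_t^2 = \eta_0^2\sum 1/t \leq \eta_0^2(1+\log T)$, $\sum_{t=1}^T \eta_t \leq 2\eta_0\sqrt T$ in the numerator, dividing through yields exactly the claimed bound with the $\tfrac{1}{\sqrt T}$ leading term and the $(LG^2+\rho^2 L\Gamma^2)\eta_0\log(T)/\sqrt T$ correction.

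The final and most delicate step is establishing $\Gamma \leq L$ for DISAM while $\Gamma = L$ for SAM. For vanilla SAM the ascent direction is $\nabla\mathcal{L}_{SAM}/\|\nabla\mathcal{L}_{SAM}\|$ and the worst-case sensitivity of the gradient along a unit displacement is exactly the smoothness constant $L$, so $\Gamma=L$. For DISAM the ascent direction is instead $\nabla\mathcal{L}_{DI}/\|\nabla\mathcal{L}_{DI}\|$; by Eq.~(\ref{eq:DISAM_gradient}) this direction differs from the SAM direction by the variance-induced residual that \emph{reduces} the contribution of high-loss (poorly converged, large-gradient) domains and amplifies low-loss ones, i.e. it is more aligned with a ``consistent'' descent direction. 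The argument is that the effective displacement $w_t^{asc}-w_t$ produced by DISAM stays in a region where the directional change of the gradient is controlled by a constant no larger than $L$ — intuitively because the perturbation is canonically correlated with the averaged gradient rather than dominated by a single outlier domain. I expect \emph{this} identification — making precise the sense in which the DISAM perturbation yields $\Gamma\le L$, rather than just asserting it from the sign pattern in Eq.~(\ref{eq:DISAM_gradient}) — to be the main obstacle; the rest is the routine smoothness-plus-telescoping machinery. I would handle it by defining $\Gamma$ as the Lipschitz constant of $\nabla\mathcal{L}$ restricted to the affine segment between $w_t$ and $w_t^{asc}$ along that specific (variance-calibrated) direction, and showing this is bounded by the global $L$ with the inequality strict precisely when the domain losses are inconsistent.
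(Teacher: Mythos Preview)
Your decomposition does not reproduce the claimed bound. If you apply a descent lemma to $\mathcal{L}_p$ and control the bias $\|g_t-\nabla\mathcal{L}_p(w_t)\|\le\rho\Gamma$ via Young's inequality, the resulting error term is $\tfrac12\eta_t\rho^2\Gamma^2$, i.e.\ it carries only one power of $\eta_t$. Summing $\sum_t\eta_t\approx 2\eta_0\sqrt{T}$ and dividing by $T\eta_T=\eta_0\sqrt{T}$ leaves a \emph{non-vanishing} constant $O(\rho^2\Gamma^2)$ in the final bound, not the claimed $\rho^2L\Gamma^2\eta_0\log(T)/\sqrt{T}$. So the step where you say ``yields exactly the claimed bound'' is wrong: the $\rho$-dependent term must enter at order $\eta_t^2$, not $\eta_t$, for both error contributions to share the $\log(T)/\sqrt{T}$ rate.

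The paper achieves this by a different decomposition: it applies $L$-smoothness of $\mathcal{L}$ (not $\mathcal{L}_p$) at the \emph{perturbed} iterates $w_t^{asc}$ and $w_{t+1}^{asc}$, so that $w_{t+1}^{asc}-w_t^{asc}=d_t+(\epsilon_{t+1}-\epsilon_t)$ splits into the parameter update $d_t=-\eta_t g_p^{(t)}$ and the \emph{change in perturbation} $\epsilon_{t+1}-\epsilon_t$. The $\rho$-dependent error is then $L\|\epsilon_{t+1}-\epsilon_t\|^2\le L\rho^2\phi_t^2$, where $\phi_t$ is the angle between the two consecutive unit ascent directions. Since the underlying parameters moved only by $O(\eta_t)$, one gets $\phi_t\le\eta_t\Gamma$ and hence an $O(\rho^2\eta_t^2\Gamma^2)$ contribution---the crucial second power of $\eta_t$.

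This also means your identification of $\Gamma$ is in the wrong place. It is \emph{not} a directional Lipschitz constant along the single segment $[w_t,w_t^{asc}]$---that quantity is bounded by $L$ for any unit direction and provides no mechanism to separate SAM from DISAM. In the paper, $\Gamma$ bounds $\phi_t/\eta_t$, the rate of change of the ascent direction across steps; DISAM's adaptive weights $\beta_t^i$ evolve so that (via a rearrangement-inequality argument) $\sum_i(\beta_{t+1}^i-\beta_t^i)\nabla\mathcal{L}^i(w_{t+1})\le 0$, which makes consecutive perturbations more aligned and yields $\Gamma\le L$. That cross-step correlation, not a within-step directional smoothness, is where the DISAM improvement lives.
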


\begin{proof}
For simplicity of notation, we denote the update at step $t$ as $d_t = - \eta_t g_p^{(t)}$, where $\eta_t$ is the decayed learning rate and $g_p^{t}$ is the expected gradient of perturbation loss $\mathcal{L}_p$. By $L$-smoothness of the loss function $\mathcal{L}$ and the definition of $\mathcal{L}_p(w_t) = \mathcal{L}(w_t^{asc})$, where $w_t^{asc}$ represents the parameters after the perturbation of gradient ascent, we have: 
\begin{equation}
\mathcal{L}_p (w_{t+1}) = \mathcal{L}(w_{t+1}^{asc}) \leq \mathcal{L}(w_{t}^{asc}) + \langle \nabla\mathcal{L}(w_{t}^{asc}), w_{t+1}^{asc} - w_{t}^{asc} \rangle + \frac{L}{2} \| w_{t+1}^{asc} - w_{t}^{asc} \|^2 
\end{equation}
where $L$ is the Lipschitz constant of loss $\mathcal{L}$ and with the definition of $d_t = w_{t+1} - w_t$ and $w_t^{asc} = w_t + \epsilon_t$, we have:

\begin{equation}
\begin{aligned}
\mathcal{L}_p (w_{t+1}) 
&\leq \mathcal{L}(w_{t}^{asc}) + \langle \nabla\mathcal{L}(w_{t}^{asc}), w_{t+1} + \epsilon_{t+1} - w_t - \epsilon_t \rangle + \frac{L}{2} \| w_{t+1} + \epsilon_{t+1} - w_t - \epsilon_t \|^2 \\
&\leq \mathcal{L}(w_{t}^{asc}) + \langle \nabla\mathcal{L}(w_{t}^{asc}), d_t \rangle + L \| d_t \|^2 + \langle \nabla\mathcal{L}(w_{t}^{asc}), \epsilon_{t+1} - \epsilon_t \rangle + L \| \epsilon_{t+1} - \epsilon_t \|^2\\
\end{aligned}
\end{equation}

Let us take the expectation conditioned on observations up to step $t$. For the sake of simplicity, we use the symbol $\mathbb{E}$ to denote the expectation over all possible data points on the training data distribution. Moreover, given the observations up to step $t$, we can use the definition of $d_t$ to obtain: 

\begin{equation}
\begin{aligned}
\mathbb{E} [ \mathcal{L}_p (w_{t+1}) ] 
&\leq  \mathcal{L}(w_{t}^{asc}) - \eta_t \langle \nabla\mathcal{L}(w_{t}^{asc}), \mathbb{E}[g_p^{(t)}] \rangle + \eta_t^2 L \mathbb{E} \| g_p^{(t)} \|^2 \\
& \quad + \mathbb{E} \langle \nabla\mathcal{L}(w_{t}^{asc}), \epsilon_{t+1} - \epsilon_t \rangle + L \mathbb{E} \| \epsilon_{t+1} - \epsilon_t \|^2 \\
& \leq \mathcal{L}(w_{t}^{asc}) -\eta_t \mathbb{E}\| \nabla \mathcal{L}(w_{t}^{asc}) \|_2^2 + \eta_t^2 L G^2 \\
& \quad + \mathbb{E} \langle \nabla\mathcal{L}(w_{t}^{asc}), \epsilon_{t+1} - \epsilon_t \rangle + L \mathbb{E} \| \epsilon_{t+1} - \epsilon_t \|^2\\
\end{aligned}
\label{eq:convergence_half_bound}
\end{equation}

By the definition of $\epsilon_t$, we have:
\begin{equation}
    \epsilon_t = \rho \frac{g^{(t)}}{\|g^{(t)}\|}, \epsilon_{t+1} = \rho \frac{g^{(t+1)}}{\|g^{(t+1)}\|}
\end{equation}
where $g^{(t)}$ is the gradient of $\mathcal{L}$ at $w_t$ with the domain-inspired gradient in Eq.(~\ref{eq:DISAM_gradient}). We denote $\epsilon_t = \nabla \mathcal{L}_d(w_t) = \sum_{i=1}^M \beta^i_t \nabla \mathcal{L}^i(w_t)$, where $\beta^i_t = \alpha_i - \frac{2\lambda}{M} \left (\mathcal{L}^i(w_t) - \frac{1}{M} \sum_{j=1}^M \mathcal{L}^j(w_t) \right)$. 
Since both $\epsilon_t$ and $\epsilon_{t+1}$ are unit length vectors, $\epsilon_{t+1} - \epsilon_t$ can be bounded by the arc length $\phi_t$ between them. 
Here the difference vector between $\epsilon_{t+1}$ and $\epsilon_t$ can be regarded as a random noise in the gradient direction and in SAM $\rho \gg \eta_t$, so the expectation of the inner product with the gradient direction $\nabla \mathcal{L}(w_t^{asc})$ can be approximated as 0 ($w_t^{asc}$ is updated from $w_t$ with a larger step size $\rho$, and its gradient direction can be considered approximately independent of the gradient direction in the neighborhood of $w_t$, so its difference with the inner product between $\epsilon_{t+1}$ and $\epsilon_t$ is negligible).
Therefore, we have:
\begin{equation}
    \begin{aligned}
\mathbb{E} [ \mathcal{L}_p (w_{t+1}) ] &\leq  \mathcal{L}(w_{t}^{asc}) -\eta_t \mathbb{E}\| \nabla \mathcal{L}(w_{t}^{asc}) \|_2^2 + \eta_t^2 L G^2 \\
& \quad + 
\mathbb{E} [ \langle \nabla \mathcal{L}(w_t^{asc}), \epsilon_{t+1}\rangle - \langle \nabla \mathcal{L}(w_t^{asc}), \epsilon_{t}\rangle ]
+ L \rho^2 \mathbb{E} \left \| 
\frac{g^{(t+1)}}{\|g^{(t+1)}\|} - \frac{g^{(t)}}{\|g^{(t)}\|} \right \|^2 \\
& \leq \mathcal{L}(w_{t}^{asc}) -\eta_t \mathbb{E}\| \nabla \mathcal{L}(w_{t}^{asc}) \|_2^2 + \eta_t^2 L G^2 + L \rho^2 \phi_t^2 \\
    \end{aligned} \label{eq:convergence_phi_unsolved}
\end{equation}

Because of the continuity of the optimization, the angle 
between the gradient perturbations before and after is small. Therefore, we can get:
\begin{equation}
    \begin{aligned}
        \phi_t & \approx \tan \phi_t = \frac{\| \epsilon_{t+1} - \epsilon_t \| }{\| \epsilon_t \|} + O(\phi_t^2) = \frac{\| \nabla \mathcal{L}_d (w_{t+1}) - \nabla \mathcal{L}_d (w_{t}) \| }{\| \nabla \mathcal{L}_d (w_{t}) \|} + O(\phi_t^2)\\
        & = \frac{\| \sum_{i=1}^M \left( \beta^i_{t+1}\nabla \mathcal{L}^i(w_{t+1}) - \beta^i_t \nabla \mathcal{L}^i(w_t) \right ) \|}{\| \nabla \mathcal{L}_d (w_{t}) \|} + O(\phi_t^2)\\
        & = \frac{\| \sum_{i=1}^M \left( (\beta^i_{t+1} - \beta^i_t ) \nabla \mathcal{L}^i(w_{t+1}) + \beta^i_t ( \nabla \mathcal{L}^i(w_{t+1})  - \nabla \mathcal{L}^i(w_t)) \right ) \|}{\| \nabla \mathcal{L}_d (w_{t}) \|} + O(\phi_t^2)\\
    \end{aligned} \label{eq:phi_t_first_part}
\end{equation}

Here we consider the effect of the weight coefficients generated by DISAM in the perturbation of $\nabla \mathcal{L}_d$, for the part of $\mathcal{L}^i(w_t)$ that is large, $\beta_t^i$ is smaller, we assume that the larger $\mathcal{L}^i(w_t)$ is, the larger the corresponding gradient $\nabla \mathcal{L}^i(w_t)$ is also, and after one optimization process, the variability between the domains will be reduced, so $\beta^i_{t+1}$ is a little bit smaller than the weight of $\beta^i_t$, in the place where the gradient is large, and by the rearranging inequality, we can obtained:
\begin{equation}
    \sum_{i=1}^M \beta_{t+1}^i \nabla \mathcal{L}^i(w_{t+1}) \leq \sum_{i=1}^M \beta_{t+1}^i \nabla \mathcal{L}^i(w_{t})
    \label{eq:beta_sum}
\end{equation}

So bring Eq.(~\ref{eq:beta_sum}) to Eq.(~\ref{eq:phi_t_first_part}), and with $\nabla \mathcal{L}(w_{t+1}) = \nabla \mathcal{L}(w_t + d_t) = \nabla \mathcal{L} (w_t) + H d_t + O(\|d_t\|^2)$ we can get:

\begin{equation}
\phi_t \leq \frac{\| \sum_{i=1}^M(\beta^i_{t+1} - \beta^i_t ) \nabla \mathcal{L}^i(w_{t+1}) + H d_t + O(\|d_t\|^2) \| }{\| \nabla \mathcal{L}_d (w_{t}) \|} + O(\phi_t^2) \leq \eta_t \Gamma
\label{eq:phi_t_second_part}
\end{equation}

Here since $\sum_{i=1}^M(\beta^i_{t+1} - \beta^i_t ) \nabla \mathcal{L}^i(w_{t+1}) \leq 0$, we use $\Gamma$ to denote an upper bound that is smaller than $L$.

Plug Eq.(~\ref{eq:phi_t_second_part}) into Eq.(~\ref{eq:convergence_phi_unsolved}), we have:
\begin{equation}
    \begin{aligned}
\mathbb{E} [ \mathcal{L}_p (w_{t+1}) ]
& \leq \mathcal{L}(w_{t}^{asc}) -\eta_t \mathbb{E}\| \nabla \mathcal{L}(w_{t}^{asc}) \|_2^2 + \eta_t^2 L G^2  
+ L \rho^2 \eta_t^2 \Gamma^2 \\
    \end{aligned} \label{eq:convergence_phi_solved}
\end{equation}

Perform telescope sum and note that $\eta_T = \frac{\eta_0}{\sqrt{T}}$, we have:

\begin{equation}
\begin{aligned}
    \mathbb{E} \mathcal{L}_p(w_T) - \mathcal{L}_p(w_0) & \leq - \sum_{t=1}^T \eta_t \mathbb{E}\| \nabla \mathcal{L}(w_{t}^{asc}) \|_2^2 
    + (LG^2 + \rho^2 L \Gamma^2) \eta_0^2 \sum_{t=1}^T \frac{1}{t}\\
    & \leq - \sum_{t=1}^T \eta_t \mathbb{E}\| \nabla \mathcal{L}(w_{t}^{asc}) \|_2^2
    + (LG^2 + \rho^2 L \Gamma^2) \eta_0^2 \log(T)
\end{aligned}
\end{equation}

Hence,
\begin{equation}
\eta_T \sum_{t=1}^T \mathbb{E}\| \nabla \mathcal{L}(w_{t}^{asc}) \|_2^2  \leq \sum_{t=1}^T \eta_t \mathbb{E}\| \nabla \mathcal{L}(w_{t}^{asc}) \|_2^2 \leq \mathcal{L}_p(w_0) - \mathcal{L}_min + (LG^2 + \rho^2 L \Gamma^2) \eta_0^2 \log(T)
\end{equation}

Note that $\eta_T = \frac{\eta_0}{\sqrt{T}}$, we have: 
\begin{equation}
    \frac{1}{T}\sum_{t=1}^T \mathbb{E}\| \nabla \mathcal{L}(w_{t}^{asc}) \|_2^2 \leq \frac{\mathcal{L}_p(w_0) - \mathcal{L}_min}{\eta_0} \frac{1}{\sqrt{T}} + \frac{(LG^2 + \rho^2 L \Gamma^2) \eta_0 \log(T)}{\sqrt{T}}
\end{equation}

\end{proof}

\label{appendx:lambda}
\paragraph{The influence of $\lambda$:}
In the proof of Theorem~\ref{thm:conver}, specifically in Eq.~(\ref{eq:phi_t_first_part}), $\lambda$ is integrated into $\beta$, serving as a hyperparameter that regulates the weight adjustment in DISAM. It functions by modulating the degree of correction for domain shifts:
$$
\beta^i_t = \alpha_i - \frac{2\lambda}{M} \left (\mathcal{L}^i(w_t) - \frac{1}{M} \sum_{j=1}^M \mathcal{L}^j(w_t) \right)
$$

The choice of $\lambda$ influences how aggressively DISAM responds to variance or domain shifts, with a higher $\lambda$ leading to more pronounced adjustments in $\beta$. Our experimental analysis in Figure~\ref{fig:ablation_params}(\subref{fig:weight_pacs}) and~\ref{fig:ablation_params}(\subref{fig:weight_officehome}), reveals that DISAM's performance remains relatively stable across a wide range of $\lambda$ values. However, choosing too large $\lambda$ can result in overly aggressive early training adjustments, yielding the increased variance in repeated experiments. Consequently, we adopted a default $\lambda$ value of 0.1 in all experiments.

\subsection{Discussion of the role of $\rho$ in DISAM} \label{appendix:rho_analysis}

Here, we provide a detailed discussion on how $\rho$ affects both generalization and convergence. First, we introduce the generalization theorem of the upper bound on generalization error, which is only related to the magnitude of $\rho$, and DISAM follows the same upper bound on generalization error as SAM.
In the SAM framework, the parameter $\rho$ plays a crucial role in determining generalizability. As established in SAM~\citep{sam_first_paper}, there exists an upper bound on the generalization error for SAM, suggesting that a larger $\rho$ could potentially enhance generalization, provided that convergence is not impeded. Here is the relevant generalization bound from SAM~\citep{sam_first_paper}:

\begin{theoAppendix}
\textbf{(Generalization Bound of SAM).}
    For any $\rho > 0$ and any distribution $\mathcal{D}$, with probability $1- \delta$ over the choice of the training set $S\sim \mathcal{D}$,
\begin{equation}
    \mathcal{L} _{\mathcal{D}} (w) \leq \max _{\| \epsilon\| _2 \leq \rho} \mathcal{L} _{S}(w+\epsilon) + \sqrt{\frac{k \log \left(  1 + \frac{\| w \| _2^2}{\rho^2} (1+\sqrt{\frac{\log (n)}{k}})^2 \right) + 4 \log \frac{n}{\delta} + \tilde{O}(1)}{n-1}}
\end{equation}
where $n = |S|$, $k$ is the number of parameters and we assumed $\mathcal{L} _{\mathcal{D}}(w) \leq \mathbb{E} _{\epsilon_i \approx \mathcal{N}(0, \rho)} [\mathcal{L} _{\mathcal{D}}(w+\epsilon)]$. This theorem's proof focuses solely on the magnitude of $\rho$, thus affirming the applicability of this theoretical framework to DISAM.
\end{theoAppendix}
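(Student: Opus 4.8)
The plan is to obtain the inequality as a specialization of a McAllester-type PAC-Bayesian bound \citep{mcallester1999pac_sharp}, following the proof of \citet{sam_first_paper}. Fix the (data-dependent) weights $w$ and, for a loss taking values in $[0,1]$, work with a Gaussian posterior $Q = \mathcal{N}(w,\sigma^2 I_k)$ and a data-independent Gaussian prior $P = \mathcal{N}(\vzero,\sigma_P^2 I_k)$. The PAC-Bayesian bound then gives, with probability at least $1-\delta$ over $S\sim\mathcal{D}$,
\[
\E_{\epsilon\sim Q}\mathcal{L}_{\mathcal{D}}(w+\epsilon) \;\le\; \E_{\epsilon\sim Q}\mathcal{L}_S(w+\epsilon) + \sqrt{\frac{\KL(Q\,\|\,P) + \log\frac{n}{\delta} + \tilde{O}(1)}{2(n-1)}},
\]
the exact numerical constants in the statement ($4\log\frac{n}{\delta}$, the coefficient multiplying $k\log(\cdot)$, and $n-1$ rather than $2(n-1)$) being inherited from the precise form of the inequality invoked in \citet{sam_first_paper}. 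From here three things must be arranged: choose $\sigma$ so that a draw $\epsilon\sim Q$ lies in the ball $\{\|\epsilon\|_2\le\rho\}$ with high probability, so that the empirical term on the right can be replaced by $\max_{\|\epsilon\|_2\le\rho}\mathcal{L}_S(w+\epsilon)$; choose $\sigma_P$ to minimize $\KL(Q\|P)$ despite not knowing $\|w\|_2$ in advance; and invoke the stated assumption $\mathcal{L}_{\mathcal{D}}(w)\le \E_{\epsilon\sim Q}\mathcal{L}_{\mathcal{D}}(w+\epsilon)$ to turn the left-hand side into the deterministic population risk $\mathcal{L}_{\mathcal{D}}(w)$.

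For the prior, use $\KL(Q\|P) = \tfrac12\big[(k\sigma^2 + \|w\|_2^2)/\sigma_P^2 - k + k\log(\sigma_P^2/\sigma^2)\big]$, which is minimized at $\sigma_P^2 = \sigma^2 + \|w\|_2^2/k$ with value $\tfrac{k}{2}\log\!\big(1+\|w\|_2^2/(k\sigma^2)\big)$. Because $\|w\|_2$ is data-dependent, I would fix a countable geometric grid of candidate prior variances, split the failure budget as $\sum_j\delta_j=\delta$, apply the bound to the whole family by a union bound, and then select the grid point nearest $\sigma^2+\|w\|_2^2/k$; the resulting $\KL$ is within a constant factor of the optimum and the union-bound penalty is only doubly logarithmic, hence absorbed into $\tilde{O}(1)$.

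Next set $\sigma = \rho/(\sqrt{k}+\sqrt{\log n})$, so that $\|\epsilon\|_2^2/\sigma^2$ is $\chi^2_k$-distributed and a standard chi-square tail bound gives $\Pr_{\epsilon\sim Q}[\|\epsilon\|_2>\rho]\le 1/\sqrt{n}$ up to constants. Splitting $\E_{\epsilon\sim Q}\mathcal{L}_S(w+\epsilon)$ over $\{\|\epsilon\|_2\le\rho\}$ and its complement, the first piece is at most $\max_{\|\epsilon\|_2\le\rho}\mathcal{L}_S(w+\epsilon)$ and the second is at most $\Pr_{\epsilon\sim Q}[\|\epsilon\|_2>\rho]\le 1/\sqrt{n}$, which is lower order than the square-root term and folds into $\tilde{O}(1)$. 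With this choice of $\sigma$ the logarithm's argument becomes $\|w\|_2^2/(k\sigma^2) = (\|w\|_2^2/\rho^2)(1+\sqrt{\log n/k})^2$, exactly the expression in the statement; combining the $\KL$ bound, the prior-grid union bound, the empirical split and the left-hand-side assumption produces the claimed inequality. The key observation for the paper is that every step above used only the constraint $\|\epsilon\|_2\le\rho$ — never the direction of $\epsilon$ nor the optimizer that produced $w$ — so the identical bound holds verbatim for DISAM.

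The main obstacle I anticipate is carrying out the prior-variance selection and the perturbation-scale selection in tandem: for the $\sigma$ dictated by the chi-square concentration requirement, the geometric grid of prior variances must be dense enough that the selected $\sigma_P^2$ stays within a constant factor of $\sigma^2+\|w\|_2^2/k$ uniformly over all feasible $\|w\|_2$ and $k$, while remaining sparse enough that the union bound over it contributes only a doubly logarithmic term. Ensuring these slacks all collapse cleanly into the single $\tilde{O}(1)$ in the numerator — rather than contaminating the leading $k\log\!\big(1+\frac{\|w\|_2^2}{\rho^2}(1+\sqrt{\log n/k})^2\big)$ term — is the delicate bookkeeping at the heart of the argument.
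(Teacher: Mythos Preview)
Your proposal is a correct and faithful sketch of the PAC-Bayesian argument from \citet{sam_first_paper}, which is precisely where this bound originates. Note, however, that the present paper does not supply its own proof of this theorem: it simply imports the statement from \citet{sam_first_paper} as background for the discussion of $\rho$, so there is no in-paper proof to compare against. Your reconstruction --- McAllester-style bound with Gaussian prior/posterior, $\sigma=\rho/(\sqrt{k}+\sqrt{\log n})$ to force $\chi^2_k$ concentration into the $\rho$-ball, a geometric grid of prior variances to handle the data-dependence of $\|w\|_2$, and the stated assumption to pass from $\E_{\epsilon}\mathcal{L}_{\mathcal{D}}(w+\epsilon)$ to $\mathcal{L}_{\mathcal{D}}(w)$ --- is exactly the argument in the original SAM paper, and your closing remark that only the magnitude $\rho$ (not the perturbation direction or optimizer) enters the proof is the point this paper wishes to extract.
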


When considering domain shift, the upper bound on generalization error for the test domain is:

\begin{theoAppendix}
\textbf{(PAC-Bayesian Generalization Bound). }
For any $\rho > 0$ and the unseen domain $T$, suppose we have multi-source domains $S = \{S_1, S_2, \cdots\}$ with a total of $N$ samples. Let $\mathcal{H}$ be the hypothesis space and $\Omega$ be the corresponding parameter space, where the VC dimension of $\mathcal{H}$ is $d$. We denote the domain divergence between two domains $D_i$ and $D_j$ on the hypothesis space $\mathcal{H}$ as $d_{\mathcal{H}\Delta\mathcal{H}}(D_i, D_j)$. Then, for any $\delta \in (0,1)$, with probability at least $1-\delta$, for all $w \in \Omega$, we have:

\begin{equation}
\begin{aligned}
    \mathcal{L}_{T} (w) \leq & \max_{\|\epsilon\|_2 \leq \rho} \mathcal{L}_{S} (w + \epsilon) + \frac{1}{2} d_{\mathcal{H} \Delta \mathcal{H}} (S, T) + \sqrt{\frac{\log d + \log \frac{1}{\delta}}{2 N}} + \lambda \\
    & + \sqrt{\frac{k \log \left (  1 + \frac{\|w\|_2^2}{\rho^2}  \left (  1+ \sqrt{\frac{\log (N)}{k}} \right )^2 + 4 \log \frac{N}{\delta} + \tilde{O}(1)   \right ) }{N - 1}}
\end{aligned}
\end{equation}
where $\lambda$ is the optimal combined risk on $T$ and $S$ that can be achieved by the parameters in $\Omega$.
\end{theoAppendix}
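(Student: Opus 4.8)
The plan is to obtain the bound by composing two off-the-shelf ingredients: the classical domain-adaptation inequality of \citet{ben2010_dg_theory}, which controls the target population risk by the source population risk plus an $\mathcal{H}\Delta\mathcal{H}$-divergence penalty and an ideal-joint-risk term, together with the Generalization Bound of SAM stated immediately above, which controls the source population risk by the perturbed empirical source risk $\max_{\|\epsilon\|_2\le\rho}\mathcal{L}_S(w+\epsilon)$ plus a parameter-norm complexity term. Because DISAM alters only the rule used to generate the perturbation and leaves the hypothesis class $\mathcal{H}$, the parameter space $\Omega$, and the $\rho$-ball structure of the inner maximization untouched, both ingredients transfer verbatim — which is precisely the remark following the statement. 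It therefore suffices to fix notation, invoke each ingredient, and take a union bound.

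First I would regard the union of source domains $S=\{S_1,S_2,\dots\}$ as a single source distribution $\mathcal{D}_S$ (the mixture weighted by sample counts), from which the $N$ training points are drawn i.i.d., and write $\mathcal{L}_S(w),\mathcal{L}_T(w)$ for the population risks under $\mathcal{D}_S$ and the unseen domain $T$. Step one: apply the \citet{ben2010_dg_theory} theorem to the hypothesis $h_w$ indexed by $w\in\Omega$, yielding
\[
\mathcal{L}_T(w)\;\le\;\mathcal{L}_S(w)+\tfrac12\,d_{\mathcal{H}\Delta\mathcal{H}}(\mathcal{D}_S,\mathcal{D}_T)+\lambda,
\]
where $\lambda=\min_{w'\in\Omega}\big(\mathcal{L}_S(w')+\mathcal{L}_T(w')\big)$ is the ideal joint risk; this inequality is distribution-level and incurs no failure probability. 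Step two: pass from the population divergence to the empirical divergence $d_{\mathcal{H}\Delta\mathcal{H}}(S,T)$ computed from the $N$ source samples and the unlabeled target data, paying a uniform-convergence price; bounding the symmetric-difference class through its growth function in terms of the VC dimension $d$ and applying Hoeffding with a union bound gives, with probability at least $1-\delta/2$,
\[
d_{\mathcal{H}\Delta\mathcal{H}}(\mathcal{D}_S,\mathcal{D}_T)\;\le\;d_{\mathcal{H}\Delta\mathcal{H}}(S,T)+\sqrt{\tfrac{\log d+\log(1/\delta)}{2N}}.
\]

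Step three: apply the Generalization Bound of SAM to $\mathcal{D}_S$ with $n=N$, so that with probability at least $1-\delta/2$ over the draw of the source training set,
\[
\mathcal{L}_S(w)\;\le\;\max_{\|\epsilon\|_2\le\rho}\mathcal{L}_S(w+\epsilon)+\sqrt{\tfrac{k\log\!\big(1+\tfrac{\|w\|_2^2}{\rho^2}(1+\sqrt{\log(N)/k})^2\big)+4\log(N/\delta)+\tilde{O}(1)}{N-1}},
\]
the right-hand complexity term being exactly the one appearing in that theorem. Step four: union-bound the two failure events of Steps two and three (total probability at most $\delta$), substitute both resulting inequalities into the Ben-David inequality of Step one, and collect terms, absorbing the harmless $\log 2$ factors into $\tilde{O}(1)$; this yields exactly the claimed bound.

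\textbf{The main obstacle} is making the divergence-estimation term come out in the stated shape $\sqrt{(\log d+\log(1/\delta))/(2N)}$: a naive VC argument for the $\mathcal{H}\Delta\mathcal{H}$ class produces a term of order $\sqrt{(d\log N+\log(1/\delta))/N}$, so the clean $\log d$ dependence rests on a specific (finite-class or growth-function) accounting of how $d_{\mathcal{H}\Delta\mathcal{H}}(S,T)$ is estimated, and I would spell that bookkeeping out. A secondary item to check is the multi-source aggregation — that the pooled $N$ source points are genuinely i.i.d.\ from the mixture $\mathcal{D}_S$, so that the single-domain SAM bound applies without modification, and that the divergence and the ideal joint risk $\lambda$ are both read off against this same mixture. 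Everything else is routine substitution, since, as the paper notes, DISAM keeps the hypothesis space and the $\rho$-ball perturbation geometry identical to SAM.
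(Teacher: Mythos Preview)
The paper does not actually supply a proof of this theorem: it is stated in Appendix~\ref{appendix:rho_analysis} immediately after the SAM generalization bound, with only the sentence ``When considering domain shift, the upper bound on generalization error for the test domain is:'' as preamble, and the appendix then moves on to discuss the trade-off in $\rho$. So there is no paper proof against which to compare line by line.

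That said, your plan is precisely the derivation the paper is gesturing at. The bound is visibly the sum of (i) the SAM PAC-Bayes term from the theorem stated just above it, (ii) the $\tfrac12 d_{\mathcal{H}\Delta\mathcal{H}}+\lambda$ pair from \citet{ben2010_dg_theory}, and (iii) an empirical-divergence correction; your three-step decomposition plus union bound is the only natural way to assemble these pieces, and it matches the paper's framing and citations exactly.

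You are also right to flag the $\sqrt{(\log d+\log(1/\delta))/(2N)}$ term as the weak point. The standard Ben-David estimate for $|d_{\mathcal{H}\Delta\mathcal{H}}(\mathcal{D}_S,\mathcal{D}_T)-\hat d_{\mathcal{H}\Delta\mathcal{H}}(S,T)|$ scales like $\sqrt{(d\log N+\log(1/\delta))/N}$, not $\sqrt{(\log d+\log(1/\delta))/N}$, so the form printed in the paper does not fall out of the usual VC argument. Since the paper gives no derivation, there is nothing to reconcile against; in a write-up you should either (a) state the term in the standard VC form and note the discrepancy, or (b) treat the $\log d$ shape as a typographical simplification and carry the genuine VC rate through. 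Your secondary check---that the pooled $N$ source samples be i.i.d.\ from the mixture so that the single-distribution SAM bound applies---is the right caveat to record as an assumption.
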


Combining this with the convergence theorem (Theorem~\ref{thm:conver}), there is a trade-off with respect to $\rho$. A larger $\rho$ might theoretically enhance generalization but poses greater challenges for convergence. This reflects the intuitive notion that searching for flatter minima across a broader range is inherently more complex, which can potentially affect training efficiency. However, if $\mathcal{L}_{S} (w + \epsilon)$ can be converged with a sufficiently small value, a larger $\rho$ corresponds to better generalization.  DISAM, compared to SAM, converges faster, which means that under the same convergence speed, a larger $\rho$ can be used to achieve better generalization. This advantage is empirically showcased in Figure~\ref{fig:ablation_convergence_and_sharpness}(\subref{fig:max_rho_pacs}) and (\subref{fig:max_rho_officehome}), where we demonstrate that DISAM effectively employs a larger $\rho$ compared to traditional SAM. This ensures both convergence and enhanced generalization. Such a capability to balance between convergence efficiency and generalization is a distinguishing feature of DISAM over conventional SAM methods.

\section{Detailed Experiments}
\label{appendix:detailed_results}
\subsection{Detailed Experiment Setups}
We present the detailed results obtained from five datasets, namely PACS~\citep{li2017pacs} (9,991 images, 7 classes, 4 domains), VLCS~\citep{fang2013unbiased_vlcs} (10,729 images, 5 classes, 4 domains), OfficeHome~\citep{venkateswara2017deep_officehome} (15,588 images, 65 classes, 4 domains), TerraIncognita~\citep{beery2018recognition_terrainc} (abbreviated as TerraInc, 24,788 images, 10 classes, 4 domains), and DomainNet~\citep{peng2019moment_domainnet} (586,575 images, 345 classes, 6 domains), following the DomainBed benchmark~\citep{gulrajani2021in_domainbed} with the ResNet50 backbone architecture.
We set the hyperparameters for the Domain-Inspired + SAM method as follows: $\rho=0.5$ and $\lambda=0.1$ for PACS, VLCS, OfficeHome, and DomainNet; for TerraInc, we use $\rho=0.01$ and $\lambda=0.2$. Both Domain-Inspired + GSAM and Domain-Inspired + SAGM employ the strategy described in the supplementary material of SAGM~\citep{SAGM}. 
As for the CoOp with CLIP, we set the batch size as 32 and the default learning rate as 2e-3.
Given the detailed experimental hyperparameter settings provided in the SAGM supplement~\citep{SAGM} and the official open-source CLIPOOD code~\citep{shu2023CLIPood}, we directly applied these official settings. The results, replicated using the official open-source CLIPOOD code, are presented in Table~\ref{tab:main_total_clip} of the main text.

As for the experiments on open class, we found that CLIPOOD requires a lower learning rate and correspondingly lower $\rho$, and therefore used learning rate 1e-07 and $\rho$ 1e-05 as default settings.

\subsection{Detailed Experimental Results}
We present the specific out-of-domain experimental results for each dataset in Table~\ref{tab:main_total}, corresponding to each leave-one-domain-out setting.

\begin{table}[htbp]
    \centering
    \caption{\textbf{Comparison with state-of-the-art domain generalization methods.} Out-of-domain accuracies on the PACS dataset with ResNet50 backbone.}
    \begin{tabular}{c|c c c c| c}
    \toprule[1.5pt]
        \textbf{Algorithm} & Art & Cartoon & Photo & Sketch  & \textbf{Avg.} \\ 
        \midrule
        ERM & $ 84.7_{\pm 0.4}$ & $80.0_{\pm 0.6} $ & $ 97.2_{\pm 0.3}$ & $79.3_{\pm 1.0} $ & $85.5 $ \\ \midrule
        SAM & $ 85.6_{\pm2.1}$ & $80.9_{\pm1.2} $ & $97.0_{\pm0.4} $ & $79.6_{\pm1.6} $ & $85.8 $  \\
        \textbf{Domain-Inspired} & $87.1_{\pm0.4} $ & $81.9_{\pm0.5} $ & $96.2_{\pm0.3} $ & $\textbf{83.1}_{\pm0.7} $ & $ 87.1$  \\ \midrule
        GSAM & $86.9_{\pm0.1} $ & $80.4_{\pm0.2} $ & $97.5_{\pm0.0} $ & $78.7_{\pm0.8} $ & $ 85.9$  \\
        \textbf{Domain-Inspired} & $88.4_{\pm0.2} $ & $81.1_{\pm0.3} $ & $97.0_{\pm0.0} $ & $82.3_{\pm0.6} $ & $87.2 $  \\ \midrule
        SAGM & $87.4_{\pm0.2} $ & $80.2_{\pm0.3} $ & $\textbf{98.0}_{\pm0.2} $ & $80.8_{\pm0.6} $ & $ 86.6$  \\
        \textbf{Domain-Inspired} & $\textbf{89.7}_{\pm0.6}$ & $\textbf{81.5}_{\pm0.0} $ & $97.0_{\pm0.1} $ & $81.8_{\pm0.6} $ & $\textbf{87.5} $   \\ 
        \rowcolor{gray!25}    \quad \quad \quad +CORAL & $89.8_{\pm0.5}$ & $82.9_{\pm0.2} $ & $97.4_{\pm0.2} $ & $83.4_{\pm0.2} $ & $\textbf{88.4} $ \\ 

    \bottomrule[1.5pt]
        
    \end{tabular}
    \label{tab:pacs}
\end{table}

\begin{table}[htbp]
    \centering
    \caption{\textbf{Comparison with state-of-the-art domain generalization methods.} Out-of-domain accuracies on the VLCS dataset with ResNet50 backbone.}
    \begin{tabular}{c|c c c c| c}
    \toprule[1.5pt]
        \textbf{Algorithm} & Caltech & LabelMe & Pascal & Sun  & \textbf{Avg.} \\ 
        \midrule
        ERM & $ 98.0_{\pm 0.3}$ & $64.7_{\pm 1.2} $ & $ 71.4_{\pm 1.2}$ & $75.2_{\pm 1.6} $ & $77.3 $ \\ \midrule
        SAM & $99.1_{\pm0.2} $ & $65.0_{\pm1.0} $ & $73.7_{\pm1.0} $ & $79.8_{\pm0.1} $ & $ 79.4$  \\
        \textbf{Domain-Inspired} & $99.3_{\pm0.0} $ & $66.3_{\pm0.5} $ & $81.0_{\pm0.1} $ & $73.2_{\pm0.1} $ & $79.9 $  \\ \midrule
        GSAM & $98.7_{\pm0.3} $ & $64.9_{\pm0.2} $ & $74.3_{\pm0.0} $ & $78.5_{\pm0.8} $ & $ 79.1$  \\
        \textbf{Domain-Inspired} & $99.8_{\pm0.0} $ & $\textbf{66.6}_{\pm0.1} $ & $74.2_{\pm0.9} $ & $79.3_{\pm0.1} $ & $80.0 $  \\ \midrule
        SAGM & $99.0_{\pm0.2} $ & $65.2_{\pm0.4} $ & $\textbf{75.1}_{\pm0.3} $ & $80.7_{\pm0.8} $ & $ 80.0$  \\
        \textbf{Domain-Inspired} & $\textbf{99.9}_{\pm0.1} $ & $66.1_{\pm0.6} $ & $\textbf{75.1}_{\pm0.3} $ & $\textbf{81.8}_{\pm0.0} $ & $\textbf{80.7} $   \\ 
        \rowcolor{gray!25}    \quad \quad \quad +CORAL & $99.7_{\pm0.1} $ & $67.8_{\pm0.7} $ & $75.5_{\pm0.8} $ & $81.6_{\pm0.2} $ & $\textbf{81.2} $   \\

    \bottomrule[1.5pt]
        
    \end{tabular}
    \label{tab:vlcs}
\end{table}

\begin{table}[htbp]
    \centering
    \caption{\textbf{Comparison with state-of-the-art domain generalization methods.} Out-of-domain accuracies on the OfficeHome dataset with ResNet50 backbone.}
    \begin{tabular}{c|c c c c| c}
    \toprule[1.5pt]
        \textbf{Algorithm} & Art & Clipart & Product & Real World  & \textbf{Avg.} \\ 
        \midrule
        ERM & $61.3_{\pm0.7} $ & $52.4_{\pm0.3} $ & $75.8_{\pm0.1} $ & $76.6_{\pm0.3} $ & $66.5 $ \\ \midrule
        SAM & $ 64.5_{\pm0.3}$ & $56.5_{\pm0.2} $ & $77.4_{\pm0.1} $ & $79.8_{\pm0.4} $ & $ 69.6$  \\
        \textbf{Domain-Inspired} & $65.8_{\pm0.2} $ & $55.6_{\pm0.2} $ & $79.2_{\pm0.2} $ & $80.6_{\pm0.1} $ & $70.3 $  \\ \midrule
        GSAM & $64.9_{\pm0.1} $ & $55.2_{\pm0.2} $ & $77.8_{\pm0.0} $ & $79.2_{\pm0.2} $ & $69.3 $  \\
        \textbf{Domain-Inspired} & $65.7_{\pm0.3} $ & $\textbf{57.4}_{\pm0.3} $ & $79.4_{\pm0.1} $ & $80.7_{\pm0.3} $ & $70.8 $  \\ \midrule
        SAGM & $65.4_{\pm0.4} $ & $57.0_{\pm0.3} $ & $78.0_{\pm0.3} $ & $80.0_{\pm0.2} $ & $70.1 $  \\
        \textbf{Domain-Inspired} & $\textbf{67.2}_{\pm0.0}$ & $56.3_{\pm0.3} $ & $\textbf{79.6}_{\pm0.2} $ & $\textbf{81.0}_{\pm0.3} $ & $\textbf{71.0} $   \\ 
        \rowcolor{gray!25}    \quad \quad \quad +CORAL & $68.5_{\pm0.1}$ & $57.6_{\pm0.1} $ & $79.3_{\pm0.4} $ & $81.3_{\pm0.2} $ & $\textbf{71.7} $   \\

    \bottomrule[1.5pt]
        
    \end{tabular}
    \label{tab:officehome}
\end{table}

\begin{table}[htbp]
    \centering
    \caption{\textbf{Comparison with state-of-the-art domain generalization methods.} Out-of-domain accuracies on the TerraInc dataset with ResNet50 backbone.}
    \begin{tabular}{c|c c c c| c}
    \toprule[1.5pt]
        \textbf{Algorithm} & L100 & L38 & L43 & L46  & \textbf{Avg.} \\ 
        \midrule
        ERM & $ 49.8_{\pm4.4}$ & $42.1_{\pm1.4} $ & $56.9_{\pm1.8} $ & $35.7_{\pm3.9} $ & $46.1 $ \\ \midrule
        SAM & $46.3_{\pm1.0} $ & $38.4_{\pm2.4} $ & $54.0_{\pm1.0} $ & $34.5_{\pm0.8} $ & $43.3 $  \\
        \textbf{Domain-Inspired} & $46.2_{\pm2.9} $ & $41.6_{\pm0.1} $ & $58.0_{\pm0.5} $ & $40.5_{\pm2.2} $ & $46.6 $  \\ \midrule
        GSAM & $50.8_{\pm0.1} $ & $39.3_{\pm0.2} $ & $\textbf{59.6}_{\pm0.0} $ & $38.2_{\pm0.8} $ & $47.0 $  \\
        \textbf{Domain-Inspired} & $56.7_{\pm1.5} $ & $\textbf{46.7}_{\pm1.0} $ & $59.2_{\pm0.7} $ & $39.9_{\pm1.5} $ & $\textbf{50.6} $  \\ \midrule
        SAGM & $54.8_{\pm1.3} $ & $41.4_{\pm0.8} $ & $57.7_{\pm0.6} $ & $\textbf{41.3}_{\pm0.4} $ & $48.8 $  \\
        \textbf{Domain-Inspired} & $ \textbf{57.6}_{\pm1.6}$ & $44.8_{\pm1.5} $ & $58.6_{\pm1.2} $ & $38.9_{\pm0.6} $ & $50.0 $   \\ 
        \rowcolor{gray!25}    \quad \quad \quad + CORAL & $ 57.9_{\pm0.3}$ & $46.6_{\pm0.6} $ & $59.9_{\pm0.3} $ & $42.5_{\pm0.1} $ & $51.7 $ \\

    \bottomrule[1.5pt]
        
    \end{tabular}
    \label{tab:terrainc}
\end{table}

\begin{table}[htbp]
    \centering
    \caption{\textbf{Comparison with state-of-the-art domain generalization methods.} Out-of-domain accuracies on the DomainNet dataset with ResNet50 backbone.}
    \scalebox{0.95}{
    \begin{tabular}{c|c c c c c c| c}
    \toprule[1.5pt]
        \textbf{Algorithm} & Clipart & Infograph & Painting & Quickdraw & Real & Sketch  & \textbf{Avg.} \\ 
        \midrule
        ERM & $62.8_{\pm0.4} $ & $20.2_{\pm0.3} $ & $50.3_{\pm0.3} $ & $13.7_{\pm0.5} $ & $63.7_{\pm0.2} $ & $52.1_{\pm0.5} $ & $43.8 $\\ \midrule
        SAM & $64.5_{\pm0.3} $ & $20.7_{\pm0.2} $ & $50.2_{\pm0.1} $ & $15.1_{\pm0.3} $ & $62.6_{\pm0.2} $ & $52.7_{\pm0.3} $ & $44.3 $\\
        \textbf{Domain-Inspired} & $\textbf{65.9}_{\pm0.2} $ & $20.7_{\pm0.2} $ & $51.7_{\pm0.3} $ & $\textbf{16.6}_{\pm0.3} $ & $62.8_{\pm0.5} $ & $\textbf{54.8}_{\pm0.4} $ & $45.4 $ \\ \midrule

        GSAM & $64.2_{\pm0.3} $ & $20.8_{\pm0.2} $ & $50.9_{\pm0.0} $ & $14.4_{\pm0.8} $ & $63.5_{\pm0.2} $ & $53.9_{\pm0.2} $ & $44.6 $  \\
        \textbf{Domain-Inspired} & $ 65.7_{\pm0.1}$ & $21.3_{\pm0.1} $ & $52.2_{\pm0.1} $ & $15.6_{\pm0.0} $ & $64.5_{\pm0.2} $ & $54.1_{\pm0.2} $ & $45.6 $ \\ \midrule

        SAGM & $64.9_{\pm0.2} $ & $21.1_{\pm0.3} $ & $51.5_{\pm0.2} $ & $14.8_{\pm0.2} $ & $64.1_{\pm0.2} $ & $53.6_{\pm0.2} $ & $45.0 $  \\ 
        \textbf{Domain-Inspired} & $\textbf{65.9}_{\pm0.2} $ & $\textbf{21.4}_{\pm0.0} $ & $\textbf{52.6}_{\pm0.1} $ & $15.8_{\pm0.0} $ & $\textbf{65.3}_{\pm0.0} $ & $\textbf{54.8}_{\pm0.2} $ & $\textbf{46.0}$  \\ 
        \rowcolor{gray!25}    \quad \quad \quad +CORAL & $ 66.4_{\pm0.3}$ & $21.9_{\pm0.2} $ & $53.1_{\pm0.1} $ & $16.1_{\pm0.0} $ & $65.3_{\pm0.0} $ & $55.0_{\pm0.0} $ & $46.3$ \\

    \bottomrule[1.5pt]
        
    \end{tabular}
    }
    \label{tab:domainnet}
\end{table}

\subsection{Details about Estimated Sharpness on Unseen Test Domain}
Estimating sharpness involves a significant computational overhead. In the earliest methods, Monte Carlo random sampling was the only viable approach~\citep{sharp_minima_icml_2017,hochreiter1994simplifying_sharp}. However, recent advancements have introduced efficient approximation techniques based on gradients to estimate sharpness~\citep{jiang2023an_sharpness_measure,Jiang2020Fantastic_sharpness_measure}.
Based on the work of \citet{jiang2023an_sharpness_measure}, we assess the sharpness of the training model on the unseen test domain at the end of each epoch. Sharpness is commonly characterized by the eigenvalues of the Hessian matrix~\citep{keskar2017on_sharpness_measure_hessian,dinh2017sharp_sharpness_measure_hessian}, but direct computation incurs substantial overhead. To address this, a computationally efficient measurement of sharpness is proposed by \citet{Jiang2020Fantastic_sharpness_measure}, which utilizes the gradient variance $\text{Var}\{\nabla \mathcal{L}(w_t; B_t)\}$ as an estimate ($B_t$ represent the batch data sampled at step $t$).

\subsection{Details about Comparison of Computation Cost}
We selected the PACS dataset for experimentation, using a platform with a 16-core CPU, a single RTX3090 GPU, and 64GB RAM. The time overhead for one training step was calculated and averaged over 500 iterations. Due to the lack of optimization for parallel acceleration in the variance calculation code, which employs a simple 'for' loop approach, the actual overhead is larger than theoretically expected. Nonetheless, DISAM's advantage lies in its overhead being unrelated to gradient size, but only to batch size and domain number. This drawback can be addressed through parallel code optimization, and no additional memory overhead is present.

\subsection{Details about Convergence Curves of SAM and ERM}
\label{appendix:convergence_curve}
In this section, we provide a detailed analysis of the convergence curves depicted in Figure~\ref{fig:problem_show}(\subref{fig:intro2}). Figure~\ref{fig:rebuttal_R3_W1}(\subref{fig:rebuttal_R3_W1_convergence}) presents the same as Figure~\ref{fig:problem_show}(\subref{fig:intro2}), with a normalized representation of the loss curves, ranging from 0 to 1, achieved by subtracting the minimum loss value and dividing by the maximum loss value. Our intention is to emphasize the inconsistency in convergence trends across different SAM domains, as illustrated by the optimization overshoot observed in Figure~\ref{fig:rebuttal_R3_W1}(\subref{fig:rebuttal_R3_W1_convergence}). Figure~\ref{fig:rebuttal_R3_W1}(\subref{fig:rebuttal_R3_W1_loss}) showcases the actual loss change curve. It is apparent that due to the consistency issue encountered during the early phase of convergence, the in-domain convergence is compromised, resulting in poor generalization performance in the out-of-domain scenario.

\begin{figure}
    \centering
    \begin{subfigure}[b]{0.45\textwidth}
        \centering
        \includegraphics[width=\textwidth]{img/intro2.png}
        \caption{Convergence curves under domain shifts}
        \label{fig:rebuttal_R3_W1_convergence}
    \end{subfigure}
    \begin{subfigure}[b]{0.45\textwidth}
        \centering
        \includegraphics[width=\textwidth]{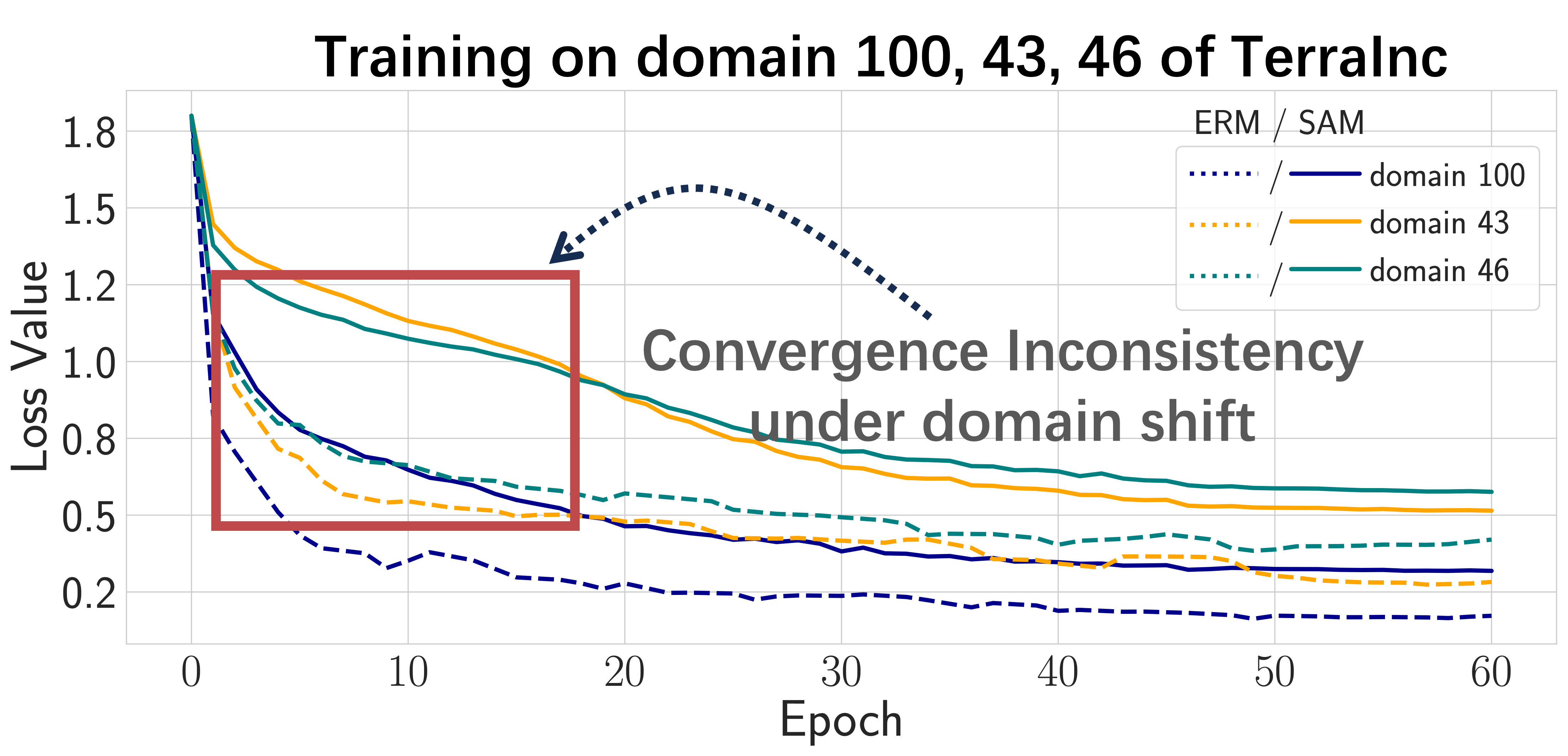}
        \caption{Loss curves under domain shifts}
        \label{fig:rebuttal_R3_W1_loss}
    \end{subfigure}

    \caption{Illustration of SAM’s degradation of the training process under domain shifts. (a) Convergence curves of SAM and ERM for each domain during training, with the convergence degree normalized to [0,1]. (b) Loss curves of SAM and ERM for each domain during training.}
    \label{fig:rebuttal_R3_W1}
\end{figure}

\subsection{Detailed Analysis about Open-Class Generalization}
\label{appendix:open_class}
In the experiments of open-class generalization, as presented in Table~\ref{tab:domain_shift_and_open_class} and Figure~\ref{fig:open_class} of section~\ref{sec:open_class}, we specifically explore the effectiveness of DISAM for parameter-efficient fine-tuning (PEFT). Our quantitative analysis compares the performance of ERM, SAM, and DISAM in fine-tuning scenarios. As shown in Table~\ref{tab:domain_shift_and_open_class}, although CoOp and CLIPOOD perform better on base classes than zero-shot, their performance on new classes is worse than zero-shot. This suggests that the fine-tuned parameters overfit to the existing training data distribution from both the domain and class perspectives. This overfitting is particularly detrimental to the generalization of large VLM models, which often have feature representations too rich for the downstream task, especially when only a small number of parameters are fine-tuned. Figure~\ref{fig:open_class} visualizes the change in performance trends during the training process, and we observe a trend where ERM initially performs well on base classes but then exhibits a decline on new classes, suggesting a collapse of the feature space onto the training data classes. Although SAM offers some relief from overfitting, its performance on new classes does not match zero-shot levels. In contrast, DISAM, by minimizing sharpness more effectively, shows improved performance on new classes, especially in domain shift scenarios.

\subsection{Detailed Analysis of Convergence Speed Comparison}
\label{appendix:speed}
We presented a comparison of the convergence speed with the inclusion of ERM in Figure~\ref{fig:ablation_overall_convergence}. It can be observed that although DISAM converges much faster compared to SAM, the overall convergence speed is still slower than ERM due to the introduction of $\rho$.

\begin{figure}[t!]
  \centering
  \begin{subfigure}[b]{0.3\textwidth}
    \centering
    \includegraphics[width=\textwidth]{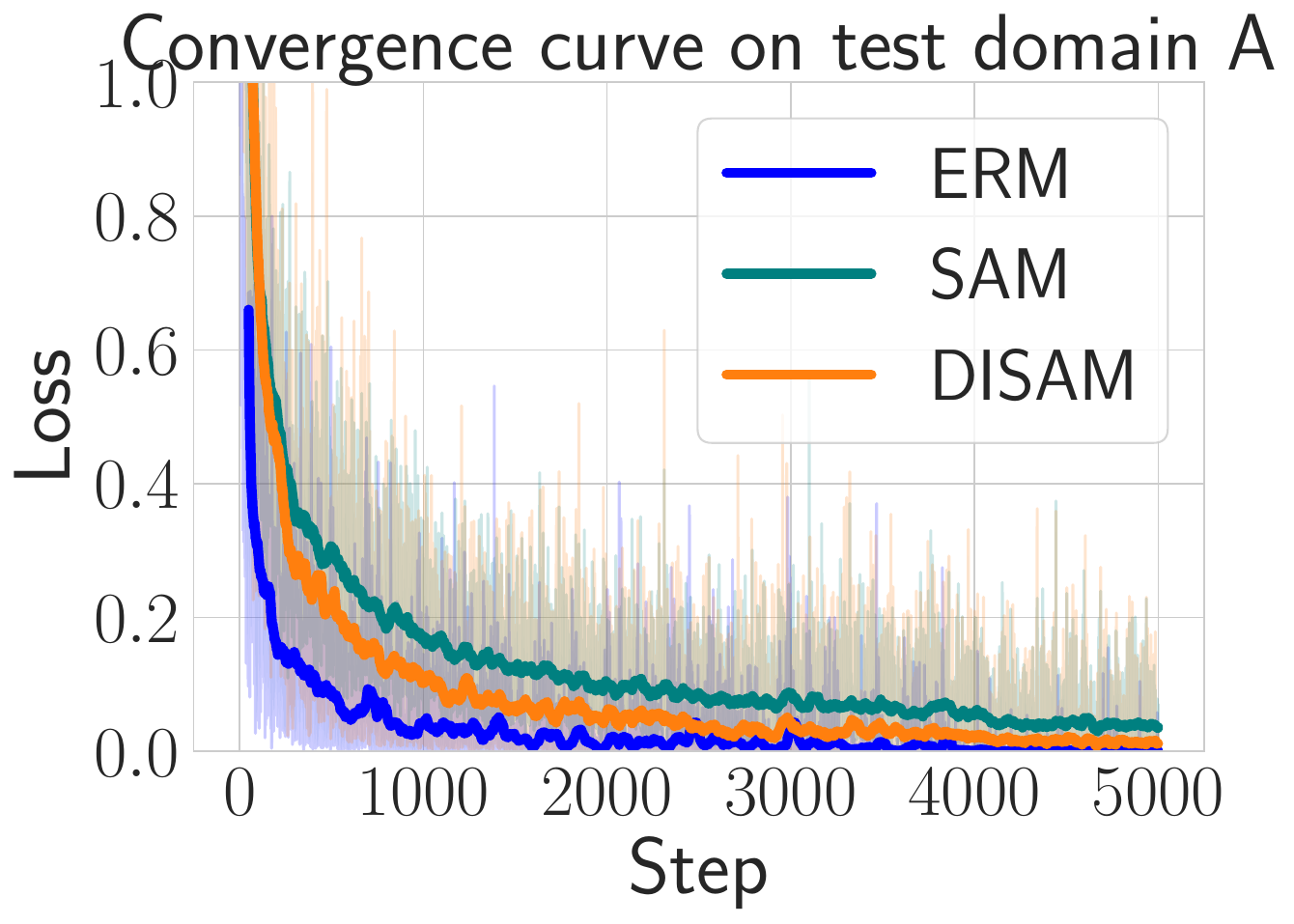}
     \caption{}
    \label{fig:overall_curve_a}
  \end{subfigure}
  \begin{subfigure}[b]{0.3\textwidth}
    \centering
    \includegraphics[width=\textwidth]{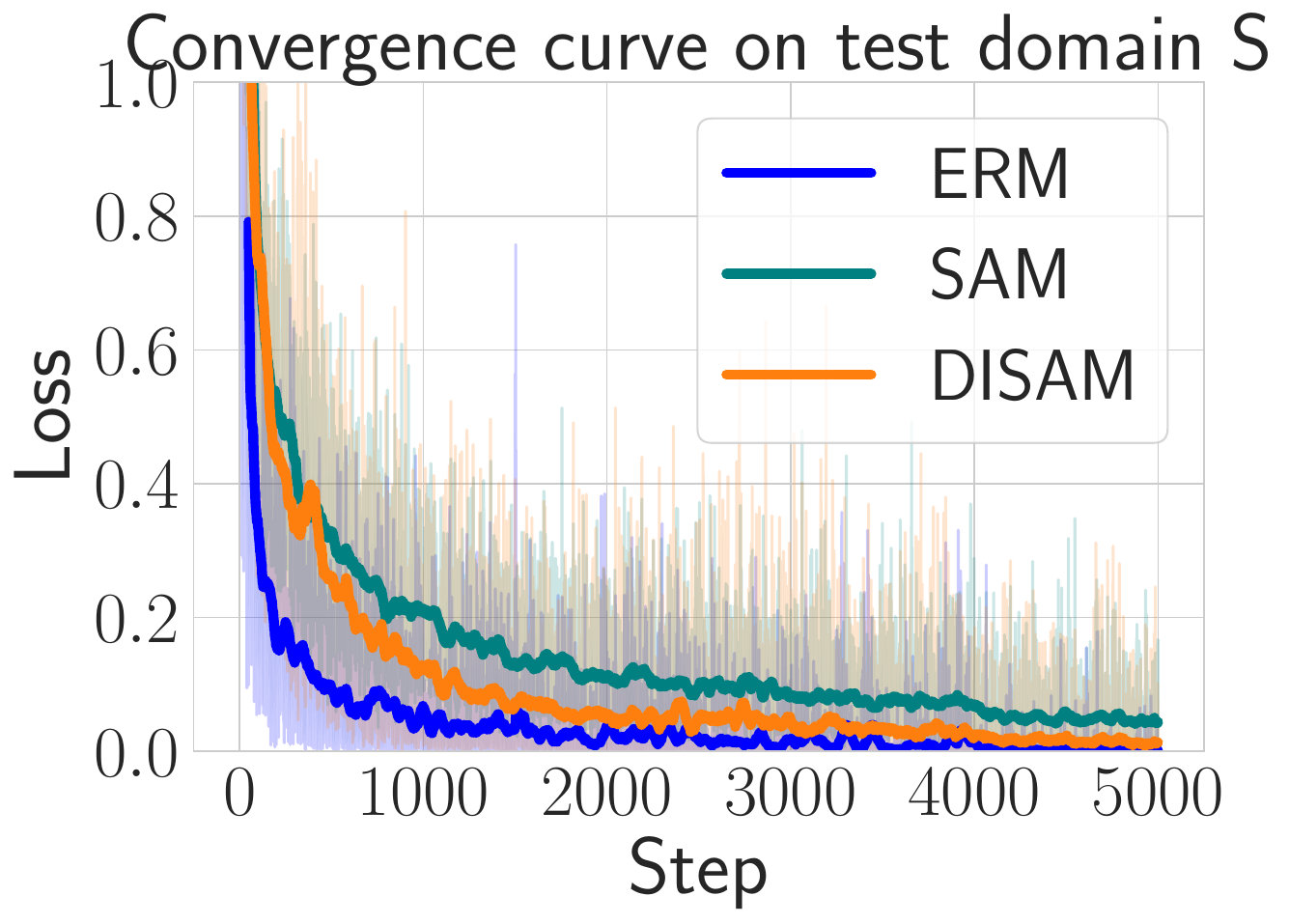}
    \caption{}
    \label{fig:overall_curve_s}
  \end{subfigure}
  \caption{\textbf{Convergence curves} for ERM, SAM and DISAM. (a) \& (b) show the trend of $\mathcal{L}(w)$ during the training process on PACS dataset.}
\label{fig:ablation_overall_convergence}
\end{figure}

\section{Pseudo code of DISAM}
\label{app:code}
We present pseudo-code for DISAM using Python syntax. PyTorch is utilized as the deep learning experimental framework. The code for the optimizer in the SAM-based method can be referenced from the provided open source links in the relevant papers.

\begin{lstlisting}[language=Python, caption=Training Code for DISAM]
def train_epoch_disam(dataloader, model, optimizer):
    """
    Train the DISAM model for one epoch.

    Args:
        dataloader (DataLoader): The training dataloader.
        model (nn.Module): The training model.
        optimizer (Optimizer): The SAM-based optimizer, such as SAM, GSAM, and SAGM.
    """
    model.train()
    for i, data_list in tqdm(enumerate(dataloader)):
        imgs, labels = data_list
        imgs, labels = imgs.cuda(), labels.cuda()
        preds = model(imgs)
        # Calculate domain losses and total loss using the cross-entropy loss function
        domain_loss_list, total_loss = get_domain_loss(preds, labels, domain_labels, loss_func)
        loss_variance = compute_variance(domain_loss_list)
        loss = total_loss - lamda * loss_variance
        optimizer.zero_grad()
        loss.backward()
        # Perform the first step of SAM: gradient ascent with a fixed length rho
        optimizer.first_step(zero_grad=True)
        
        output = model(imgs)
        loss = loss_func(output, labels)
        loss.backward()
        # Obtain the actual gradient from the perturbation location of DISAM
        optimizer.second_step(zero_grad=True)


def get_domain_loss(preds, labels, domain_labels, loss_func):
    """
    The function to compute the loss for each domain.

    Args:
        preds (Tensor): The predictions of the training model in one batch.
        labels (Tensor): The labels of batch data.
        domain_labels (Tensor): The domain labels of batch data.
        loss_func: (Function): The loss function.
    """
    # Get a list of all domains
    domain_list = list(set(domain_labels))
    domain_loss_list = []
    total_loss = 0.

    for domain_name in domain_list:
        # Get the mask for the current domain
        domain_mask = domain_labels == domain_name

        labels_per_domain = labels[domain_mask]
        preds_pre_domain = preds[domain_mask]
        
        # Compute the loss for the current domain
        single_domain_loss = loss_func(preds_pre_domain, labels_per_domain)

        domain_loss_list.append(single_domain_loss)
        
        # Add the loss for the current domain to the total loss, taking into account the number of samples in the domain
        total_loss += len(labels_per_domain) * single_domain_loss

    total_loss /= len(labels)

    return domain_loss_list, total_loss


def compute_variance(domain_loss_list):
    """
    The function to compute the variance of the list of domain losses

    Args:
        domain_loss_list (List): the list of each domain's loss.
    """
    loss_variance = 0.
    for domain_i_loss in domain_loss_list:
        for domain_j_loss in domain_loss_list:
            # Compute the square of the difference in loss between each pair of elements and add it to the loss variance
            loss_variance += (domain_i_loss - domain_j_loss)**2
    
    loss_variance /= (2*len(domain_loss_list)**2)
    
    return loss_variance

    
\end{lstlisting}


\begin{thebibliography}{75}
\providecommand{\natexlab}[1]{#1}
\providecommand{\url}[1]{\texttt{#1}}
\expandafter\ifx\csname urlstyle\endcsname\relax
  \providecommand{\doi}[1]{doi: #1}\else
  \providecommand{\doi}{doi: \begingroup \urlstyle{rm}\Url}\fi

\bibitem[Arjovsky et~al.(2019)Arjovsky, Bottou, Gulrajani, and Lopez-Paz]{arjovsky2019IRM}
Martin Arjovsky, L{\'e}on Bottou, Ishaan Gulrajani, and David Lopez-Paz.
\newblock Invariant risk minimization.
\newblock \emph{arXiv preprint arXiv:1907.02893}, 2019.

\bibitem[Balaji et~al.(2018)]{balaji2018metareg_meta}
Yogesh Balaji et~al.
\newblock Metareg: Towards domain generalization using meta-regularization.
\newblock In \emph{NeurIPS}, pp.\  998--1008, 2018.

\bibitem[Beery et~al.(2018)Beery, Van~Horn, and Perona]{beery2018recognition_terrainc}
Sara Beery, Grant Van~Horn, and Pietro Perona.
\newblock Recognition in terra incognita.
\newblock In \emph{Proceedings of the European conference on computer vision (ECCV)}, pp.\  456--473, 2018.

\bibitem[Ben-David et~al.(2010)Ben-David, Blitzer, Crammer, Kulesza, Pereira, and Vaughan]{ben2010_dg_theory}
Shai Ben-David, John Blitzer, Koby Crammer, Alex Kulesza, Fernando Pereira, and Jennifer~Wortman Vaughan.
\newblock A theory of learning from different domains.
\newblock \emph{Machine learning}, 79:\penalty0 151--175, 2010.

\bibitem[Carlucci et~al.(2019)Carlucci, Russo, Tommasi, and Caputo]{carlucci2019hallucinating_advaug}
Fabio~Maria Carlucci, Paolo Russo, Tatiana Tommasi, and Barbara Caputo.
\newblock Hallucinating agnostic images to generalize across domains.
\newblock In \emph{2019 IEEE/CVF International Conference on Computer Vision Workshop (ICCVW)}, pp.\  3227--3234. IEEE, 2019.

\bibitem[Cha et~al.(2021)Cha, Chun, Lee, Cho, Park, Lee, and Park]{cha2021swad}
Junbum Cha, Sanghyuk Chun, Kyungjae Lee, Han-Cheol Cho, Seunghyun Park, Yunsung Lee, and Sungrae Park.
\newblock Swad: Domain generalization by seeking flat minima.
\newblock \emph{Advances in Neural Information Processing Systems}, 34:\penalty0 22405--22418, 2021.

\bibitem[Chang et~al.(2019)Chang, You, Seo, Kwak, and Han]{chang2019domain_disentangle}
Woong-Gi Chang, Tackgeun You, Seonguk Seo, Suha Kwak, and Bohyung Han.
\newblock Domain-specific batch normalization for unsupervised domain adaptation.
\newblock In \emph{Proceedings of the IEEE/CVF conference on Computer Vision and Pattern Recognition}, pp.\  7354--7362, 2019.

\bibitem[Chaudhari et~al.(2019)Chaudhari, Choromanska, Soatto, LeCun, Baldassi, Borgs, Chayes, Sagun, and Zecchina]{chaudhari2019entropy_sharp}
Pratik Chaudhari, Anna Choromanska, Stefano Soatto, Yann LeCun, Carlo Baldassi, Christian Borgs, Jennifer Chayes, Levent Sagun, and Riccardo Zecchina.
\newblock Entropy-sgd: Biasing gradient descent into wide valleys.
\newblock \emph{Journal of Statistical Mechanics: Theory and Experiment}, 2019\penalty0 (12):\penalty0 124018, 2019.

\bibitem[Dinh et~al.(2017{\natexlab{a}})Dinh, Pascanu, Bengio, and Bengio]{dinh2017sharp_sharpness_measure_hessian}
Laurent Dinh, Razvan Pascanu, Samy Bengio, and Yoshua Bengio.
\newblock Sharp minima can generalize for deep nets.
\newblock In \emph{International Conference on Machine Learning}, pp.\  1019--1028. PMLR, 2017{\natexlab{a}}.

\bibitem[Dinh et~al.(2017{\natexlab{b}})Dinh, Pascanu, Bengio, and Bengio]{sharp_minima_icml_2017}
Laurent Dinh, Razvan Pascanu, Samy Bengio, and Yoshua Bengio.
\newblock Sharp minima can generalize for deep nets.
\newblock In \emph{International Conference on Machine Learning}, pp.\  1019--1028. PMLR, 2017{\natexlab{b}}.

\bibitem[Dosovitskiy et~al.(2020)Dosovitskiy, Beyer, Kolesnikov, Weissenborn, Zhai, Unterthiner, Dehghani, Minderer, Heigold, Gelly, et~al.]{dosovitskiy2020image_vit}
Alexey Dosovitskiy, Lucas Beyer, Alexander Kolesnikov, Dirk Weissenborn, Xiaohua Zhai, Thomas Unterthiner, Mostafa Dehghani, Matthias Minderer, Georg Heigold, Sylvain Gelly, et~al.
\newblock An image is worth 16x16 words: Transformers for image recognition at scale.
\newblock \emph{arXiv preprint arXiv:2010.11929}, 2020.

\bibitem[Dou et~al.(2019)Dou, de~Castro, Kamnitsas, and Glocker]{dou2019domain_self}
Qi~Dou, Daniel~Coelho de~Castro, Konstantinos Kamnitsas, and Ben Glocker.
\newblock Domain generalization via model-agnostic learning of semantic features.
\newblock In \emph{NeurIPS}, pp.\  6450--6461, 2019.

\bibitem[Du et~al.(2022{\natexlab{a}})Du, Yan, Feng, Zhou, Zhen, Goh, and Tan]{ESAM}
Jiawei Du, Hanshu Yan, Jiashi Feng, Joey~Tianyi Zhou, Liangli Zhen, Rick Siow~Mong Goh, and Vincent Tan.
\newblock Efficient sharpness-aware minimization for improved training of neural networks.
\newblock In \emph{International Conference on Learning Representations}, 2022{\natexlab{a}}.

\bibitem[Du et~al.(2022{\natexlab{b}})Du, Zhou, Feng, Tan, and Zhou]{SAF}
Jiawei Du, Daquan Zhou, Jiashi Feng, Vincent Tan, and Joey~Tianyi Zhou.
\newblock Sharpness-aware training for free.
\newblock \emph{Advances in Neural Information Processing Systems}, 35:\penalty0 23439--23451, 2022{\natexlab{b}}.

\bibitem[Dziugaite \& Roy(2017)Dziugaite and Roy]{DR17_sharp}
Gintare~Karolina Dziugaite and Daniel~M. Roy.
\newblock Computing nonvacuous generalization bounds for deep (stochastic) neural networks with many more parameters than training data.
\newblock In \emph{Proceedings of the 33rd Annual Conference on Uncertainty in Artificial Intelligence (UAI)}, 2017.

\bibitem[Fang et~al.(2013)Fang, Xu, and Rockmore]{fang2013unbiased_vlcs}
Chen Fang, Ye~Xu, and Daniel~N Rockmore.
\newblock Unbiased metric learning: On the utilization of multiple datasets and web images for softening bias.
\newblock In \emph{Proceedings of the IEEE International Conference on Computer Vision}, pp.\  1657--1664, 2013.

\bibitem[Finn et~al.(2017)Finn, Abbeel, and Levine]{finn2017model_maml}
Chelsea Finn, Pieter Abbeel, and Sergey Levine.
\newblock Model-agnostic meta-learning for fast adaptation of deep networks.
\newblock In \emph{International conference on machine learning}, pp.\  1126--1135. PMLR, 2017.

\bibitem[Foret et~al.(2021)Foret, Kleiner, Mobahi, and Neyshabur]{sam_first_paper}
Pierre Foret, Ariel Kleiner, Hossein Mobahi, and Behnam Neyshabur.
\newblock Sharpness-aware minimization for efficiently improving generalization.
\newblock In \emph{International Conference on Learning Representations}, 2021.

\bibitem[Gulrajani \& Lopez-Paz(2021)Gulrajani and Lopez-Paz]{gulrajani2021in_domainbed}
Ishaan Gulrajani and David Lopez-Paz.
\newblock In search of lost domain generalization.
\newblock In \emph{International Conference on Learning Representations}, 2021.
\newblock URL \url{https://openreview.net/forum?id=lQdXeXDoWtI}.

\bibitem[He et~al.(2016)He, Zhang, Ren, and Sun]{he2016deep_resnet}
Kaiming He, Xiangyu Zhang, Shaoqing Ren, and Jian Sun.
\newblock Deep residual learning for image recognition.
\newblock In \emph{Proceedings of the IEEE conference on computer vision and pattern recognition}, pp.\  770--778, 2016.

\bibitem[Hochreiter \& Schmidhuber(1994{\natexlab{a}})Hochreiter and Schmidhuber]{first_flatness_nips1994}
Sepp Hochreiter and J{\"u}rgen Schmidhuber.
\newblock Simplifying neural nets by discovering flat minima.
\newblock \emph{Advances in neural information processing systems}, 7, 1994{\natexlab{a}}.

\bibitem[Hochreiter \& Schmidhuber(1994{\natexlab{b}})Hochreiter and Schmidhuber]{hochreiter1994simplifying_sharp}
Sepp Hochreiter and J{\"u}rgen Schmidhuber.
\newblock Simplifying neural nets by discovering flat minima.
\newblock \emph{Advances in neural information processing systems}, 7, 1994{\natexlab{b}}.

\bibitem[Izmailov et~al.(2018)Izmailov, Podoprikhin, Garipov, Vetrov, and Wilson]{izmailov2018averaging}
Pavel Izmailov, Dmitrii Podoprikhin, Timur Garipov, Dmitry Vetrov, and Andrew~Gordon Wilson.
\newblock Averaging weights leads to wider optima and better generalization.
\newblock \emph{arXiv preprint arXiv:1803.05407}, 2018.

\bibitem[Jiang et~al.(2023)Jiang, Yang, Zhang, and Kwok]{jiang2023an_sharpness_measure}
Weisen Jiang, Hansi Yang, Yu~Zhang, and James Kwok.
\newblock An adaptive policy to employ sharpness-aware minimization.
\newblock In \emph{The Eleventh International Conference on Learning Representations}, 2023.
\newblock URL \url{https://openreview.net/forum?id=6Wl7-M2BC-}.

\bibitem[Jiang et~al.(2020)Jiang, Neyshabur, Mobahi, Krishnan, and Bengio]{Jiang2020Fantastic_sharpness_measure}
Yiding Jiang, Behnam Neyshabur, Hossein Mobahi, Dilip Krishnan, and Samy Bengio.
\newblock Fantastic generalization measures and where to find them.
\newblock In \emph{International Conference on Learning Representations}, 2020.
\newblock URL \url{https://openreview.net/forum?id=SJgIPJBFvH}.

\bibitem[Jin et~al.(2022)Jin, Lan, Zeng, and Chen]{9513542_TMM_domain_disentangle}
Xin Jin, Cuiling Lan, Wenjun Zeng, and Zhibo Chen.
\newblock Style normalization and restitution for domain generalization and adaptation.
\newblock \emph{IEEE Transactions on Multimedia}, 24:\penalty0 3636--3651, 2022.
\newblock \doi{10.1109/TMM.2021.3104379}.

\bibitem[Keskar et~al.(2017{\natexlab{a}})Keskar, Mudigere, Nocedal, Smelyanskiy, and Tang]{keskar2017on}
Nitish~Shirish Keskar, Dheevatsa Mudigere, Jorge Nocedal, Mikhail Smelyanskiy, and Ping Tak~Peter Tang.
\newblock On large-batch training for deep learning: Generalization gap and sharp minima.
\newblock In \emph{International Conference on Learning Representations}, 2017{\natexlab{a}}.

\bibitem[Keskar et~al.(2017{\natexlab{b}})Keskar, Mudigere, Nocedal, Smelyanskiy, and Tang]{keskar2017on_sharpness_measure_hessian}
Nitish~Shirish Keskar, Dheevatsa Mudigere, Jorge Nocedal, Mikhail Smelyanskiy, and Ping Tak~Peter Tang.
\newblock On large-batch training for deep learning: Generalization gap and sharp minima.
\newblock In \emph{International Conference on Learning Representations}, 2017{\natexlab{b}}.
\newblock URL \url{https://openreview.net/forum?id=H1oyRlYgg}.

\bibitem[Kim et~al.(2021)Kim, Yoo, Park, Kim, and Lee]{kim2021selfreg_contrastive}
Daehee Kim, Youngjun Yoo, Seunghyun Park, Jinkyu Kim, and Jaekoo Lee.
\newblock Selfreg: Self-supervised contrastive regularization for domain generalization.
\newblock In \emph{Proceedings of the IEEE/CVF International Conference on Computer Vision}, pp.\  9619--9628, 2021.

\bibitem[Kim et~al.(2022)Kim, Li, Hu, and Hospedales]{kim2022fisherSAM}
Minyoung Kim, Da~Li, Shell~X Hu, and Timothy Hospedales.
\newblock Fisher sam: Information geometry and sharpness aware minimisation.
\newblock In \emph{International Conference on Machine Learning}, pp.\  11148--11161. PMLR, 2022.

\bibitem[Krueger et~al.(2021)Krueger, Caballero, Jacobsen, Zhang, Binas, Zhang, Le~Priol, and Courville]{krueger2021out_rex}
David Krueger, Ethan Caballero, Joern-Henrik Jacobsen, Amy Zhang, Jonathan Binas, Dinghuai Zhang, Remi Le~Priol, and Aaron Courville.
\newblock Out-of-distribution generalization via risk extrapolation (rex).
\newblock In \emph{International Conference on Machine Learning}, pp.\  5815--5826. PMLR, 2021.

\bibitem[Kwon et~al.(2021)Kwon, Kim, Park, and Choi]{ASAM}
Jungmin Kwon, Jeongseop Kim, Hyunseo Park, and In~Kwon Choi.
\newblock Asam: Adaptive sharpness-aware minimization for scale-invariant learning of deep neural networks.
\newblock In \emph{International Conference on Machine Learning}, pp.\  5905--5914. PMLR, 2021.

\bibitem[Li \& Giannakis(2023)Li and Giannakis]{vasso}
Bingcong Li and Georgios~B Giannakis.
\newblock Enhancing sharpness-aware optimization through variance suppression.
\newblock \emph{arXiv preprint arXiv:2309.15639}, 2023.

\bibitem[Li et~al.(2017)Li, Yang, Song, and Hospedales]{li2017pacs}
Da~Li, Yongxin Yang, Yi-Zhe Song, and Timothy~M Hospedales.
\newblock Deeper, broader and artier domain generalization.
\newblock In \emph{Proceedings of the IEEE international conference on computer vision}, pp.\  5542--5550, 2017.

\bibitem[Li et~al.(2018{\natexlab{a}})Li, Yang, Song, and Hospedales]{Li2018LearningTG_meta}
Da~Li, Yongxin Yang, Yi-Zhe Song, and Timothy~M. Hospedales.
\newblock Learning to generalize: Meta-learning for domain generalization.
\newblock In \emph{AAAI}, 2018{\natexlab{a}}.

\bibitem[Li et~al.(2018{\natexlab{b}})Li, Xu, Taylor, Studer, and Goldstein]{loss_landscape_nips_2018}
Hao Li, Zheng Xu, Gavin Taylor, Christoph Studer, and Tom Goldstein.
\newblock Visualizing the loss landscape of neural nets.
\newblock \emph{Advances in neural information processing systems}, 31, 2018{\natexlab{b}}.

\bibitem[Li et~al.(2018{\natexlab{c}})Li, Jialin~Pan, Wang, and Kot]{li2018domain_mmd}
Haoliang Li, Sinno Jialin~Pan, Shiqi Wang, and Alex~C Kot.
\newblock Domain generalization with adversarial feature learning.
\newblock In \emph{CVPR}, pp.\  5400--5409, 2018{\natexlab{c}}.

\bibitem[Li et~al.(2018{\natexlab{d}})Li, Tian, Gong, Liu, Liu, Zhang, and Tao]{li2018deep_adv}
Ya~Li, Xinmei Tian, Mingming Gong, Yajing Liu, Tongliang Liu, Kun Zhang, and Dacheng Tao.
\newblock Deep domain generalization via conditional invariant adversarial networks.
\newblock In \emph{Proceedings of the European conference on computer vision (ECCV)}, pp.\  624--639, 2018{\natexlab{d}}.

\bibitem[Li et~al.(2019)Li, Yang, Zhou, and Hospedales]{li2019feature_meta}
Yiying Li, Yongxin Yang, Wei Zhou, and Timothy Hospedales.
\newblock Feature-critic networks for heterogeneous domain generalization.
\newblock In \emph{International Conference on Machine Learning}, pp.\  3915--3924. PMLR, 2019.

\bibitem[Liu et~al.(2023)Liu, Xiong, Li, Tian, and Zha]{9582738_TMM_domain_hallu}
Yajing Liu, Zhiwei Xiong, Ya~Li, Xinmei Tian, and Zheng-Jun Zha.
\newblock Domain generalization via encoding and resampling in a unified latent space.
\newblock \emph{IEEE Transactions on Multimedia}, 25:\penalty0 126--139, 2023.
\newblock \doi{10.1109/TMM.2021.3121564}.

\bibitem[Liu et~al.(2022)Liu, Mai, Chen, Hsieh, and You]{looksam}
Yong Liu, Siqi Mai, Xiangning Chen, Cho-Jui Hsieh, and Yang You.
\newblock Towards efficient and scalable sharpness-aware minimization.
\newblock In \emph{Proceedings of the IEEE/CVF Conference on Computer Vision and Pattern Recognition}, pp.\  12360--12370, 2022.

\bibitem[McAllester(1999)]{mcallester1999pac_sharp}
David~A McAllester.
\newblock Pac-bayesian model averaging.
\newblock In \emph{Proceedings of the twelfth annual conference on Computational learning theory}, pp.\  164--170, 1999.

\bibitem[Mi et~al.(2022)Mi, Shen, Ren, Zhou, Sun, Ji, and Tao]{SparseSAM}
Peng Mi, Li~Shen, Tianhe Ren, Yiyi Zhou, Xiaoshuai Sun, Rongrong Ji, and Dacheng Tao.
\newblock Make sharpness-aware minimization stronger: A sparsified perturbation approach.
\newblock In Alice~H. Oh, Alekh Agarwal, Danielle Belgrave, and Kyunghyun Cho (eds.), \emph{Advances in Neural Information Processing Systems}, 2022.

\bibitem[Motiian et~al.(2017)Motiian, Piccirilli, Adjeroh, and Doretto]{motiian2017unified_contrastive}
Saeid Motiian, Marco Piccirilli, Donald~A Adjeroh, and Gianfranco Doretto.
\newblock Unified deep supervised domain adaptation and generalization.
\newblock In \emph{Proceedings of the IEEE international conference on computer vision}, pp.\  5715--5725, 2017.

\bibitem[Niu et~al.(2023)Niu, Yuan, Ma, Xu, Liu, Chen, Tong, and Lin]{10093034_TMM_domain_disentangle}
Ziwei Niu, Junkun Yuan, Xu~Ma, Yingying Xu, Jing Liu, Yen-Wei Chen, Ruofeng Tong, and Lanfen Lin.
\newblock Knowledge distillation-based domain-invariant representation learning for domain generalization.
\newblock \emph{IEEE Transactions on Multimedia}, pp.\  1--11, 2023.
\newblock \doi{10.1109/TMM.2023.3263549}.

\bibitem[Norton \& Royset(2021)Norton and Royset]{norton2021diametrical_DRM}
Matthew~D Norton and Johannes~O Royset.
\newblock Diametrical risk minimization: Theory and computations.
\newblock \emph{Machine Learning}, pp.\  1--19, 2021.

\bibitem[Peng et~al.(2019)Peng, Bai, Xia, Huang, Saenko, and Wang]{peng2019moment_domainnet}
Xingchao Peng, Qinxun Bai, Xide Xia, Zijun Huang, Kate Saenko, and Bo~Wang.
\newblock Moment matching for multi-source domain adaptation.
\newblock In \emph{Proceedings of the IEEE/CVF international conference on computer vision}, pp.\  1406--1415, 2019.

\bibitem[Radford et~al.(2021)Radford, Kim, Hallacy, Ramesh, Goh, Agarwal, Sastry, Askell, Mishkin, Clark, et~al.]{radford2021learning_clip}
Alec Radford, Jong~Wook Kim, Chris Hallacy, Aditya Ramesh, Gabriel Goh, Sandhini Agarwal, Girish Sastry, Amanda Askell, Pamela Mishkin, Jack Clark, et~al.
\newblock Learning transferable visual models from natural language supervision.
\newblock In \emph{International conference on machine learning}, pp.\  8748--8763. PMLR, 2021.

\bibitem[Rame et~al.(2022)Rame, Dancette, and Cord]{Fishr}
Alexandre Rame, Corentin Dancette, and Matthieu Cord.
\newblock Fishr: Invariant gradient variances for out-of-distribution generalization.
\newblock In \emph{International Conference on Machine Learning}, pp.\  18347--18377. PMLR, 2022.

\bibitem[Seo et~al.(2020)Seo, Suh, Kim, Kim, Han, and Han]{seo2020learning_disentangle}
Seonguk Seo, Yumin Suh, Dongwan Kim, Geeho Kim, Jongwoo Han, and Bohyung Han.
\newblock Learning to optimize domain specific normalization for domain generalization.
\newblock In \emph{Computer Vision--ECCV 2020: 16th European Conference, Glasgow, UK, August 23--28, 2020, Proceedings, Part XXII 16}, pp.\  68--83. Springer, 2020.

\bibitem[Shao et~al.(2019)Shao, Lan, Li, and Yuen]{shao2019multi_adv}
Rui Shao, Xiangyuan Lan, Jiawei Li, and Pong~C Yuen.
\newblock Multi-adversarial discriminative deep domain generalization for face presentation attack detection.
\newblock In \emph{CVPR}, pp.\  10023--10031, 2019.

\bibitem[Shi et~al.(2021)Shi, Seely, Torr, Siddharth, Hannun, Usunier, and Synnaeve]{Fish}
Yuge Shi, Jeffrey Seely, Philip~HS Torr, N~Siddharth, Awni Hannun, Nicolas Usunier, and Gabriel Synnaeve.
\newblock Gradient matching for domain generalization.
\newblock \emph{arXiv preprint arXiv:2104.09937}, 2021.

\bibitem[Shu et~al.(2023)Shu, Guo, Wu, Wang, Wang, and Long]{shu2023CLIPood}
Yang Shu, Xingzhuo Guo, Jialong Wu, Ximei Wang, Jianmin Wang, and Mingsheng Long.
\newblock Clipood: Generalizing clip to out-of-distributions.
\newblock In \emph{International Conference on Machine Learning}, 2023.

\bibitem[Sun \& Saenko(2016)Sun and Saenko]{sun2016coral}
Baochen Sun and Kate Saenko.
\newblock Deep coral: Correlation alignment for deep domain adaptation.
\newblock In \emph{ECCV}, pp.\  443--450. Springer, 2016.

\bibitem[Venkateswara et~al.(2017)Venkateswara, Eusebio, Chakraborty, and Panchanathan]{venkateswara2017deep_officehome}
Hemanth Venkateswara, Jose Eusebio, Shayok Chakraborty, and Sethuraman Panchanathan.
\newblock Deep hashing network for unsupervised domain adaptation.
\newblock In \emph{Proceedings of the IEEE conference on computer vision and pattern recognition}, pp.\  5018--5027, 2017.

\bibitem[Wang et~al.(2023{\natexlab{a}})Wang, Lan, Liu, Ouyang, Qin, Lu, Chen, Zeng, and Yu]{9782500_TKDE_DGsurvey}
Jindong Wang, Cuiling Lan, Chang Liu, Yidong Ouyang, Tao Qin, Wang Lu, Yiqiang Chen, Wenjun Zeng, and Philip~S. Yu.
\newblock Generalizing to unseen domains: A survey on domain generalization.
\newblock \emph{IEEE Transactions on Knowledge and Data Engineering}, 35\penalty0 (8):\penalty0 8052--8072, 2023{\natexlab{a}}.
\newblock \doi{10.1109/TKDE.2022.3178128}.

\bibitem[Wang et~al.(2023{\natexlab{b}})Wang, Zhang, Lei, and Zhang]{SAGM}
Pengfei Wang, Zhaoxiang Zhang, Zhen Lei, and Lei Zhang.
\newblock Sharpness-aware gradient matching for domain generalization.
\newblock In \emph{Proceedings of the IEEE/CVF Conference on Computer Vision and Pattern Recognition}, pp.\  3769--3778, 2023{\natexlab{b}}.

\bibitem[Wang et~al.(2020)Wang, Yu, Li, Fu, and Heng]{wang2020learning_self}
Shujun Wang, Lequan Yu, Caizi Li, Chi-Wing Fu, and Pheng-Ann Heng.
\newblock Learning from extrinsic and intrinsic supervisions for domain generalization.
\newblock In \emph{ECCV}, 2020.

\bibitem[Xu et~al.(2021)Xu, Zhang, Zhang, Wang, and Tian]{xu2021fourier_aug}
Qinwei Xu, Ruipeng Zhang, Ya~Zhang, Yanfeng Wang, and Qi~Tian.
\newblock A fourier-based framework for domain generalization.
\newblock In \emph{Proceedings of the IEEE/CVF Conference on Computer Vision and Pattern Recognition}, pp.\  14383--14392, 2021.

\bibitem[Xu et~al.(2023{\natexlab{a}})Xu, Zhang, Fan, Wang, Wu, and Zhang]{xu2023fourier_aug}
Qinwei Xu, Ruipeng Zhang, Ziqing Fan, Yanfeng Wang, Yi-Yan Wu, and Ya~Zhang.
\newblock Fourier-based augmentation with applications to domain generalization.
\newblock \emph{Pattern Recognition}, 139:\penalty0 109474, 2023{\natexlab{a}}.

\bibitem[Xu et~al.(2023{\natexlab{b}})Xu, Zhang, Wu, Zhang, Liu, and Wang]{xu2023simde_advaug}
Qinwei Xu, Ruipeng Zhang, Yi-Yan Wu, Ya~Zhang, Ning Liu, and Yanfeng Wang.
\newblock Simde: A simple domain expansion approach for single-source domain generalization.
\newblock In \emph{Proceedings of the IEEE/CVF Conference on Computer Vision and Pattern Recognition}, pp.\  4797--4807, 2023{\natexlab{b}}.

\bibitem[Xu et~al.(2023{\natexlab{c}})Xu, Zhang, Zhang, Wu, and Wang]{10073578_TMM_domain_hallu}
Qinwei Xu, Ruipeng Zhang, Ya~Zhang, Yi-Yan Wu, and Yanfeng Wang.
\newblock Federated adversarial domain hallucination for privacy-preserving domain generalization.
\newblock \emph{IEEE Transactions on Multimedia}, pp.\  1--13, 2023{\natexlab{c}}.
\newblock \doi{10.1109/TMM.2023.3257566}.

\bibitem[Yan et~al.(2020)Yan, Song, Li, Zou, and Ren]{yan2020improve_aug}
Shen Yan, Huan Song, Nanxiang Li, Lincan Zou, and Liu Ren.
\newblock Improve unsupervised domain adaptation with mixup training.
\newblock \emph{arXiv preprint arXiv:2001.00677}, 2020.

\bibitem[Zhang et~al.(2023{\natexlab{a}})Zhang, Du, Shen, and Zhen]{9737135_TMM_meta}
Lei Zhang, Yingjun Du, Jiayi Shen, and Xiantong Zhen.
\newblock Learning to learn with variational inference for cross-domain image classification.
\newblock \emph{IEEE Transactions on Multimedia}, 25:\penalty0 3319--3328, 2023{\natexlab{a}}.
\newblock \doi{10.1109/TMM.2022.3158072}.

\bibitem[Zhang et~al.(2022)Zhang, Xu, Huang, Zhang, and Wang]{zhang2022semi}
Ruipeng Zhang, Qinwei Xu, Chaoqin Huang, Ya~Zhang, and Yanfeng Wang.
\newblock Semi-supervised domain generalization for medical image analysis.
\newblock In \emph{2022 IEEE 19th International Symposium on Biomedical Imaging (ISBI)}, pp.\  1--5. IEEE, 2022.

\bibitem[Zhang et~al.(2023{\natexlab{b}})Zhang, Xu, Yao, Zhang, Tian, and Wang]{zhang2023federated_GA}
Ruipeng Zhang, Qinwei Xu, Jiangchao Yao, Ya~Zhang, Qi~Tian, and Yanfeng Wang.
\newblock Federated domain generalization with generalization adjustment.
\newblock In \emph{Proceedings of the IEEE/CVF Conference on Computer Vision and Pattern Recognition}, pp.\  3954--3963, 2023{\natexlab{b}}.

\bibitem[Zhang et~al.(2023{\natexlab{c}})Zhang, Xu, Yu, Zou, and Cui]{GAM}
Xingxuan Zhang, Renzhe Xu, Han Yu, Hao Zou, and Peng Cui.
\newblock Gradient norm aware minimization seeks first-order flatness and improves generalization.
\newblock In \emph{Proceedings of the IEEE/CVF Conference on Computer Vision and Pattern Recognition}, pp.\  20247--20257, 2023{\natexlab{c}}.

\bibitem[Zhao et~al.(2022)Zhao, Zhang, and Hu]{PGN}
Yang Zhao, Hao Zhang, and Xiuyuan Hu.
\newblock Penalizing gradient norm for efficiently improving generalization in deep learning.
\newblock In \emph{International Conference on Machine Learning}, pp.\  26982--26992. PMLR, 2022.

\bibitem[Zhou et~al.(2020{\natexlab{a}})Zhou, Yang, Hospedales, and Xiang]{zhou2020deep_advaug}
Kaiyang Zhou, Yongxin Yang, Timothy Hospedales, and Tao Xiang.
\newblock Deep domain-adversarial image generation for domain generalisation.
\newblock In \emph{Proceedings of the AAAI conference on artificial intelligence}, volume~34, pp.\  13025--13032, 2020{\natexlab{a}}.

\bibitem[Zhou et~al.(2020{\natexlab{b}})Zhou, Yang, Hospedales, and Xiang]{zhou2020learning_aug}
Kaiyang Zhou, Yongxin Yang, Timothy Hospedales, and Tao Xiang.
\newblock Learning to generate novel domains for domain generalization.
\newblock In \emph{Computer Vision--ECCV 2020: 16th European Conference, Glasgow, UK, August 23--28, 2020, Proceedings, Part XVI 16}, pp.\  561--578. Springer, 2020{\natexlab{b}}.

\bibitem[Zhou et~al.(2021)Zhou, Yang, Qiao, and Xiang]{zhou2021domain_aug}
Kaiyang Zhou, Yongxin Yang, Yu~Qiao, and Tao Xiang.
\newblock Domain generalization with mixstyle.
\newblock \emph{arXiv preprint arXiv:2104.02008}, 2021.

\bibitem[Zhou et~al.(2022{\natexlab{a}})Zhou, Yang, Loy, and Liu]{zhou2022learning_coop}
Kaiyang Zhou, Jingkang Yang, Chen~Change Loy, and Ziwei Liu.
\newblock Learning to prompt for vision-language models.
\newblock \emph{International Journal of Computer Vision}, 130\penalty0 (9):\penalty0 2337--2348, 2022{\natexlab{a}}.

\bibitem[Zhou et~al.(2022{\natexlab{b}})Zhou, Yao, Wang, Han, and Zhang]{zhou2022contrastive}
Zhihan Zhou, Jiangchao Yao, Yan-Feng Wang, Bo~Han, and Ya~Zhang.
\newblock Contrastive learning with boosted memorization.
\newblock In \emph{International Conference on Machine Learning}, pp.\  27367--27377. PMLR, 2022{\natexlab{b}}.

\bibitem[Zhou et~al.(2023)Zhou, Yao, Hong, Zhang, Han, and Wang]{zhou2023combating}
Zhihan Zhou, Jiangchao Yao, Feng Hong, Ya~Zhang, Bo~Han, and Yanfeng Wang.
\newblock Combating representation learning disparity with geometric harmonization.
\newblock In \emph{Thirty-seventh Conference on Neural Information Processing Systems}, 2023.

\bibitem[Zhuang et~al.(2022)Zhuang, Gong, Yuan, Cui, Adam, Dvornek, sekhar tatikonda, s~Duncan, and Liu]{GSAM}
Juntang Zhuang, Boqing Gong, Liangzhe Yuan, Yin Cui, Hartwig Adam, Nicha~C Dvornek, sekhar tatikonda, James s~Duncan, and Ting Liu.
\newblock Surrogate gap minimization improves sharpness-aware training.
\newblock In \emph{International Conference on Learning Representations}, 2022.

\end{thebibliography}
\end{document}